%%%%%%%% ICML 2020 EXAMPLE LATEX SUBMISSION FILE %%%%%%%%%%%%%%%%%

\documentclass{article}

% Recommended, but optional, packages for figures and better typesetting:
\usepackage{microtype}
\usepackage{graphicx}
\graphicspath{{figuretype3/}}
\usepackage{subfigure}
\usepackage{booktabs} % for professional tables

\usepackage[utf8]{inputenc} % allow utf-8 input
\usepackage[T1]{fontenc}    % use 8-bit T1 fonts
\usepackage{url}            % simple URL typesetting
\usepackage{booktabs}       % professional-quality tables
\usepackage{amsfonts}       % blackboard math symbols
\usepackage{nicefrac}       % compact symbols for 1/2, etc.
\usepackage{microtype}      % microtypography

\usepackage{enumitem} 
\usepackage{amsmath, amssymb, bm} % math enviroment
\usepackage{amsthm} % proof
\usepackage{setspace}

\newtheorem{theorem}{Theorem}
\newtheorem{lemma}{Lemma}

\usepackage{multirow}

\newcommand{\argmin}{\mathop{\arg\min}}
\newcommand{\argmax}{\mathop{\arg\max}}
\def \X {\mathcal{X}}
\def \Y {\mathcal{Y}}
\def \D {\mathcal{D}}
\def \R {\mathcal{R}}
\def \S {\mathcal{S}}
\def \P {\mathcal{P}}
\def \G {\mathcal{G}}
\def \C {\mathcal{C}}
\def \A {\mathcal{A}}
\def \L {\mathcal{L}}
\def \RR {\mathbb{R}}
\def \EE {\mathbb{E}}
\def \II {\mathbb{I}}
\def \Ra {\mathfrak{R}}

% hyperref makes hyperlinks in the resulting PDF.
% If your build breaks (sometimes temporarily if a hyperlink spans a page)
% please comment out the following usepackage line and replace
% \usepackage{icml2020} with \usepackage[nohyperref]{icml2020} above.
\usepackage{hyperref}

% Attempt to make hyperref and algorithmic work together better:

% Use the following line for the initial blind version submitted for review:
%\usepackage{icml2020}

% If accepted, instead use the following line for the camera-ready submission:
\usepackage[accepted]{icml2020}

% The \icmltitle you define below is probably too long as a header.
% Therefore, a short form for the running title is supplied here:
\icmltitlerunning{Progressive Identification of True Labels for Partial-Label Learning}

\begin{document}
	
	\twocolumn[
	\icmltitle{Progressive Identification of True Labels for Partial-Label Learning}

% It is OKAY to include author information, even for blind
% submissions: the style file will automatically remove it for you
% unless you've provided the [accepted] option to the icml2020
% package.

% List of affiliations: The first argument should be a (short)
% identifier you will use later to specify author affiliations
% Academic affiliations should list Department, University, City, Region, Country
% Industry affiliations should list Company, City, Region, Country

% You can specify symbols, otherwise they are numbered in order.
% Ideally, you should not use this facility. Affiliations will be numbered
% in order of appearance and this is the preferred way.
\icmlsetsymbol{equal}{*}
\icmlsetsymbol{intern}{$\dagger$}

\begin{icmlauthorlist}
	\icmlauthor{Jiaqi Lv}{intern,seu}
	\icmlauthor{Miao Xu}{riken,queen}
	\icmlauthor{Lei Feng}{ntu}
	\icmlauthor{Gang Niu}{riken}
	\icmlauthor{Xin Geng}{seu}
	\icmlauthor{Masashi Sugiyama}{riken,ut}
\end{icmlauthorlist}

\icmlaffiliation{seu}{School of Computer Science and Engineering, Southeast University, Nanjing, China}
\icmlaffiliation{riken}{RIKEN Center for Advanced Intelligence Project, Tokyo, Japan}
\icmlaffiliation{queen}{The University of Queensland, Australia}
\icmlaffiliation{ntu}{School of Computer Science and Engineering, Nanyang Technological University, Singapore}
\icmlaffiliation{ut}{University of Tokyo, Tokyo, Japan}

\icmlcorrespondingauthor{Xin Geng}{xgeng@seu.edu.cn}

% You may provide any keywords that you
% find helpful for describing your paper; these are used to populate
% the "keywords" metadata in the PDF but will not be shown in the document
\icmlkeywords{Machine Learning, ICML}

\vskip 0.3in
]

% this must go after the closing bracket ] following \twocolumn[ ...

% This command actually creates the footnote in the first column
% listing the affiliations and the copyright notice.
% The command takes one argument, which is text to display at the start of the footnote.
% The \icmlEqualContribution command is standard text for equal contribution.
% Remove it (just {}) if you do not need this facility.

\newcommand{\icmlIntern}{\textsuperscript{$\dagger$}Preliminary work was done during an internship at RIKEN AIP. }
\printAffiliationsAndNotice{\icmlIntern}  % leave blank if no need to mention equal contribution
%\printAffiliationsAndNotice{\icmlEqualContribution\icmlIntern} % otherwise use the standard text.

\begin{abstract}
	\emph{Partial-label learning} (PLL) is a typical weakly supervised learning problem, where each training instance is equipped with a set of candidate labels among which only one is the true label. Most existing methods elaborately designed learning objectives as constrained optimizations that must be solved in specific manners, making their computational complexity a bottleneck for \emph{scaling up to big data}. The goal of this paper is to propose a novel framework of PLL \emph{with flexibility on the model and optimization algorithm}. 
	More specifically, we propose a novel estimator of the classification risk, theoretically analyze the \emph{classifier-consistency}, and establish an estimation error bound. Then we propose a \emph{progressive identification} algorithm for approximately minimizing the proposed risk estimator, where the update of the model and identification of true labels are conducted in a seamless manner. The resulting algorithm is model-independent and loss-independent, and compatible with stochastic optimization. Thorough experiments demonstrate it sets the new state of the art.
\end{abstract}

\section{Introduction}

The increasing demand for massive data used in training modern machine learning models, such as deep neural networks (DNNs), makes it inevitable that not all examples are equipped with strong supervision information. Weak supervision may result in a detrimental effect on model training and has been widely studied \cite{zhou2017brief,patrini2017making,kamnitsas2018semi,lu2018minimal,gong2019loss}. We focus on an important weakly supervised learning problem called \emph{partial-label learning} (PLL) \cite{cour2011learning, wang2019discriminative, lyu2019gm}, where each training instance is associated with \emph{a set of candidate labels} among which \emph{exactly one} is true, reducing the overhead of finding exact label from ambiguous candidates. This problem arises in many real-world tasks such as automatic image annotation \cite{chen18learning}, web mining \cite{luo2010learning}, etc.

Related research on PLL was pioneered by a \emph{multiple-label learning} (MLL) method \cite{jin2003learning}. Despite the same form of supervision information, i.e., a set of candidate labels, that each training instance is assigned with, a vital difference between MLL and PLL is that the goal of PLL is identifying the \emph{only one} true label among candidate labels whereas for MLL identifying an \emph{arbitrary} label in the candidate label set is acceptable, i.e., treating all candidate labels equally. In practice, \citet{jin2003learning} formulated an \emph{unconstrained} learning objective to minimize the KL divergence between the class prior and the model-based conditional distribution, and solved it by EM algorithm, resulting in a procedure iterating between estimating the prior distribution and training the model.  

\citet{jin2003learning} used the class prior as the fitting target of the model, in which the label with maximum prior probability can be naturally regarded as the ``true label''. Coincidentally, the true label of training instance in PLL is obscured by candidate labels that hinder the learning. The key to success is also identifying the true label.  
Therefore, the foundational work enlightened the successors to identify the true label reasonably. Along this line, great efforts have been made in designing learning objectives with some elaborate \emph{constraints} for making assumptions on the model \cite{chen13dictionary, chen14ambiguously} or better leveraging the prior knowledge \cite{Nguyen2008classification, zhang2015solving, yu2017maximum, gong2017regularization, feng2019partial} so as to recover the true labeling information. 

However, the constrained learning objectives of the existing methods are coupled to some specific optimization algorithms. A non-linear time complexity in the total volume of data becomes a major limiting factor when these methods envision large amounts of training data. Furthermore, \citet{bottou2007the} proved \emph{stochastic optimization}, which few of the existing methods are compatible with, is the best choice for large-scale learning problems considering the estimation-optimization tradeoff. Therefore, the PLL problem is still practically challenging in \emph{scaling up to big data}. In order to magnify PLL's potential on big data, we rethink critically whether the original unconstrained method in \citet{jin2003learning} should be discarded, or is worth further studying in the deep learning era?

Our answer is in two folds. First, the decade-old method restricted to a simple linear model fed by the handcrafted features, which may be incompetent to represent and discriminate. Moreover, few PLL work hitherto can be generalized to deep network architectures. We would like to advance the previous work to enjoy the leading-edge models and optimizers from deep learning communities \cite{Goodfellow2016}. Second, the essence of our investigation is an algorithm agnostic in classification models. Thus we focus on a method that does not benefit purely from the network architecture, but also an innovative algorithm design.

In this paper, we aim at proposing a method of PLL with flexibility on the model and stochastic optimization. Towards this goal, we first derive a \emph{classifier-consistent} risk estimator, and then propose a novel algorithm that is compatible with \emph{arbitrary multi-class classifier} (ranging from linear to deep model) and stochastic optimizers. Experiments verify its superiority on synthetic and real-world partial-label datasets.
Our contributions can be summarized as follows:
\begin{itemize}[topsep=0ex,itemsep=-1ex,leftmargin=*]
	\item Theoretically, we propose a classifier-consistent risk estimator for PLL, i.e., the classifier learned from partially labeled data converges to the optimal one learned from ordinarily labeled data under mild conditions. Then, we establish an estimation error bound for it.
	\item Practically, we propose a \emph{progressive identification} method. The proposed method operates in a mini-batched training manner where the update of the model and the identification of true labels are accomplished seamlessly. This method is model-independent and loss-independent, as well as viable for any stochastic optimization (e.g., \citealp{robbins1951stochastic, duchi2011adaptive}).
\end{itemize}

\section{Background}
In this section, we formalize ordinary multi-class classification, partial-label learning, and complementary-label learning, and briefly review the related work.

\subsection{Ordinary Multi-Class Classification}
In ordinary multi-class classification, let $\X \subseteq \RR^d$ be the instance space and $\Y = [c]$ be the label space, where $d$ is the feature space dimension, $[c] := \{1,2,\ldots,c\}$ and $c > 2$ is the number of classes. Let $p(x,y)$ be the underlying joint density of random variables $(X, Y) \in \X \times \Y$. 
The target is to learn a classifier $\bm g: \X \rightarrow \RR^c$ that minimizes the estimator of the classification risk:
\begin{equation}\label{eq:risk-ordinary}
\R(\bm g) = \EE_{(X,Y) \sim p(x,y)}[\ell(\bm g(X), \bm{e}^Y)],
\end{equation}
in which $\ell: \RR^c \times \bm{e}^{\Y} \rightarrow \RR$ is a proper loss, i.e., it is continuous non-negative and $\ell(\hat\eta,\eta)=0$ only when $\hat\eta=\eta$. $\bm{e}^{\Y} = \{\bm{e}^i:i \in\Y\}$ denotes the standard canonical vector in $\RR^c$, i.e., the $i$-element in $\bm{e}^i$ equals 1 and others equal 0.
Typically, the predicted label $\widehat Y$ is assumed to take the following form:
\begin{equation*}
\widehat Y = \argmax_{i\in\Y} g_i(X),
\end{equation*}
where $g_k(\cdot)$ is the $k$-th element of $\bm g(\cdot)$ that is interpreted as an estimate of the true label probability, i.e., $g_k(X)=p(Y=k|X)$. The hypothesis in $\G$ with minimal error $\bm g^* = \argmin_{\bm g\in\G}\R(\bm g)$ is the ordinary optimal classifier. We say a method is classifier-consistent if the learned classifier by the method converges to $\bm g^*$ by increasing the example size \cite{patrini2017making, xia2019anchor}.

Since $p(x,y)$ is usually unknown, the expectation in Eq.~(\ref{eq:risk-ordinary}) is typically approximated by the average over the training examples $\{(x_i, y_i)\}^n_{i=1} \stackrel{\text{i.i.d.}}{\sim} p(x,y): \widehat{\R}(\bm g) = \frac{1}{n} \sum_{i=1}^n[\ell(\bm g(x_i), \bm{e}^{y_i})]$, and $\widehat{\bm g} = \arg\min_{\bm g\in\G} \widehat{\R}(\bm g)$ is returned by the empirical risk minimization (ERM) principle \cite{vapnik1998statistical}.

\subsection{Partial-Label Learning}
Different from ordinary multi-class classification, in PLL, the labelling information is weakened into candidate label set $S\in\S$, where $\S=\{\P({\Y})/\emptyset/\Y\}$ is the power set of $\Y$ except for the empty set and the whole label set. Hence we need to train a classifier with access only to the partially labeled examples $(X,S)$ drawn from the distribution with probability density $p(x,s)$, which is a marginal density of the complete density $p(x,y,s)$. Note that $p(x,y,s)$ can be decomposed into $p(x,s)$ and the class-probability density $p(y|x,s)$, or equivalently into the marginal density $p(x,y)$ and the class-conditional density $p(s|x,y)$. Then the basic definition for PLL that the true label $Y$ of an instance $X$ must be in its candidate label set $S$ can be formulated as
\begin{equation}\label{eq:keydef}
{\rm Pr}_{(X,Y) \sim p(x,y), S \sim p(s|x,y)}(Y \in S) = 1.
\end{equation}
The PLL risk estimator is defined over $p(x,s)$:
\begin{equation}\label{eq:origin-pll-risk}
\R_{\rm PLL}(\bm g) = \EE_{(X,S) \sim p(x,s)}[\ell_{\rm PLL}(\bm g(X),S)],
\end{equation}
where $\ell_{\rm PLL}: \RR^c \times \P({\Y}) \rightarrow \RR$. The goal of PLL is still inducing a multi-class classifier to assign the true labels for the unseen instances.

To estimate the risk in partially labeled data, defining $\ell_{\rm PLL}$ matters. \citet{jin2003learning} defined $\ell_{\rm PLL}(\bm g(X),S)={\rm KL}\big[ \widehat{\bm p}(y|X) || \bm g(X) \big]$, in which $\widehat{\bm p}(y|X)$ represents the unknown prior distribution, and iteratively solved it. Following \citet{jin2003learning}, many \emph{EM-based} methods have proposed. They added some constraints to the optimization objective for better leveraging implicit information over feature space and label space, and diambiguated candidate labels via iterative refining procedure.
\citet{chen13dictionary, chen14ambiguously} proposed methods based on dictionary learning, which seeked the sparsest dictionary mapping the feature to class prior, and then \citet{shrivastavav15non} extended the linear dictionary by the kernel trick. These methods were optimized by K-SVD algorithm \cite{aharon2011the}. \citet{feng2019partial} formulated a convex-concave problem and proved its optimization is equivalent to a set of QP problems. \citet{feng2019partialb} also formulated a QP problem to enlarge the gap of label priors between two instances with completely different candidate labels.
In addition, some \emph{SVM-based} methods with the maximum margin constraints \cite{Nguyen2008classification, yu2017maximum} that were solved by the off-the-shelf implementation on multi-class SVM \cite{fan2008liblinear}, and a few \emph{non-parametric} methods \cite{Hullermeier2006learning, zhang2015solving, gong2017regularization} were proposed. 

These methods were solved in specific low-efficient manners and incompatible with high-efficient stochastic optimization. Thus they could hardly handle large-scale datasets. To the best of our knowledge, only the latest work \cite{yao2020network, yao2020deep} used DNNs with stochastic optimizers as the backbone of their algorithms, however, they restricted the networks to some specific architectures while our method is flexible on the learning models, and they also lacked a theoretical understanding of their methods.

\subsection{Complementary-Label Learning}
\emph{Complementary-label learning} (CLL) \cite{ishida2017learning,yu2018learning,ishida2019complementary} uses the supervised information specifying a class that an example does \emph{not} belong to, and hence it can be considered as an extreme PLL case with $c-1$ candidate labels. Restricting each instance to associate with a single complementary label limits its potential because the labelers can easily annotate multiple candidate labels or complementary labels. Recently, \citet{feng2020learning} proposed to learn with multiple complementary labels. Nonetheless, they proposed an explicit data generation process, while our work has no special requirement on this aspect. Empirical studies also show that our proposed method can achieve promising performance in different partial types (e.g. binomial or pair in Section~\ref{sec:exp}).

\section{Learning with Partial Labels}

In this section, we define a classifier-consistent risk estimator, and theoretically establish an estimation error bound.

\subsection{Classifier-Consistent Risk Estimator}

To make Eq.~(\ref{eq:origin-pll-risk}) estimable, an intuitive way is through a surrogate loss that treats all the candidate labels equally \cite{jin2003learning}: $\ell_{\rm PLL}(\bm g(X),S)=1/|S|\sum_{i\in S}\ell(\bm g(X),\bm e^i)$. Nonethelss, the true label may be overwhelmed by the distractive outputs of the multiple false positive labels.  
Therefore, we consider that only the true label contributes to retrieving the classifier. Capturing this idea we define the PLL loss as the \emph{minimal loss} over the candidate label set:
\begin{equation}\label{eq:loss-pll}
\ell_{\rm PLL}(\bm g(X),S) = \min_{i\in S}\ell(\bm g(X), \bm{e}^i),
\end{equation}
which immediately leads to a new risk estimator:
\begin{equation}\label{eq:risk-pll}
\R_{\rm PLL}(\bm g) = \EE_{(X,S)\sim p(x,s)} \min_{i\in S}\ell(\bm g(X), \bm{e}^i).
\end{equation}
For a detailed description of Eq.~(\ref{eq:loss-pll}), we can disassemble it by defining a function in terms of $Y$ as a classifier $\bm g$'s best guess at the true label:
\begin{equation}\label{eq:risk-pll-reverse}
Y^{\bm g} = \argmin_{i\in S}\ell(\bm g(X), \bm e^i).
\end{equation} 
Then correspondingly the PLL loss can be rewritten into an equivalent expression, i.e., $\bm g$ only needs to fit the best guess $Y^{\bm g}$ among $S$: 
\begin{equation*}
\ell_{\rm PLL}(\bm g(X),S) = \min_{i\in S}\ell(\bm g(X), \bm e^i) = \ell(\bm g(X), \bm e^{Y^{\bm g}}).
\end{equation*}

From now on, we explain under which conditions Eq.~(\ref{eq:risk-pll}) is classifier-consistent. We start investigating this question from the following two lemmas.

\begin{lemma}(Summarized from Sec.3.1 of \citealp{liu2014learnability})\label{lemma:learnable}
	The ambiguity degree is defined as
	\begin{equation*}
	\gamma = sup_{(X,Y) \sim p(x,y), \bar{Y}\in\Y, S \sim p(s|x,y), \bar{Y}\neq Y}{\rm Pr}(\bar{Y} \in S).
	\end{equation*}
	If $\gamma<1$, i.e. under the small ambiguity degree condition, the PLL problem is ERM learnability.
\end{lemma}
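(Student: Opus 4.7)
The plan is to reduce partial-label ERM learnability to ordinary multi-class learnability by relating the partial-label $0$-$1$ error to the true $0$-$1$ classification error through a multiplicative factor controlled by $\gamma$. Write $L_{\rm PL}(h) = \Pr(h(X) \notin S)$ and $L(h) = \Pr(h(X) \neq Y)$ for a predictor $h: \X \to \Y$ (where $h$ may be taken as $\argmax_{i\in\Y} g_i(\cdot)$ for any $\bm g \in \G$); the natural PLL ERM rule minimizes the empirical surrogate $\widehat L_{\rm PL}(h) = (1/n) \sum_{i=1}^n \II[h(x_i) \notin s_i]$.

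First I would establish the pointwise inequality $L_{\rm PL}(h) \geq (1 - \gamma) L(h)$ for every $h \in \G$. Because the true label $Y$ always lies in $S$ by Eq.~(\ref{eq:keydef}), the event $\{h(X) \notin S\}$ forces $h(X) \neq Y$. On that event, conditional on $(X, Y)$ and on the outcome $\bar Y = h(X) \neq Y$, the definition of the ambiguity degree gives $\Pr(\bar Y \notin S \mid X, Y) \geq 1 - \gamma$ uniformly over $(X,Y)$ and $\bar Y \neq Y$. Marginalizing yields $L_{\rm PL}(h) \geq (1 - \gamma)\,\Pr(h(X) \neq Y) = (1 - \gamma) L(h)$.

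Next, since the loss $\II[h(x) \notin s]$ takes values in $\{0,1\}$ and is indexed by $h \in \G$, standard uniform convergence (via VC or Natarajan dimension, or Rademacher complexity of the induced loss class) gives $\sup_{h \in \G} |L_{\rm PL}(h) - \widehat L_{\rm PL}(h)| = O(\sqrt{\log n / n})$ with high probability, provided $\G$ has finite complexity. Consequently the empirical PLL minimizer $\widehat h$ satisfies $L_{\rm PL}(\widehat h) - \inf_{h \in \G} L_{\rm PL}(h) \to 0$, which by the pointwise inequality amplifies into $L(\widehat h) - \inf_{h \in \G} L(h) = O\!\left((1-\gamma)^{-1}\sqrt{\log n / n}\right) \to 0$, establishing ERM learnability of PLL.

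The main obstacle is the conditioning step inside the key inequality: one must check that the supremum in the definition of $\gamma$ is strong enough to upper bound $\Pr(\bar Y \in S \mid X, Y)$ for the data-dependent choice $\bar Y = h(X)$, irrespective of how $h$ depends on $X$. This is exactly why $\gamma$ is defined as a supremum over all $(X,Y)$ and all $\bar Y \neq Y$, rather than merely in expectation. When $\gamma = 1$ the argument collapses, because there could exist a pair of labels co-occurring in $S$ almost surely — indistinguishable from partial supervision alone — which explains why the strict bound $\gamma < 1$ is indispensable for ERM learnability.
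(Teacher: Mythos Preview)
The paper does not actually prove Lemma~\ref{lemma:learnable}; it is stated as a summary of Sec.~3.1 of \citet{liu2014learnability} and is used as a black-box input to Theorem~\ref{thm:consistent}. So there is no ``paper's own proof'' to compare against. That said, your sketch is essentially the argument in the cited reference: the sandwich $(1-\gamma)L(h)\le L_{\rm PL}(h)\le L(h)$, with the lower bound coming from the uniform control that $\gamma$ provides on $\Pr(\bar Y\in S\mid X,Y)$ for every incorrect $\bar Y$, is exactly the mechanism by which small ambiguity degree makes partial supervision informative.

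There is, however, a genuine gap in your last step. From $L_{\rm PL}(\widehat h)-\inf_h L_{\rm PL}(h)\le\varepsilon_n$ and the one-sided inequality $L_{\rm PL}(h)\ge(1-\gamma)L(h)$ you cannot conclude $L(\widehat h)-\inf_h L(h)=O((1-\gamma)^{-1}\varepsilon_n)$ in general. Combining both directions of the sandwich gives only
\[
(1-\gamma)L(\widehat h)\le L_{\rm PL}(\widehat h)\le \inf_h L_{\rm PL}(h)+\varepsilon_n\le \inf_h L(h)+\varepsilon_n,
\]
hence $L(\widehat h)-\inf_h L(h)\le \frac{\gamma}{1-\gamma}\inf_h L(h)+\frac{\varepsilon_n}{1-\gamma}$, which does not vanish unless $\inf_h L(h)=0$. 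The learnability result in \citet{liu2014learnability} is stated in the realizable (deterministic) setting, which is also the setting the present paper adopts in Theorem~\ref{thm:consistent}; under that assumption your argument goes through, but you should make the realizability hypothesis explicit rather than phrasing the conclusion as an agnostic excess-risk bound.
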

$\gamma$ is the maximum probability of a negative label $\bar{Y}$ co-occurs with the true label $Y$. The small ambiguity degree condition implies that except for the true label, no other labels will be one hundred percent included in the candidate label set, which guarantees a classification error made on any instance will be detected with probability at least $1-\gamma$.

Then we provide a condition of the identifiability of the ordinary optimal classifier.
\begin{lemma}(\citealp{yu2018learning})\label{lemma:identify}
	If $\ell$ is the cross-entropy loss or mean squared error loss, the optimal classifier $\bm g^*$ of Eq.~(\ref{eq:risk-ordinary}) satisfies $g_i^*(X) = p(Y=i|X)$.
\end{lemma}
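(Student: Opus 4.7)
The plan is to reduce the minimization of $\R(\bm g) = \EE_{(X,Y)\sim p(x,y)}[\ell(\bm g(X), \bm e^Y)]$ to a pointwise (in $X$) optimization of a conditional risk, and then solve a finite-dimensional problem at each $X$ separately for each of the two losses. By the tower property,
\begin{equation*}
\R(\bm g) = \EE_X\Big[\sum_{i=1}^c p(Y=i\mid X)\,\ell(\bm g(X), \bm e^i)\Big],
\end{equation*}
so minimizing $\R$ over a sufficiently rich hypothesis class $\G$ reduces, for almost every $X$, to choosing $\bm v^*(X)\in\RR^c$ that minimizes $h(\bm v;X) := \sum_i p(Y=i\mid X)\,\ell(\bm v, \bm e^i)$. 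I would then verify, for each loss separately, that the minimizer is $v_i^*(X) = p(Y=i\mid X)$, yielding $g_i^*(X)=p(Y=i\mid X)$.

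First, for the cross-entropy case, with a softmax output $\bm v$ lying on the probability simplex, $\ell(\bm v, \bm e^i) = -\log v_i$, so $h(\bm v;X) = -\sum_i p(Y=i\mid X)\log v_i$. Writing $h(\bm v;X) = H\bigl(p(\cdot\mid X)\bigr) + \mathrm{KL}\bigl(p(\cdot\mid X)\,\|\,\bm v\bigr)$, Gibbs' inequality shows the unique minimizer over the simplex is $\bm v = p(\cdot\mid X)$. Equivalently, a one-line Lagrangian calculation with the constraint $\sum_i v_i = 1$ gives $v_i^* \propto p(Y=i\mid X)$ and hence equality after normalization.

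For the mean squared error case, $\ell(\bm v, \bm e^i) = \sum_j (v_j - \delta_{ij})^2$. Expanding and collecting terms gives
\begin{equation*}
h(\bm v;X) = \sum_j \bigl(v_j - p(Y=j\mid X)\bigr)^2 + C(X),
\end{equation*}
where $C(X)$ does not depend on $\bm v$. The pointwise minimizer is then $v_j^* = p(Y=j\mid X)$ by direct inspection; no simplex constraint is needed, and $\sum_j v_j^* = 1$ follows automatically from the fact that $p(\cdot\mid X)$ is already a probability distribution. Combining the two cases yields $g_i^*(X)=p(Y=i\mid X)$ almost surely.

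The main subtlety is justifying the reduction to pointwise minimization: it requires $\G$ to be expressive enough that the mapping $X \mapsto p(\cdot\mid X)$ is representable by some $\bm g\in\G$ (the standard realizability assumption implicit in Bayes-consistency statements), which I would state explicitly. A secondary technical point for cross-entropy is handling $p(Y=i\mid X)=0$ so that the logarithms are well-defined; this is dealt with using the usual convention $0\log 0 = 0$ and continuity of $-x\log x$ at $0$. Neither point is a genuine obstacle, so the heart of the argument is the two short pointwise minimizations above.
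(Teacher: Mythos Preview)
Your proposal is correct and takes essentially the same route as the paper: reduce $\R(\bm g)$ to a pointwise conditional-risk minimization via the tower property, then solve the resulting finite-dimensional problem for each loss. The only cosmetic differences are that the paper uses Lagrange multipliers under the constraint $\sum_i g_i(X)=1$ for both cases, whereas you invoke Gibbs' inequality for cross-entropy and handle MSE unconstrained (correctly noting the simplex constraint is automatically satisfied).
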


With the above lemmas, we can prove Eq.~(\ref{eq:risk-pll}) possess classifier-consistency under a reasonable assumption, that is, learning is conducted under the deterministic case.
\begin{theorem}\label{thm:consistent}
	Under the deterministic scenario, if the small ambiguity degree condition is satisfied, and cross-entropy or mean squared error loss is used, then, the PLL optimal classifier $\bm g_{\rm PLL}^*$ of Eq.~(\ref{eq:risk-pll}) is equivalent to the ordinary optimal classifier $\bm g^*$ of Eq.~(\ref{eq:risk-ordinary}), i.e., $\bm g^*_{\rm PLL}=\bm g^*$.
\end{theorem}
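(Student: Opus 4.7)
The plan is to establish $\bm g^*_{\rm PLL}=\bm g^*$ by first identifying the ordinary Bayes classifier in the deterministic setting and then verifying that it is the unique minimizer of $\R_{\rm PLL}$.

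First, I would invoke Lemma~\ref{lemma:identify}: since $\ell$ is cross-entropy or mean squared error, the ordinary optimal classifier satisfies $g^*_i(X)=p(Y=i\mid X)$. Under the deterministic scenario $p(Y\mid X)$ collapses to a point mass on the true label, so $\bm g^*(X)=\bm e^{Y(X)}$ almost surely. This one-hot object is the target $\bm g^*_{\rm PLL}$ must match.

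Next, I would show $\R_{\rm PLL}(\bm g^*)=0$. By the basic PLL definition in Eq.~(\ref{eq:keydef}), $Y\in S$ almost surely, so the minimum inside Eq.~(\ref{eq:risk-pll}) is bounded above by the loss at the index $i=Y$, namely $\ell(\bm e^{Y},\bm e^{Y})=0$. Combined with the non-negativity of $\ell$, this forces $\R_{\rm PLL}(\bm g^*)=0$, the global minimum.

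The heart of the argument is to show that any $\bm g$ differing from $\bm g^*$ on a set of positive $X$-measure must incur strictly positive PLL risk. Fix $X$ with $\bm g(X)\neq \bm e^{Y(X)}$. Since $\ell$ is proper, and since for CE or MSE we have $\ell(\bm g(X),\bm e^i)=0$ only when $\bm g(X)=\bm e^i$, I split into two cases. If $\bm g(X)$ is not any standard basis vector, then $\ell(\bm g(X),\bm e^i)>0$ for every $i\in S$ and every realization of $S$, so the minimum is strictly positive almost surely. Otherwise $\bm g(X)=\bm e^{\bar Y}$ for some $\bar Y\neq Y$, and the minimum in Eq.~(\ref{eq:risk-pll}) vanishes only when $\bar Y\in S$. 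The small ambiguity degree condition gives $\Pr(\bar Y\in S\mid X,Y)\le \gamma<1$, so with conditional probability at least $1-\gamma>0$ the minimum is strictly positive. Integrating over the positive-measure set where $\bm g$ disagrees with $\bm g^*$ yields $\R_{\rm PLL}(\bm g)>0=\R_{\rm PLL}(\bm g^*)$, and hence $\bm g^*_{\rm PLL}=\bm g^*$.

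The obstacle I anticipate is the second subcase above: one must rule out the pathological possibility that $\bm g$ collapses to a wrong one-hot vector $\bm e^{\bar Y}$ while keeping the PLL loss at zero on every draw of $S$, which is precisely where the small ambiguity degree earns its keep. Hypothesis-class restrictions (whether $\bm g^*$ actually lies in a particular $\G$) are side-stepped by working at the level of risks over unrestricted functions, which is standard for consistency statements of this flavor.
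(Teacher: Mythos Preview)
Your proposal is correct and follows essentially the same two-step strategy as the paper: first verify $\R_{\rm PLL}(\bm g^*)=0$ using the deterministic scenario together with Eq.~(\ref{eq:keydef}), then argue uniqueness by contradiction via the small ambiguity degree condition. Your explicit case split (non-one-hot $\bm g(X)$ versus $\bm g(X)=\bm e^{\bar Y}$) is slightly more careful than the paper's write-up, which tacitly reduces to the one-hot case after assuming a competing zero-risk solution, but the underlying logic is the same.
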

It is natural to assume a deterministic learning scenario, i.e., the true label $Y_X$ of an instance $X$ is uniquely determined by the measurable function $\bm g^{**}: p(Y=i|X)=g^{**}_i(X), Y_X=\argmax_{i\in\Y} g^{**}_i(X)$, for ensuring the basic definition for PLL, i.e., Eq.~(\ref{eq:keydef}) always holds. In the light of Lemma~\ref{lemma:identify}, $\bm g^*=\bm g^{**}$. In this case, the optimal classifier $\bm g^*$ can be recovered by minimizing Eq.~(\ref{eq:risk-pll}). The proof is provided in Appendix.

\subsection{Estimation Error Bound}
Let $\widehat{\R}_{\rm PLL}$ be the empirical counterpart of $\R_{\rm PLL}$, and $\widehat{\bm g}_{\rm PLL}=\argmin_{\bm g\in\G}\widehat{\R}_{\rm PLL}(\bm g)$ be the empirical risk classifier. 
Suppose $\G_y$ be a class of real functions, and $\G=\oplus_{y\in[c]}\G_y$ be a $c$-valued function class. Assume there is $C_{\bm g}>0$ such that $\sup_{{\bm g}\in\G}||{\bm g}||_\infty\leq C_{\bm g}$, and the loss function $\ell(\bm g(X),Y)$ is Lipschitz continuous for all $|\bm g|\leq C_{\bm g}$ with a Lipschitz constant $L_\ell$ and upper-bounded by $M$, i.e., $M=\sup_{X\in\X,|\bm g|\leq C_{\bm g},Y\in\Y}\ell(\bm g(X),Y)$. 
The \emph{Rademacher complexity} of $\G$ over $p(x)$ with sample size $n$ is defined as $\Ra_n(\G)$ \cite{bartlett2002rademacher}. Then we have the following estimation error bound.
\begin{theorem}\label{thm:errorbound}
	For any $\delta>0$, we have with probability at least $1-\delta$,
	\begin{align}\label{eq:errorbound}
	\R_{\rm PLL}(\widehat{\bm g}_{\rm PLL}) -&\R_{\rm PLL}(\bm g_{\rm PLL}^*) \leq \\[-1mm]\nonumber
	&4\sqrt{2}cL_{\ell}\sum_{y=1}^c \Ra_n(\G_y) + 2M\sqrt{\frac{\log (2/\delta)}{2n}}.
	\end{align}
\end{theorem}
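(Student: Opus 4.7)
The plan is the standard Rademacher-complexity route for deriving an estimation error bound, adapted to the nonlinear $\min_{i\in S}$ structure of the partial-label loss $\ell_{\rm PLL}$ defined in~\eqref{eq:loss-pll}.

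First, because $\widehat{\bm g}_{\rm PLL}$ is the empirical risk minimizer over $\G$, the textbook symmetric decomposition gives
\begin{equation*}
\R_{\rm PLL}(\widehat{\bm g}_{\rm PLL}) - \R_{\rm PLL}(\bm g^*_{\rm PLL}) \leq 2\sup_{\bm g \in \G}\bigl|\widehat{\R}_{\rm PLL}(\bm g) - \R_{\rm PLL}(\bm g)\bigr|,
\end{equation*}
so it suffices to control the uniform deviation on the right with high probability. Because $\ell_{\rm PLL}$ inherits the pointwise bound $\ell\leq M$ through the minimum, altering any single pair $(x_i, s_i)$ changes the supremum by at most $M/n$, and McDiarmid's inequality then yields, with probability at least $1-\delta$,
\begin{equation*}
\sup_{\bm g \in \G}\bigl|\widehat{\R}_{\rm PLL}(\bm g)-\R_{\rm PLL}(\bm g)\bigr| \leq \EE\sup_{\bm g \in \G}\bigl|\widehat{\R}_{\rm PLL}(\bm g)-\R_{\rm PLL}(\bm g)\bigr| + M\sqrt{\tfrac{\log(2/\delta)}{2n}},
\end{equation*}
which, after multiplication by the $2$ from the first step, already accounts for the $2M\sqrt{\log(2/\delta)/(2n)}$ concentration term in~\eqref{eq:errorbound}.

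Next I would pass from the expected uniform deviation to a Rademacher average via the standard ghost-sample symmetrization, picking up another factor of $2$ (this explains the leading $4$) and reducing the problem to bounding
\begin{equation*}
\EE_{\{(x_i,s_i)\},\sigma}\sup_{\bm g \in \G}\frac{1}{n}\sum_{i=1}^n \sigma_i\,\ell_{\rm PLL}(\bm g(x_i),s_i).
\end{equation*}
The key observation for peeling $\ell_{\rm PLL}$ off $\bm g$ is that, for every candidate set $s$, the map $\bm v\mapsto \min_{k\in s}\ell(\bm v,\bm e^k)$ is a pointwise minimum of $L_\ell$-Lipschitz functions and is therefore itself $L_\ell$-Lipschitz on $\{\|\bm v\|_\infty \leq C_{\bm g}\}$. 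Applying Maurer's vector-valued contraction lemma then gives
\begin{equation*}
\EE_\sigma\sup_{\bm g}\frac{1}{n}\sum_i \sigma_i\,\ell_{\rm PLL}(\bm g(x_i),s_i) \leq \sqrt{2}\,L_\ell\,\EE_{\sigma_{ik}}\sup_{\bm g\in\G}\frac{1}{n}\sum_{i=1}^n\sum_{k=1}^c \sigma_{ik}\,g_k(x_i),
\end{equation*}
and the product structure $\G=\oplus_{y\in[c]}\G_y$ lets the doubly-indexed supremum split coordinate-by-coordinate into $\sum_{y=1}^c \Ra_n(\G_y)$. Reconciling the scalar Lipschitz constant of the loss with the coordinate-wise norms used in the individual $\Ra_n(\G_y)$ (this conversion is what forces the extra factor of $c$), and combining all constants, gives the claimed bound.

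The step I expect to be the main obstacle is the peeling. A direct application of the scalar Ledoux--Talagrand contraction is unavailable because $\min_{k\in s}$ is nonlinear and the index set $s$ varies from example to example; unrolling the minimum class-by-class would introduce an $|s|$- or even $|\S|$-dependent factor that destroys any uniformity over the partial-labeling scheme. The clean workaround is to treat $\ell_{\rm PLL}(\cdot,s)$ as a single $L_\ell$-Lipschitz scalar function of the whole output vector and invoke Maurer's vector contraction, which is precisely what yields a bound depending only on $L_\ell$ and on the per-coordinate Rademacher complexities $\Ra_n(\G_y)$, matching the form in~\eqref{eq:errorbound}.
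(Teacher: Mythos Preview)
Your overall skeleton---the ERM decomposition, McDiarmid for bounded differences, then symmetrization to a Rademacher average---matches the paper's argument exactly, and the constants $2\cdot 2=4$ and $2M\sqrt{\log(2/\delta)/(2n)}$ fall out the same way.

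Where you and the paper diverge is in the peeling step. The paper does \emph{not} apply Maurer's vector contraction directly to $\ell_{\rm PLL}$. Instead it first bounds $\Ra_n(\ell_{\rm PLL}\circ\G)\le c\,\Ra_n(\ell\circ\G)$ by writing $\min\{z_1,z_2\}=\tfrac{1}{2}(z_1+z_2-|z_1-z_2|)$, applying scalar Talagrand contraction to the absolute value, and recursing over the $c$ classes; this is where the factor of $c$ originates. Only afterwards does it apply Maurer's vector contraction to $\ell$ itself to obtain $\Ra_n(\ell\circ\G)\le\sqrt{2}L_\ell\sum_y\Ra_n(\G_y)$. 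Your route---observe that a pointwise minimum of $L_\ell$-Lipschitz functions is $L_\ell$-Lipschitz and invoke Maurer once on $\ell_{\rm PLL}(\cdot,s)$---is more direct and in fact yields the stronger bound $\Ra_n(\ell_{\rm PLL}\circ\G)\le\sqrt{2}L_\ell\sum_y\Ra_n(\G_y)$, with no factor of $c$ at all. Your sentence attributing the $c$ to a ``norm reconciliation'' is therefore not quite right: in your argument no extra $c$ is needed, and the stated theorem follows a fortiori from your tighter inequality. The paper's recursive decomposition is what generates the $c$, and it is the price they pay for treating $\min$ via scalar contraction rather than exploiting the Lipschitz property of the minimum directly.
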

The proof is given in Appendix. As $n\rightarrow\infty$, $\Ra_n(\G_y)\rightarrow 0$ for all parametric models with a bounded norm. Hence, $\R_{\rm PLL}(\widehat{\bm g}_{\rm PLL}) \rightarrow \R_{\rm PLL}(\bm g_{\rm PLL}^*)$ as the number of training data approaches infinity.

\section{Benchmark Solution}\label{sec:method}

\begin{algorithm}[t]
	\caption{PRODEN Algorithm}
	\setstretch{1.1}
	\begin{algorithmic}[1]\label{algorithm}
		\INPUT $\D$: \quad the partial-label training set $\lbrace (x_i, s_i) \rbrace_{i=1}^n$,\\ 
		\quad $T$: \quad number of epochs;\\
		\OUTPUT $\Theta$:\quad model parameter for $\bm g(x;\Theta)$	
		\STATE Let $\A$ be an stochastic optimization algorithm;
		\STATE Initialize uniform weights $\bm{{\rm w}}^0$ according to Eq.~(\ref{eq:init});
		\FOR{$t=1$ to $T$}
		\STATE Shuffle training set $\D$ into $B$ mini-batches;
		\FOR{$k=1$ to $B$}
		\STATE Compute $L$  according to Eq.~(\ref{eq:softmin});
		\STATE Set gradient $-\nabla_{\Theta}L$;
		\STATE Update $\bm{{\rm w}}$ according to Eq.~(\ref{eq:norm});
		\STATE Update $\Theta$ by $\A$;
		\ENDFOR
		\ENDFOR
	\end{algorithmic}
\end{algorithm}

In the previous section, based on a novel PLL loss Eq.~(\ref{eq:loss-pll}) we provided a risk estimator Eq.~(\ref{eq:risk-pll}) with intriguing theoretical properties. Although the estimator is without any assumption on the model used, the $\min$ operator makes optimization difficult, because if a wrong label $i$ in Eq.~(\ref{eq:risk-pll}) is selected in the beginning, the optimization will focus on the wrong label till the end. Thus we design a benchmark solution to approximately solve Eq.~(\ref{eq:risk-pll}) according to the idea behind Eq.~(\ref{eq:loss-pll}) that only one (true) label should be taken into account, and consequently we can easily implement the algorithm over arbitrary multi-class classifier and powerful stochastic optimization.

Specifically, we relax the $\min$ operator in Eq.~(\ref{eq:loss-pll}) by the dynamic weights. In the first place, we require that $\ell$ can be decomposed into each label, i.e.,
\begin{equation*}
\ell(\bm g(X), \bm{e}^Y)=\sum_{i=1}^c\ell(g_i(X), e_i^Y),
\end{equation*}
where $e_i^Y$ is the $i$-th element of $\bm e^Y$. Many commonly used multi-class loss functions are decomposable, for example, $\ell$ can be the cross-entropy loss: $\ell_{\rm CE}(\bm g(X), \bm{e}^Y)=-\sum_{i=1}^c e_i^Y\log(g_i(X))$ or mean squared error loss: $\ell_{\rm MSE}(\bm g(X), \bm{e}^Y)=\sum_{i=1}^c(g_i(X)-e_i^Y)^2$. In this case, the empirical risk estimator is rewritted as follows:
\begin{equation}\label{eq:softmin}
\widehat\R_{\rm PLL} = \frac{1}{n}\sum_{i=1}^n\sum_{j=1}^c {\rm w}_{ij}\ell(g_j(x_i), e_j^{s_i}),
\end{equation} 
where $e^{s_i}_j$ is the $j$-th coordinate of $\bm e^{s_i}$ and $\bm e^{s_i}=\sum_{k\in s_i} \bm e^k$. ${\bf w}_{i}\in\Delta^{c-1}$ is the $c$-dimensional simplex and the $j$-th element of it, i.e., ${\rm w}_{ij}$, is the corresponding label weight, i.e., the confidence of the $j$-th label being the true label of the $i$-th instance.
With appropriate weights, i.e., ${\rm w}_{ij}=1$ if $\argmin_{k\in s_i}\ell(\bm g(x_i),\bm e^k)=j$ and 0 otherwise, which is in consonance with Eq.~(\ref{eq:risk-pll-reverse}), Eq.~(\ref{eq:softmin}) is equivalent to the empirical counterpart of Eq.~(\ref{eq:risk-pll}).
Ideally, the label with weight 1 is exactly the true label, i.e, $j$ is the true label of $x_i$ if ${\rm w}_{ij}=1$, which means we have identified the true label successfully. To eventually achieve such an ideal situation, we propose an effective algorithm named \emph{PRODEN} (\textit{PRO}gressive i\textit{DEN}tification).

Because the weights are latent, the minimizer of Eq.~(\ref{eq:softmin}) cannot be solved directly. Inspiring by the EM algorithm, we start by training an initial model with uniform weights:
\begin{align}\label{eq:init}
{\rm w}_{ij} =\left\{ 
\begin{aligned}
&{1}/{|s_i|}&\quad\quad\quad\text{ if } j\in s_i,\\
&\quad 0 &\quad\quad\ \text{ otherwise}.
\end{aligned}\right.
\end{align}
Thanks to the memorization effects \cite{arpit2017a, han2018co}, the networks will remember ``frequent patterns'' in the first few iterations. If the small ambiguity degree condition is satisfied, they tend to remember the true labels in the initial epochs, which guides us towards a discriminative classifier giving relatively low losses for more possible true labels. Next, the problem becomes how to retain the informative predictions and gradually filter out the false positive labels. Instead of estimating the weights in a separate E-step, we tackle it simply using the current predictions for slightly putting more weights on more possible labels:
\begin{align}\label{eq:norm}
{\rm w}_{ij}=\left\{
\begin{aligned}
&g_j(\bm{x}_i) \Big/ \sum\nolimits_{k\in s_i} g_k(\bm{x}_i) & \text{if }~~j \in s_i,\\
&\quad\quad\quad\quad\quad 0\quad\quad\quad\quad\quad\  & \text{otherwise}.
\end{aligned}
\right.
\end{align} 
In this fashion, the true labels are \emph{progressively} identified, and the refined labels in turn help to improve the classifier. The overall procedure is summarized in Algorithm~\ref{algorithm}, whose identification step (Step 8) and optimization step (Step 9) are conducted in a \emph{seamless} manner. Next we provide more detailed arguments about the superiority and generalization of the proposed method.

Firstly, PRODEN gets rid of the overfitting issues of EM methods. The previous \emph{iterative} EM methods \cite{jin2003learning, tang2017confidence, feng2019partial} trained the model until convergence in the M-step, but overemphasizing the convergence may result in redundant computation and overfitting issues, as the model will eventually fit the initial inexact prior knowledge and make a less informative estimate in the E-step on which the subsequent learning is based. To mitigate these problems, our method advances the procedure by merging the E-step and M-step. Since the learning process has no clear separation of the E-step and M-step, the weights can be updated at any epoch such that the local convergence within each epoch is not necessary in our training process. 

Secondly, PRODEN has great flexibility for models and loss functions. In the deep learning era, loss functions with different theoretical results are one of the key factors that affect the performance of DNNs \cite{ghosh2017robust}. In this way, models are welcomed to be loss-independent that allows flexibility with any loss function \cite{ishida2019complementary}. However, existing PLL methods were restricted to some specific loss functions for optimizable, e.g., \citet{jin2003learning} limited the loss function to the KL divergence; the loss function of \citet{yao2020deep} was a combination of multiple cross-entropy losses. Instead, our proposal is flexible enough to be compatible with a large group of decomposable losses. Moreover, we will show in Appendix that the proposed method is provably a generalization of \citet{jin2003learning}.

\section{Experiments}\label{sec:exp}

\begin{figure*}[!t]
	\vskip 0.2in
	\subfigure{
		\begin{minipage}[b]{0.5\columnwidth}
			\centering
			\includegraphics[width=1.58in]{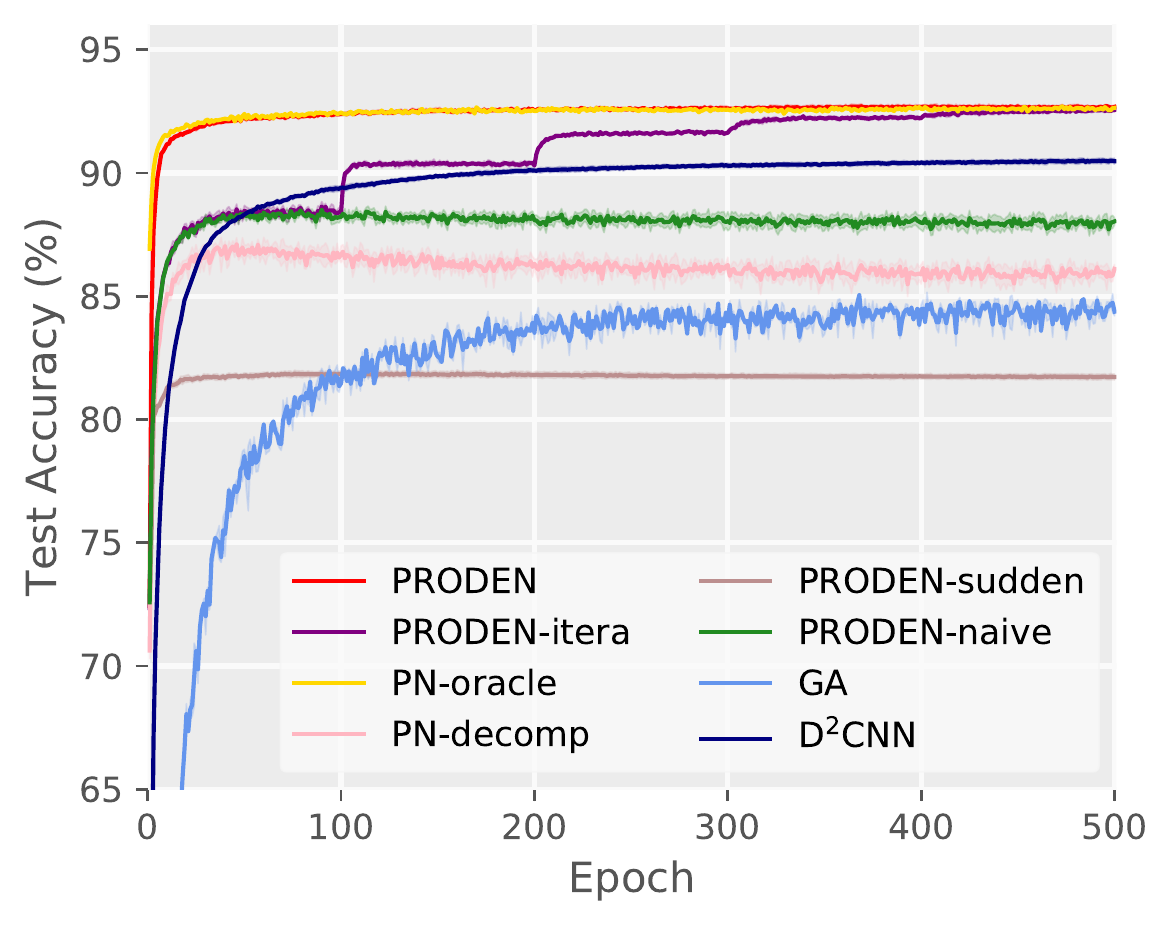}
			\centerline{MNIST, Linear, $q=0.1$}
	\end{minipage}}
	\subfigure{
		\begin{minipage}[b]{0.5\columnwidth}
			\centering
			\includegraphics[width=1.6in]{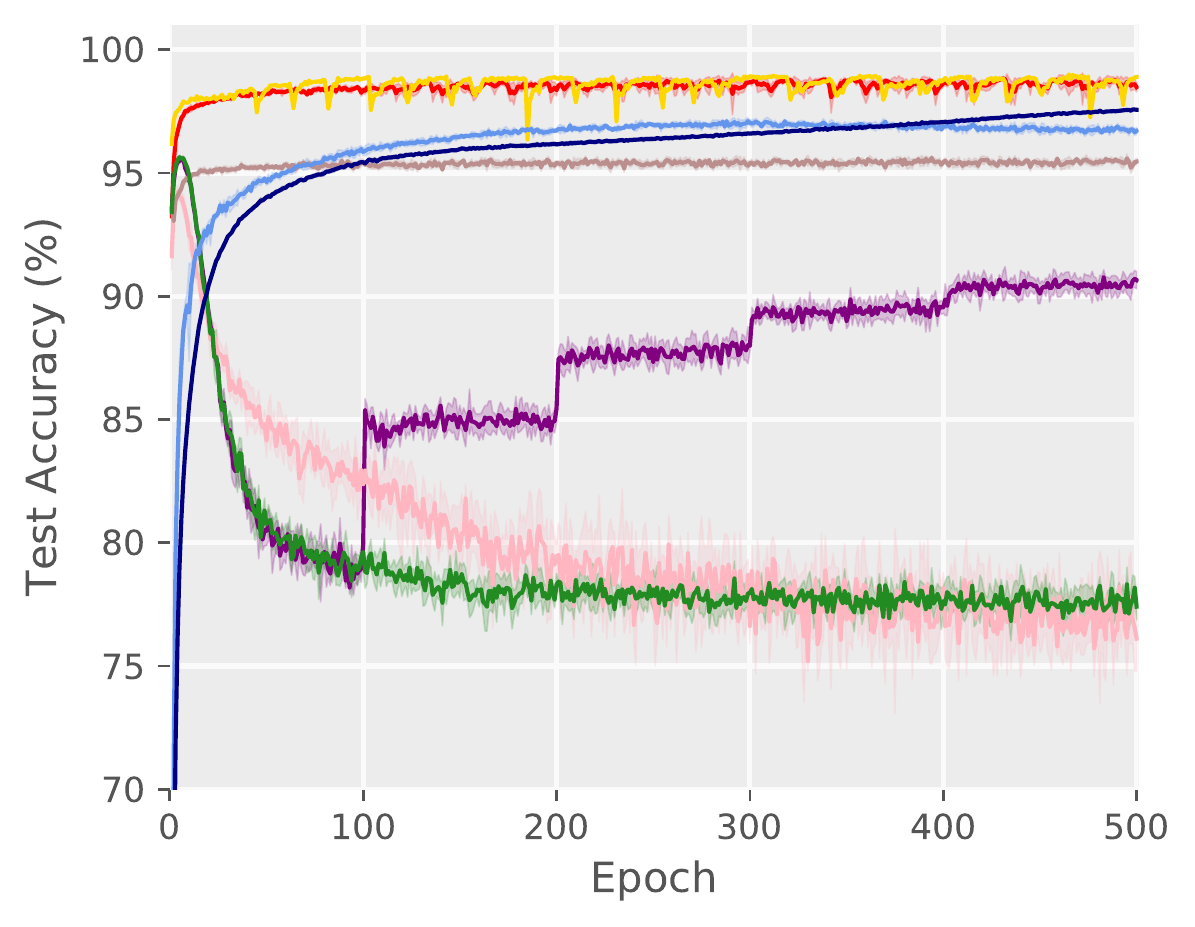}
			\centerline{\quad MNIST, MLP, $q=0.1$}
	\end{minipage}}
	\subfigure{
		\begin{minipage}[b]{0.5\columnwidth}
			\centering
			\includegraphics[width=1.6in]{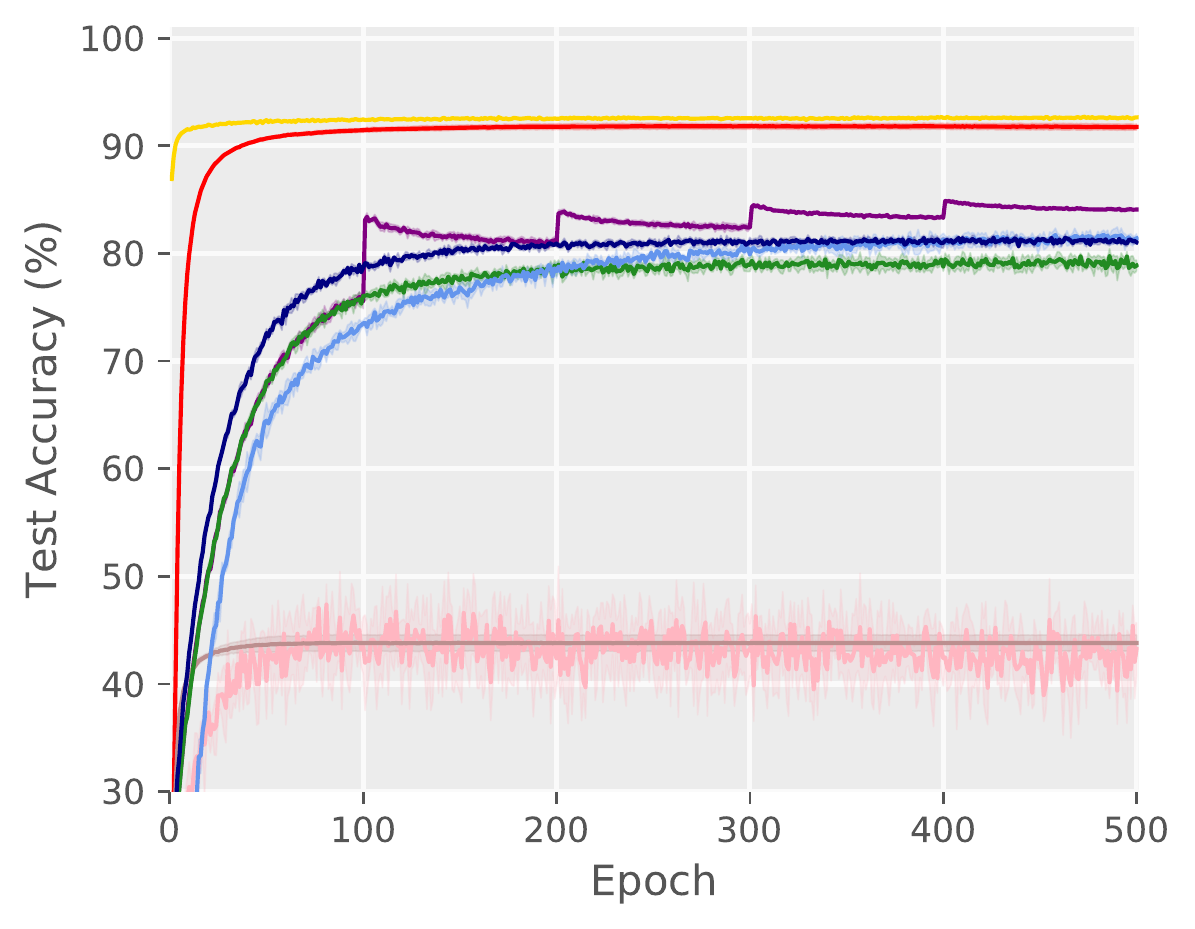}
			\centerline{\quad MNIST, Linear, $q=0.7$}
	\end{minipage}}
	\subfigure{
		\begin{minipage}[b]{0.5\columnwidth}
			\centering
			\includegraphics[width=1.6in]{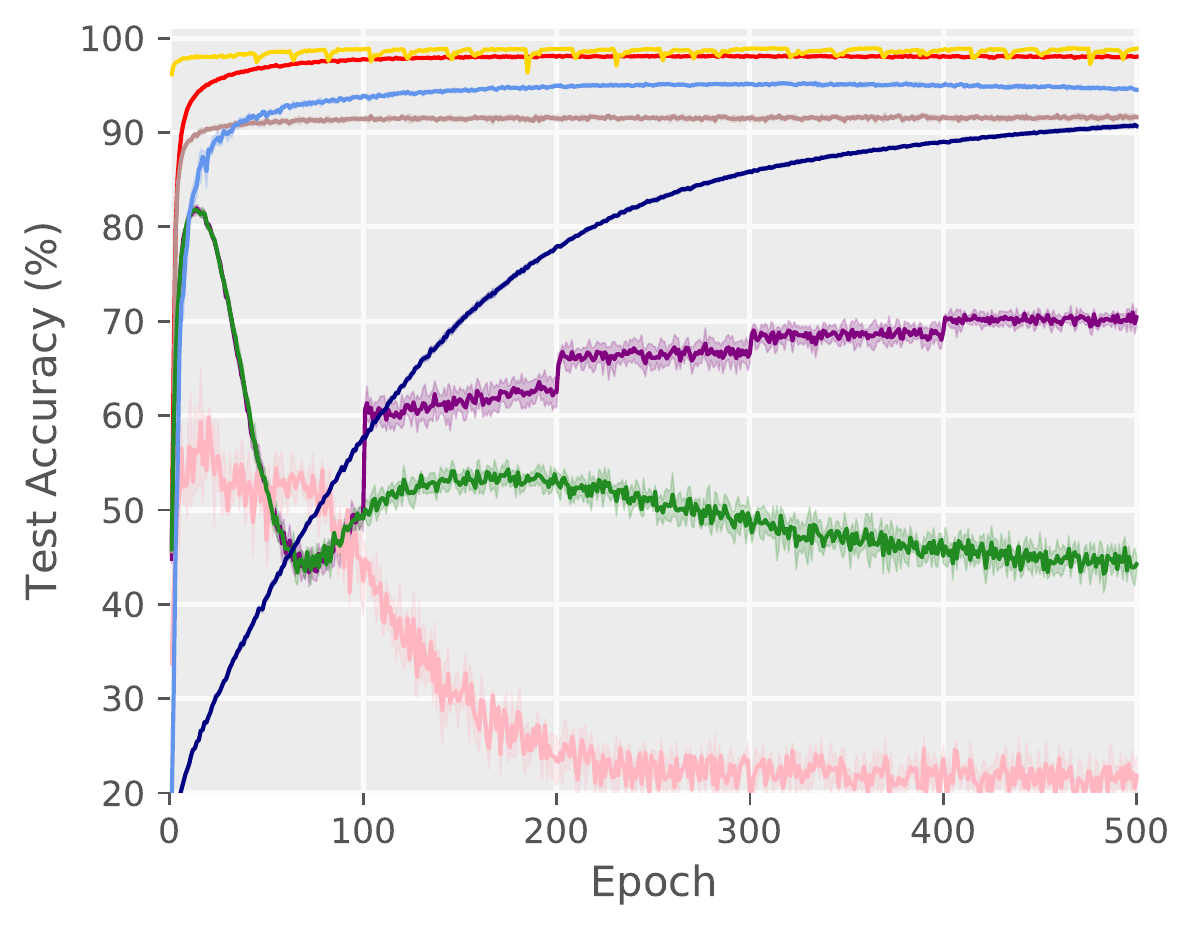}
			\centerline{\quad MNIST, MLP, $q=0.7$}
	\end{minipage}}
	
	\subfigure{
		\begin{minipage}[b]{0.5\columnwidth}
			\centering
			\includegraphics[width=1.6in]{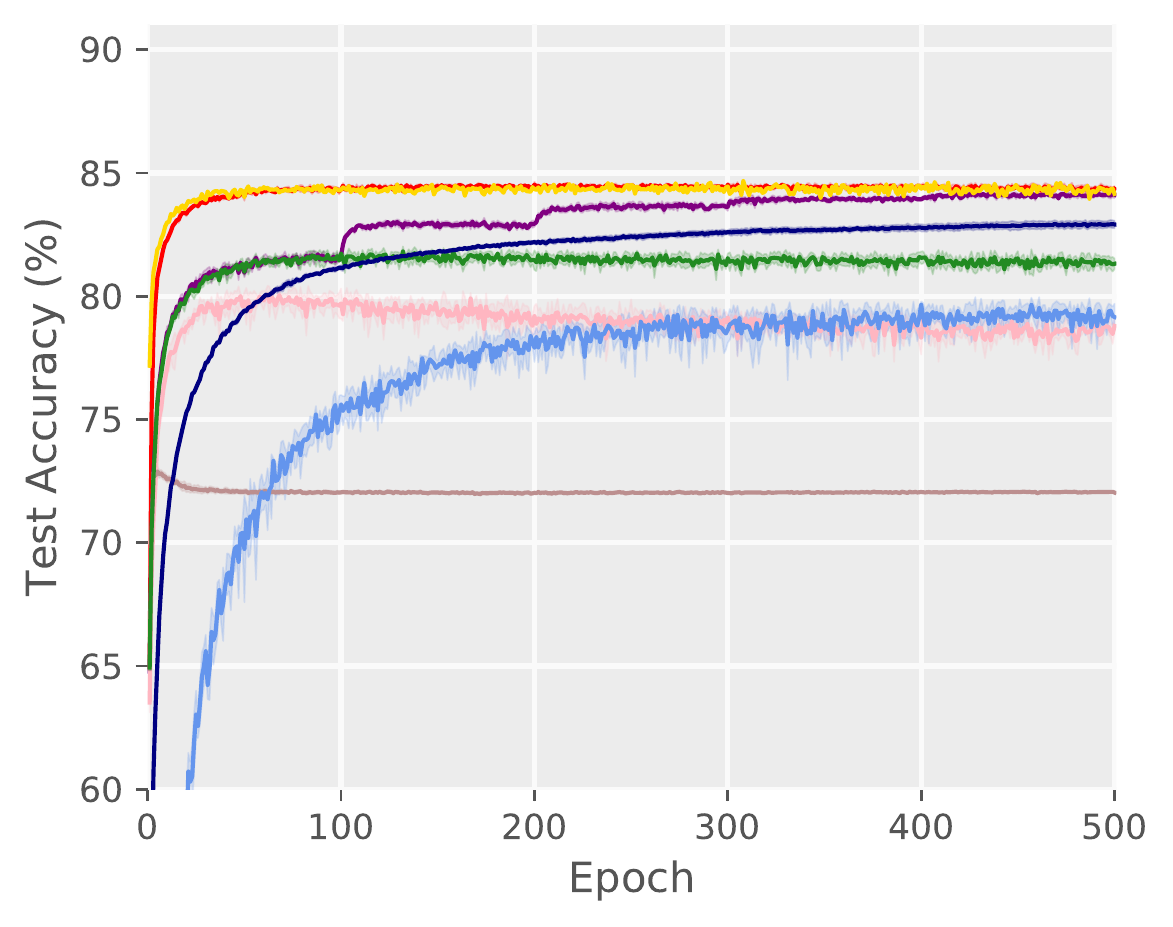}
			\centerline{\quad Fashion, Linear, $q=0.1$}
	\end{minipage}}
	\subfigure{
		\begin{minipage}[b]{0.5\columnwidth}
			\centering
			\includegraphics[width=1.6in]{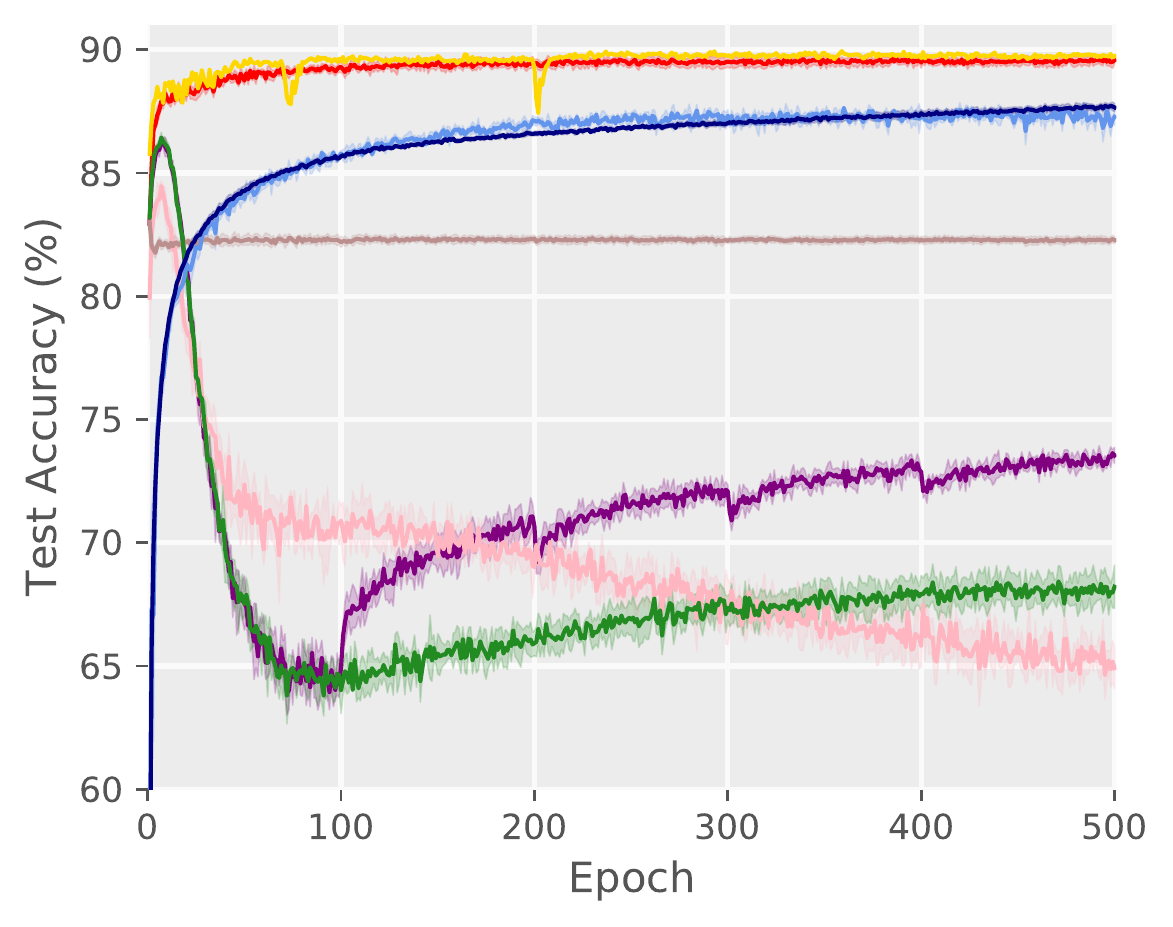}
			\centerline{\quad Fashion, MLP, $q=0.1$}
	\end{minipage}}
	\subfigure{
		\begin{minipage}[b]{0.5\columnwidth}
			\centering
			\includegraphics[width=1.6in]{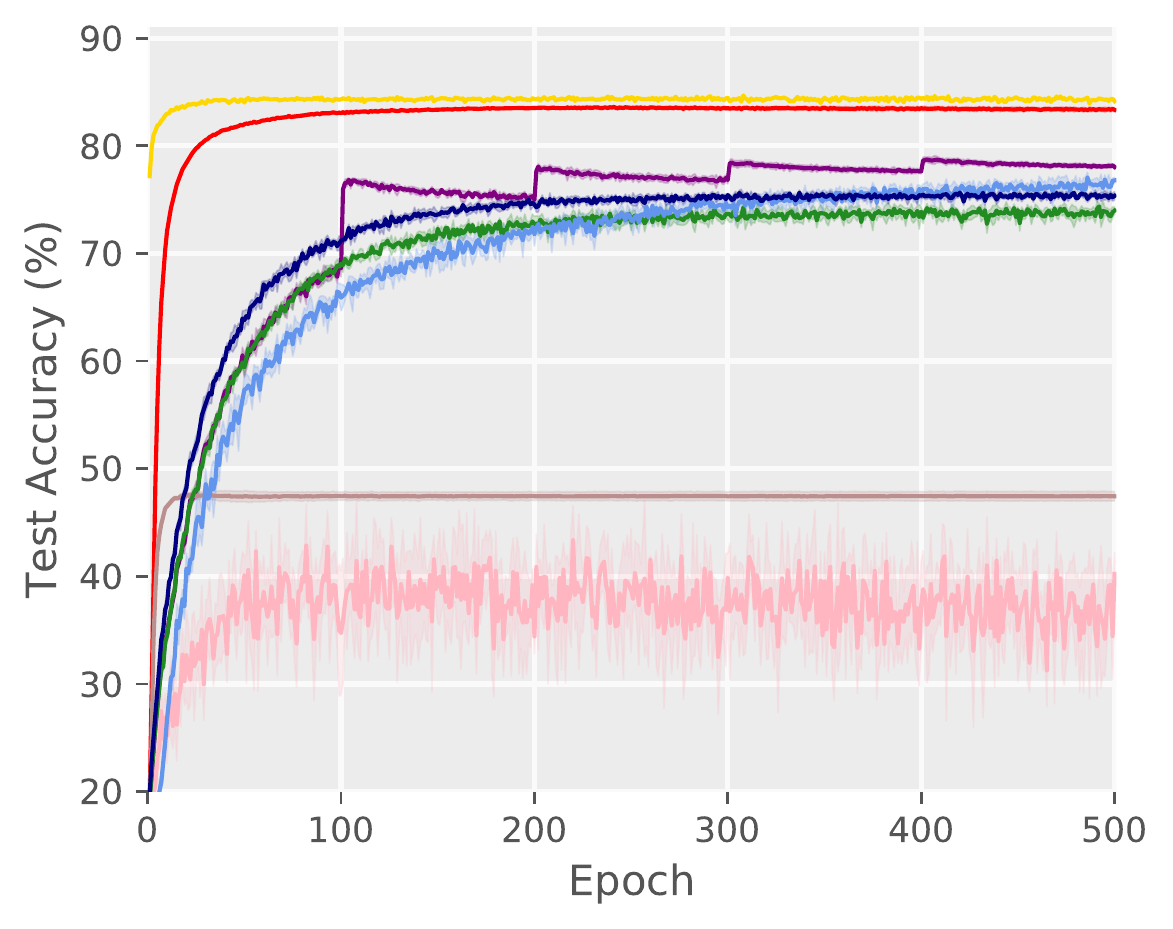}
			\centerline{\quad Fashion, Linear, $q=0.7$}
	\end{minipage}}
	\subfigure{
		\begin{minipage}[b]{0.5\columnwidth}
			\centering
			\includegraphics[width=1.6in]{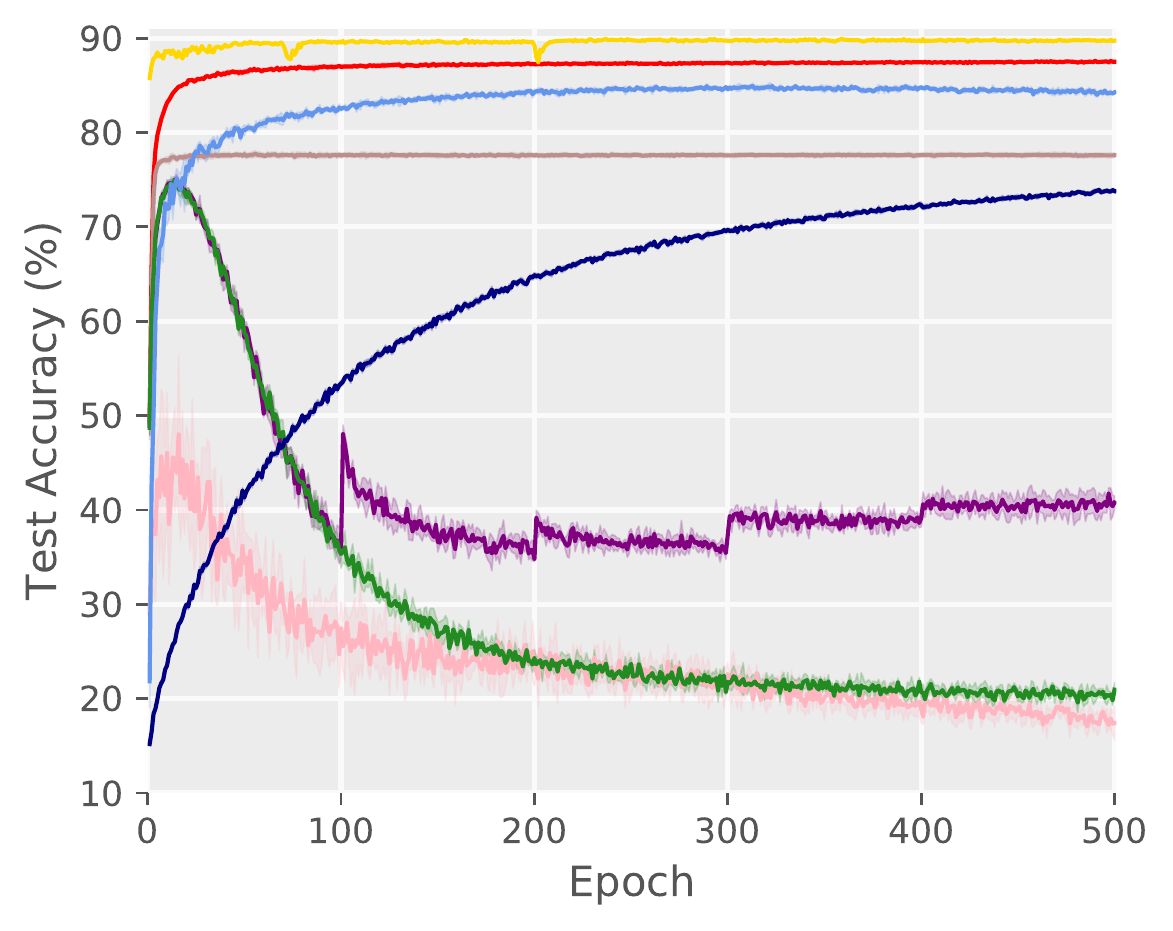}
			\centerline{\quad Fashion, MLP, $q=0.7$}
	\end{minipage}}
	
	\subfigure{
		\begin{minipage}[b]{0.5\columnwidth}
			\centering
			\includegraphics[width=1.65in]{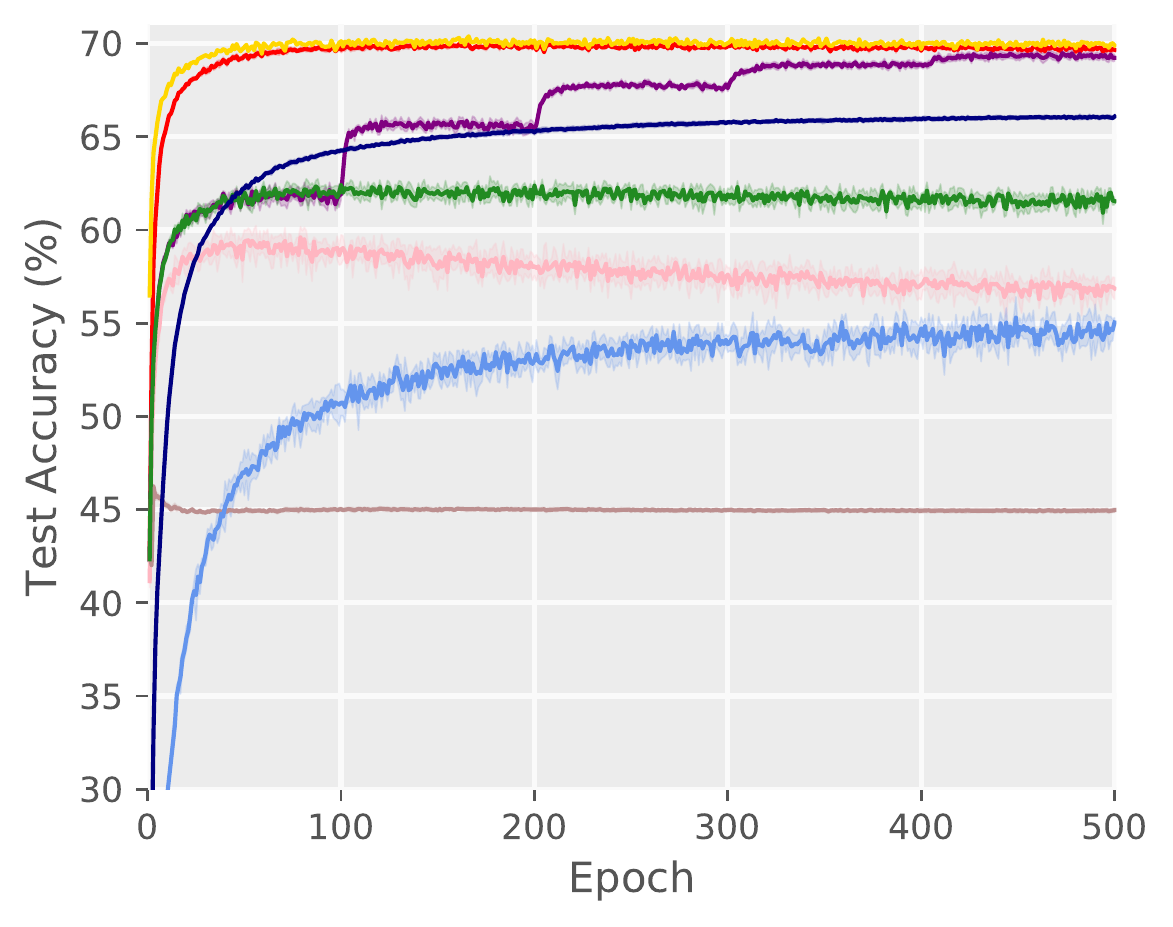}
			\centerline{Kuzushiji, Linear, $q=0.1$}
	\end{minipage}}%
	\subfigure{
		\begin{minipage}[b]{0.5\columnwidth}
			\centering
			\includegraphics[width=1.68in]{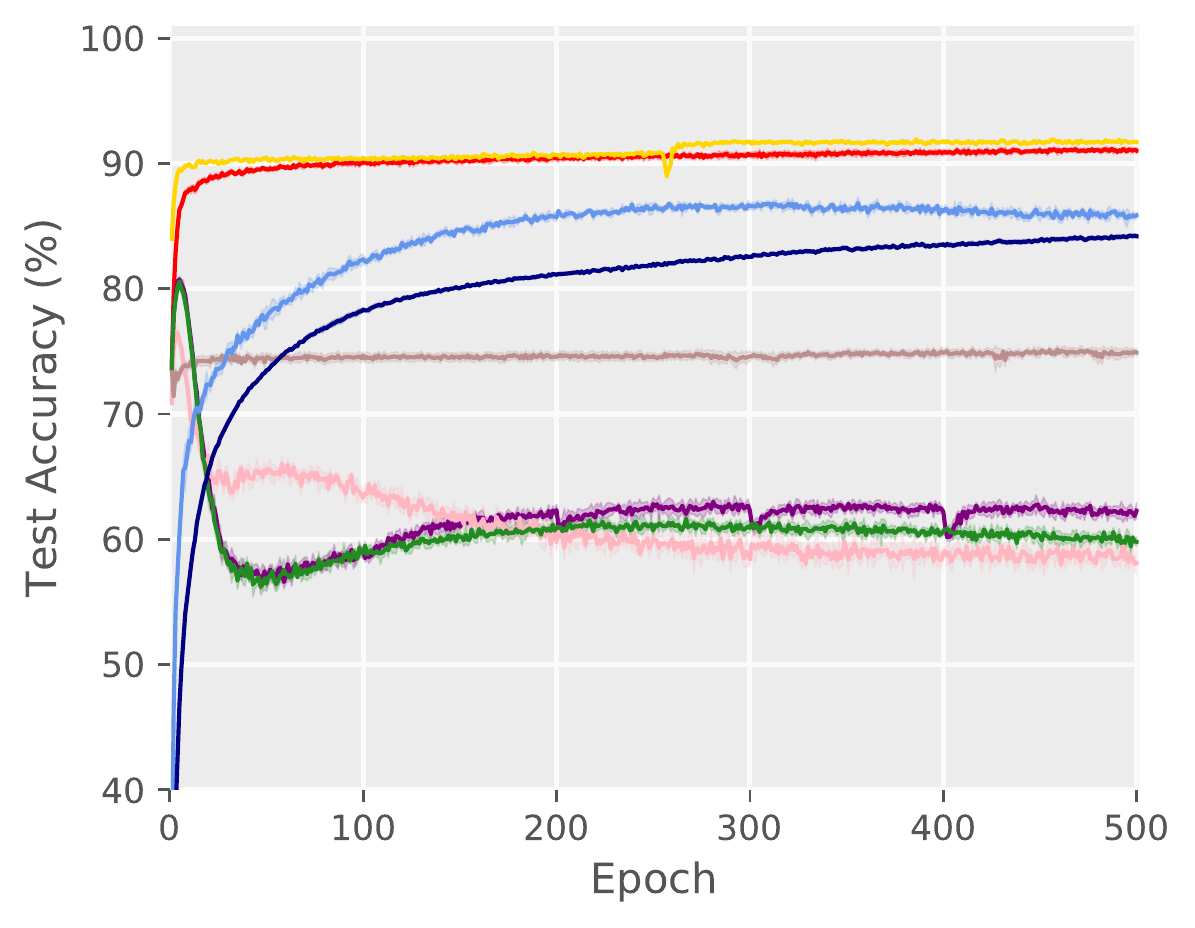}
			\centerline{\quad Kuzushiji, MLP, $q=0.1$}
	\end{minipage}}
	\subfigure{
		\begin{minipage}[b]{0.5\columnwidth}
			\centering
			\includegraphics[width=1.65in]{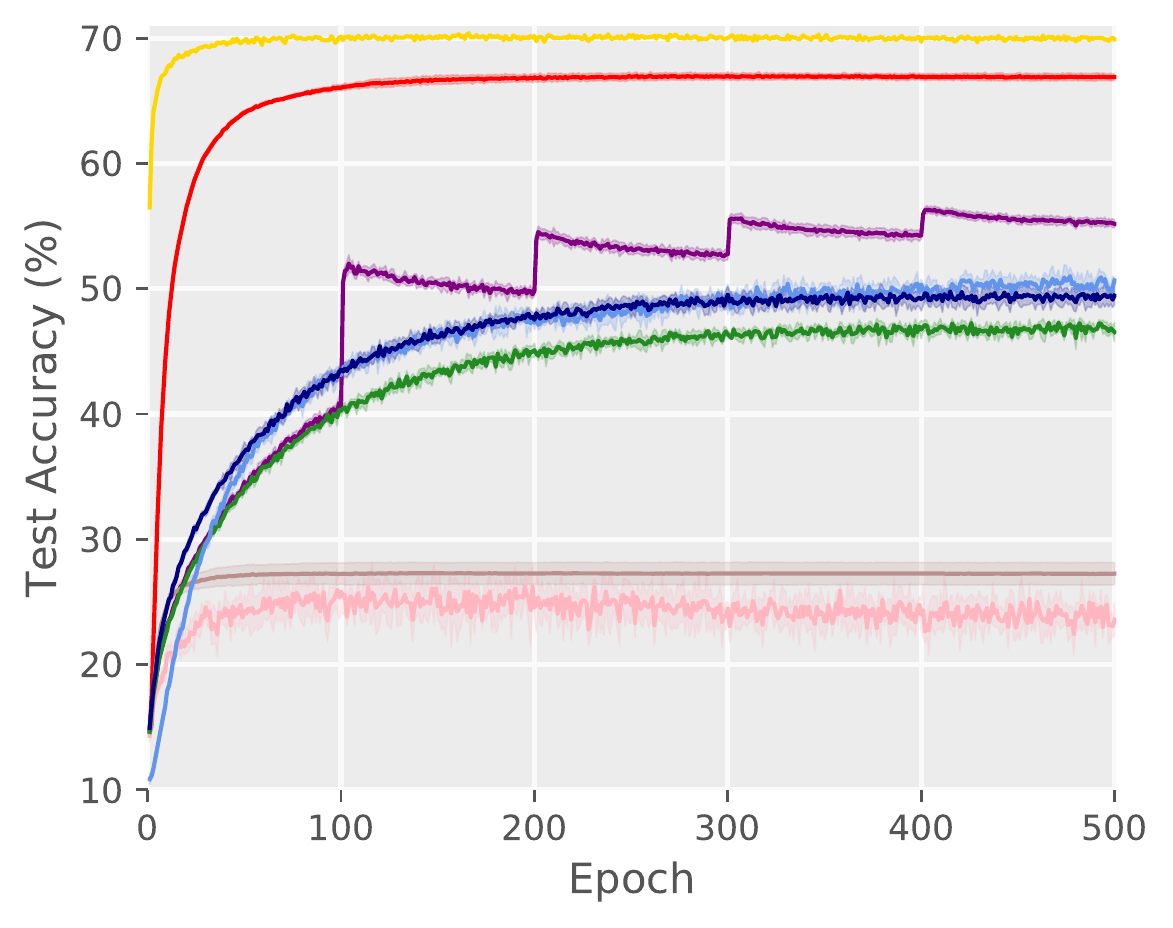}
			\centerline{\quad Kuzushiji, Linear, $q=0.7$}
	\end{minipage}}%
	\subfigure{
		\begin{minipage}[b]{0.5\columnwidth}
			\centering
			\includegraphics[width=1.68in]{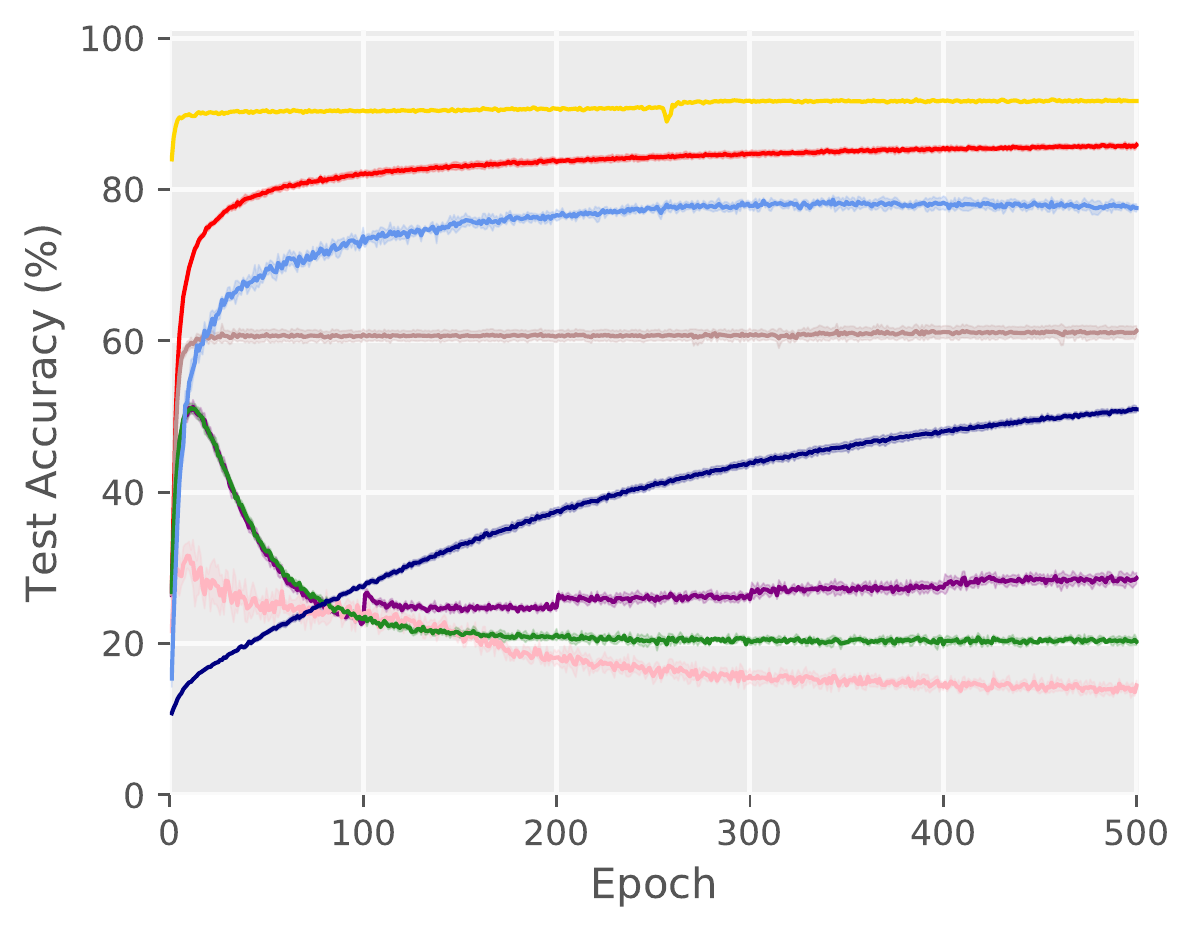}
			\centerline{\quad Kuzushiji, MLP, $q=0.7$}
	\end{minipage}}
	
	\subfigure{
		\begin{minipage}[b]{0.5\columnwidth}
			\centering
			\includegraphics[width=1.65in]{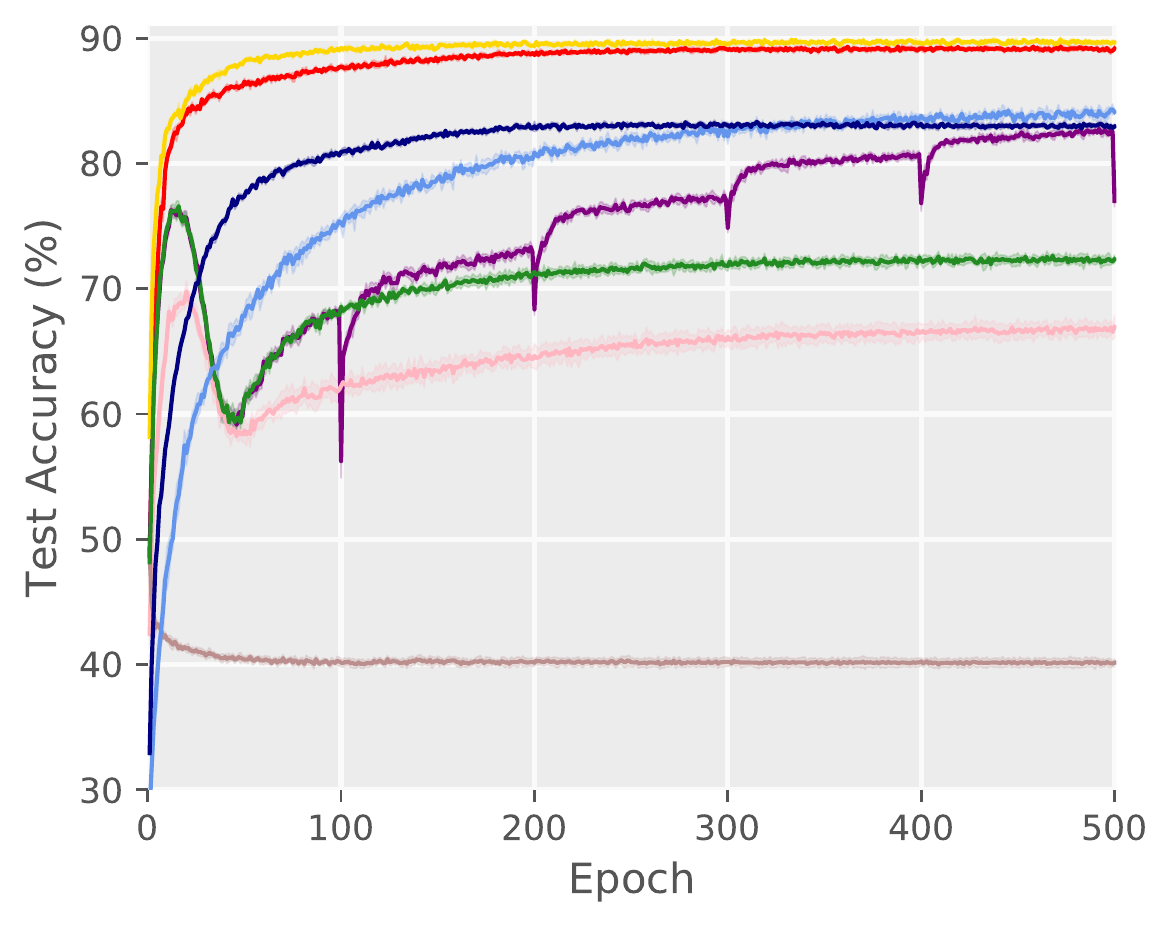}
			\centerline{CIFAR, ConvNet, $q=0.1$}
	\end{minipage}}%
	\subfigure{
		\begin{minipage}[b]{0.5\columnwidth}
			\centering
			\includegraphics[width=1.65in]{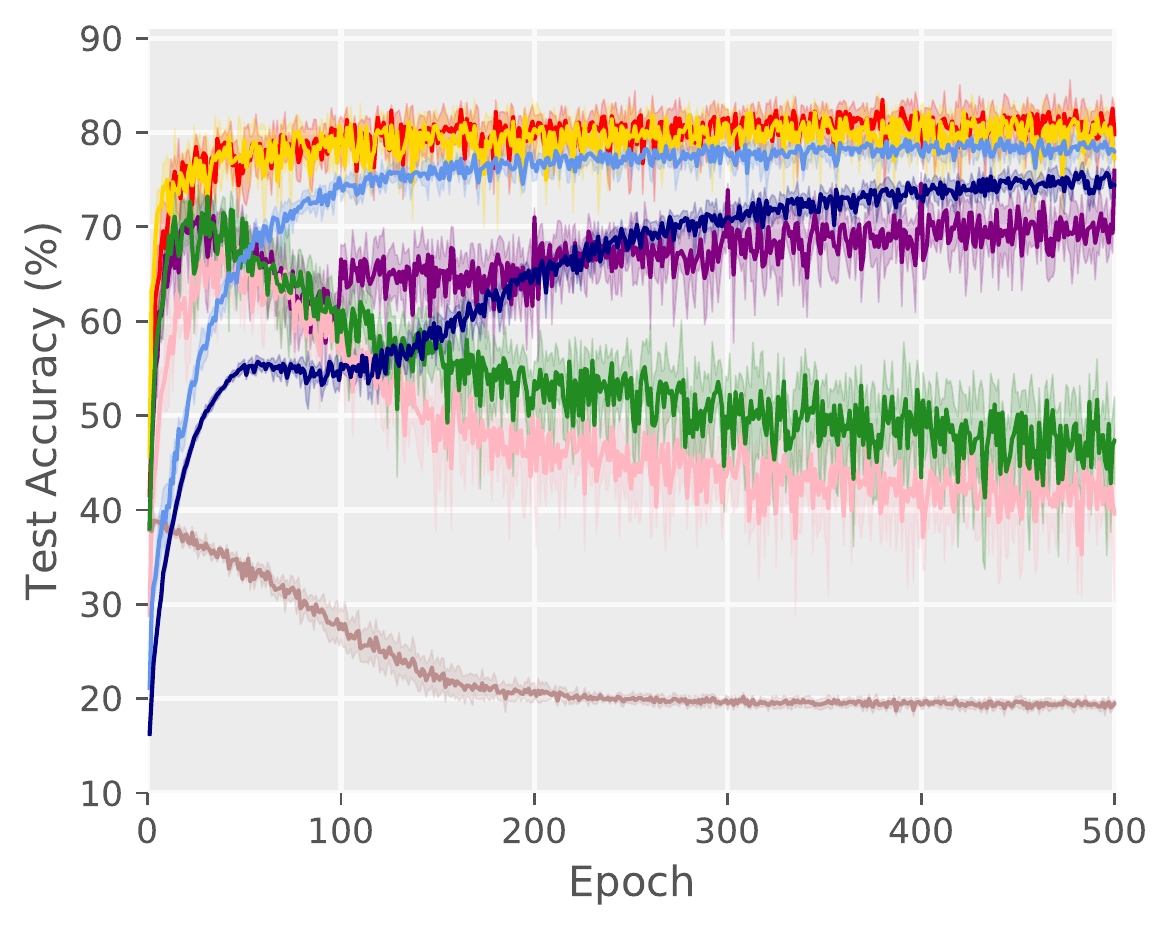}
			\centerline{\quad CIFAR, ResNet, $q=0.1$}
	\end{minipage}}
	\subfigure{
		\begin{minipage}[b]{0.5\columnwidth}
			\centering
			\includegraphics[width=1.65in]{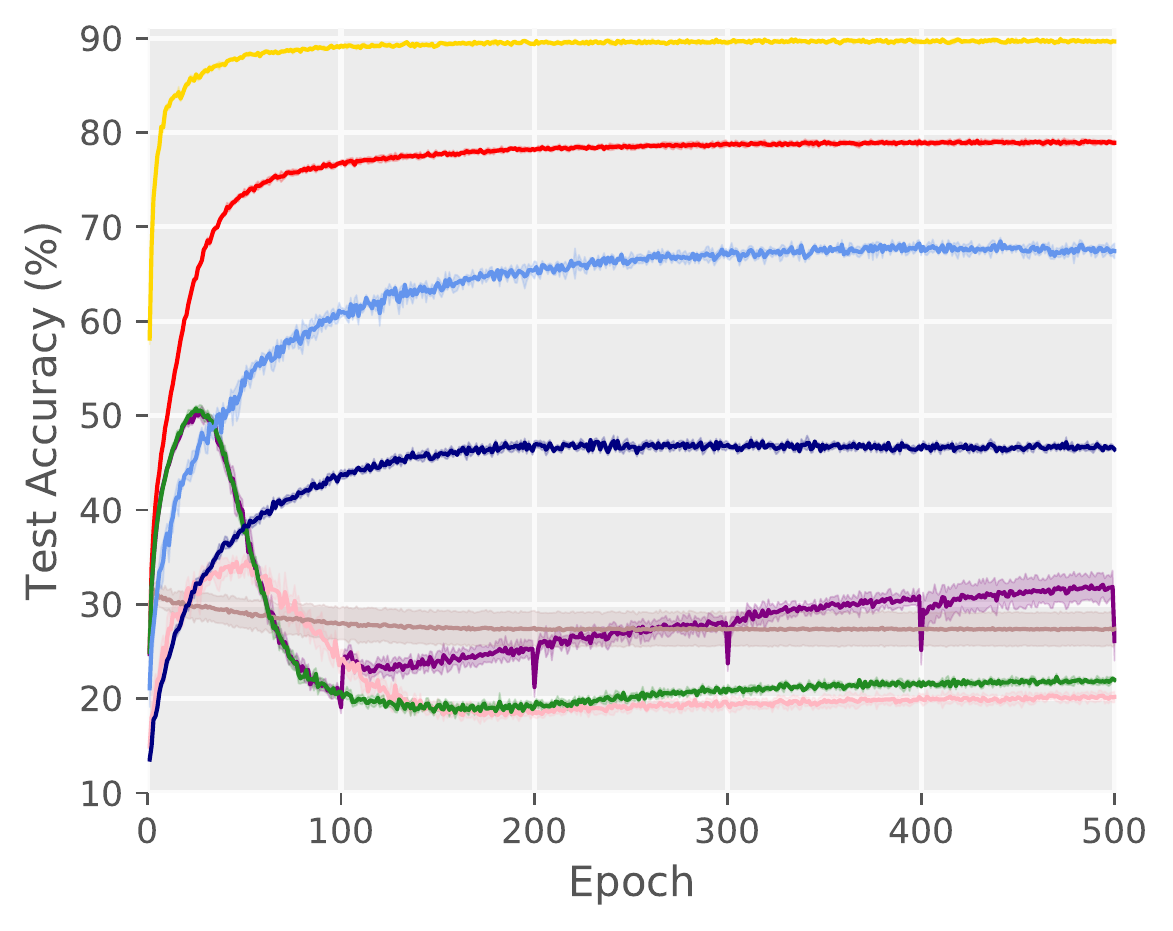}
			\centerline{\quad CIFAR, ConvNet, $q=0.7$}
	\end{minipage}}%
	\subfigure{
		\begin{minipage}[b]{0.5\columnwidth}
			\centering
			\includegraphics[width=1.68in]{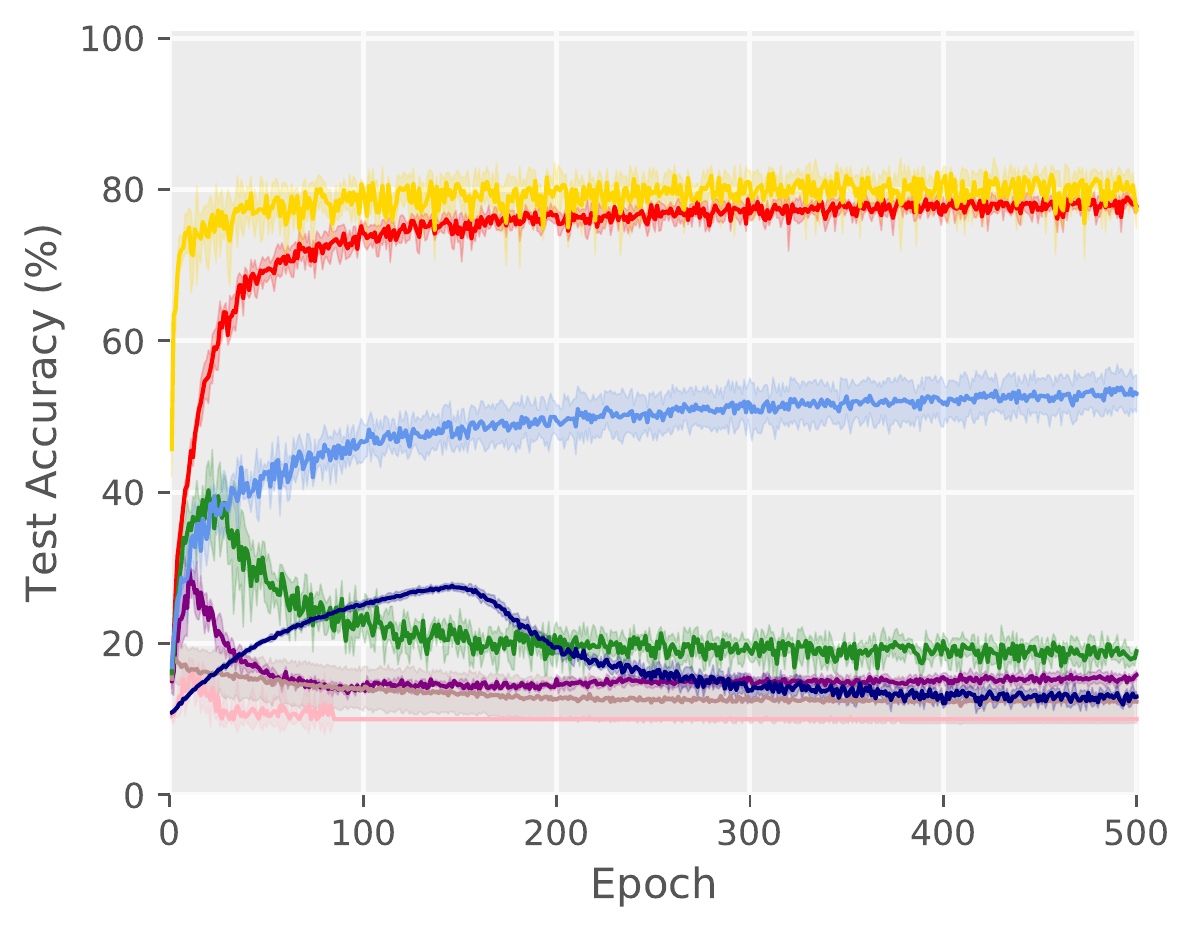}
			\centerline{\quad CIFAR, ResNet, $q=0.7$}
	\end{minipage}}
	\caption{Test accuracy for various models and datasets. Dark colors show the mean accuracy of 5 trials and light colors show standard deviation. Fashion is short for Fashion-MNIST, Kuzushiji is short of Kuzushiji-MNIST, CIFAR is short of CIFAR-10.}
	\label{fig:benchmark_binomial_test}
	\vskip -0.2in
\end{figure*}

\begin{figure*}[!t]
	\vskip 0.2in
	\subfigure{
		\begin{minipage}[b]{0.5\columnwidth}
			\includegraphics[width=1.6in]{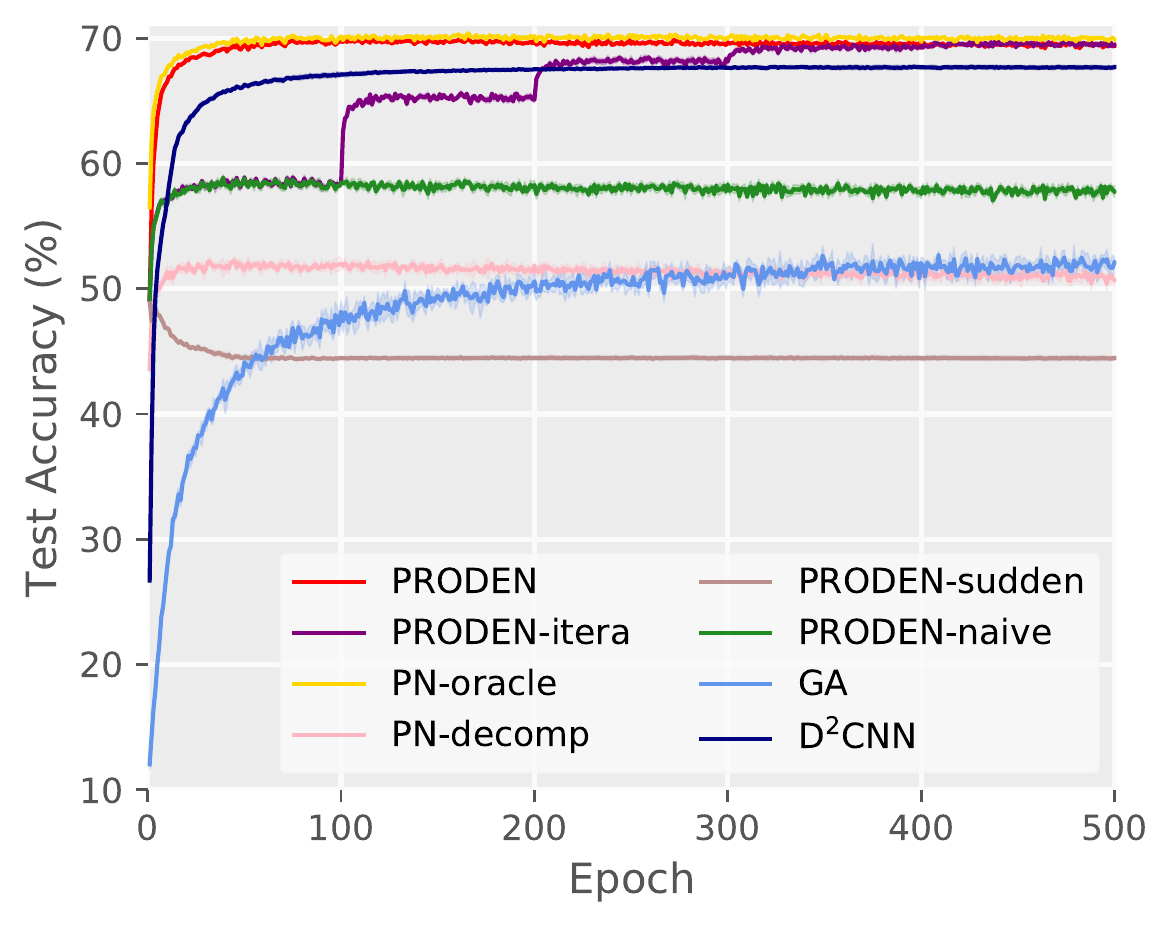}
			\centerline{\quad Linear, $q=0.5$}
	\end{minipage}}
	\subfigure{
		\begin{minipage}[b]{0.5\columnwidth}
			\includegraphics[width=1.6in]{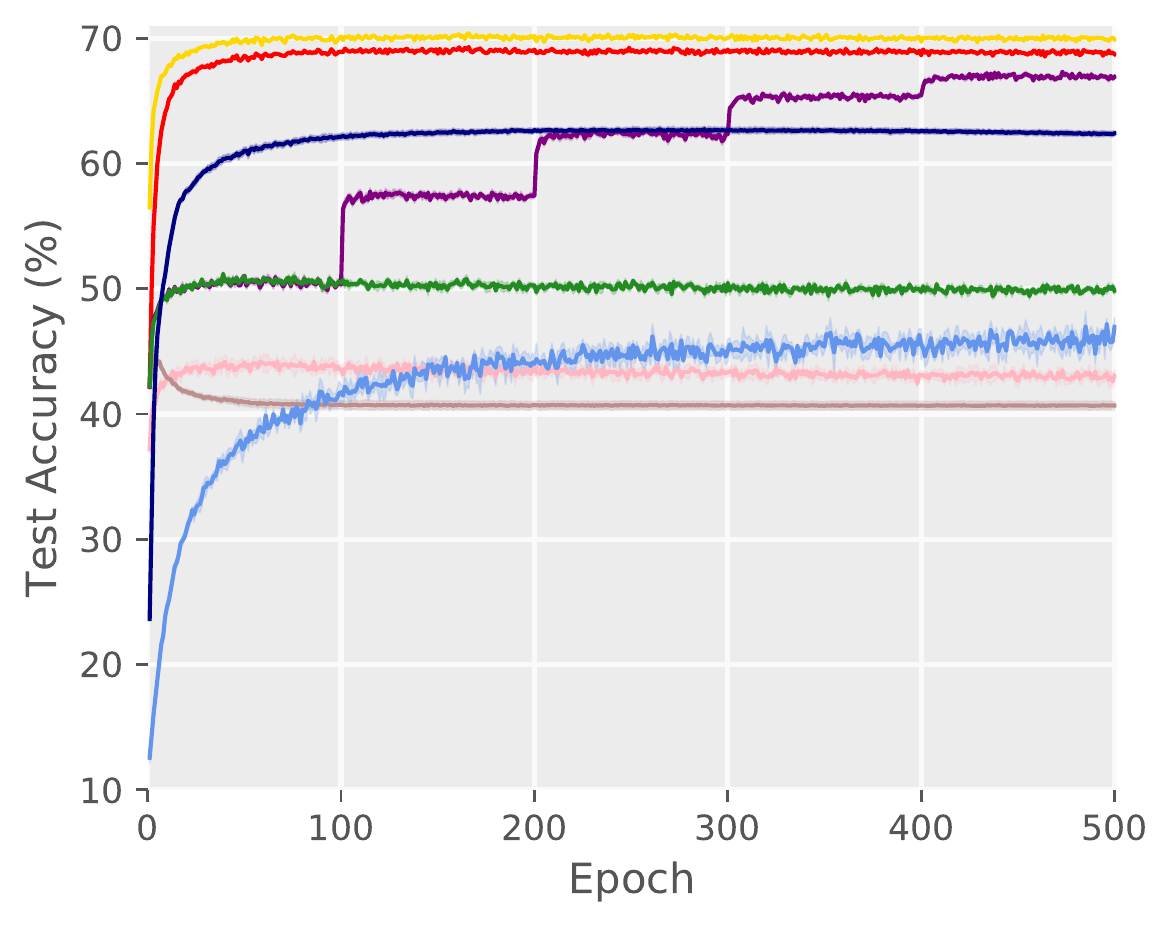}
			\centerline{\quad Linear, $q=0.7$}
	\end{minipage}}
	\subfigure{
		\begin{minipage}[b]{0.5\columnwidth}
			\includegraphics[width=1.6in]{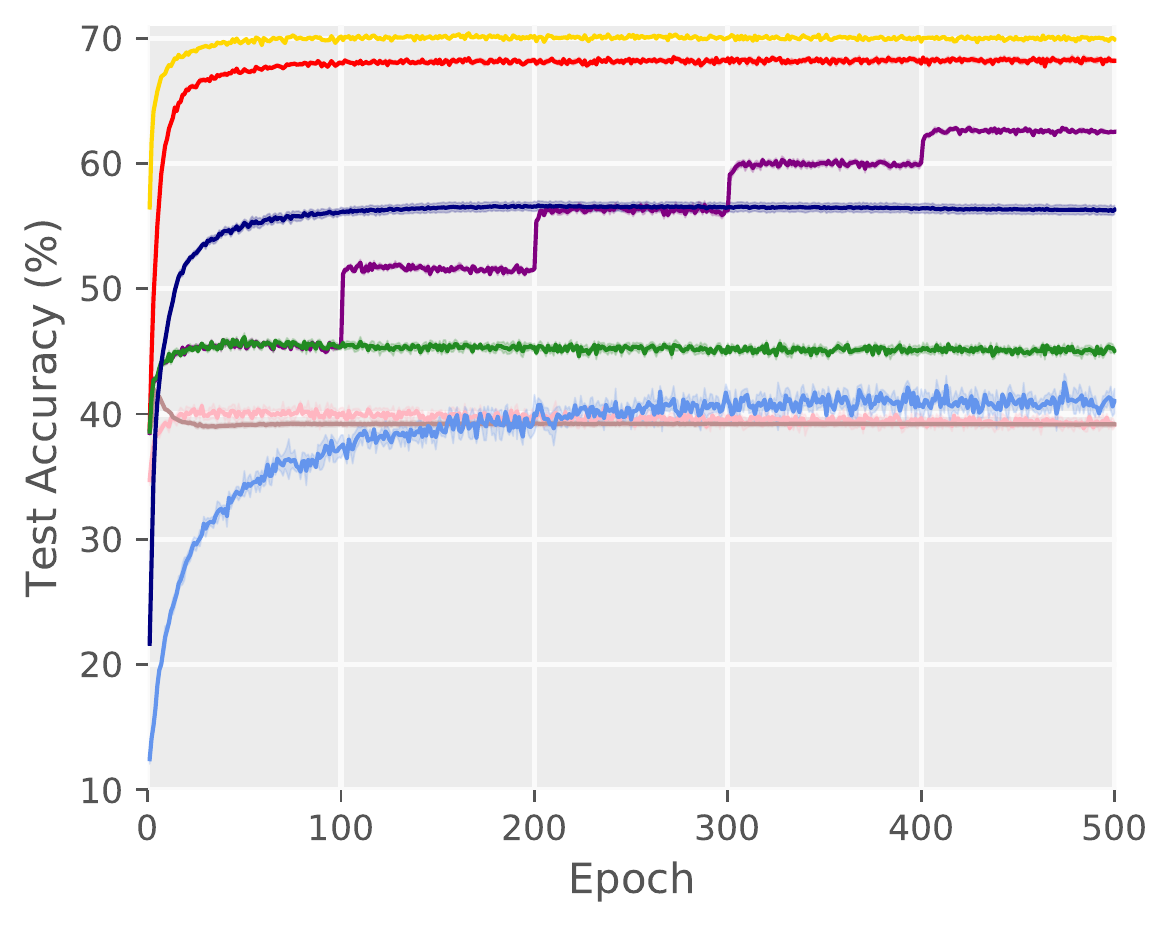}
			\centerline{\quad Linear, $q=0.8$}
	\end{minipage}}
	\subfigure{
		\begin{minipage}[b]{0.5\columnwidth}
			\includegraphics[width=1.6in]{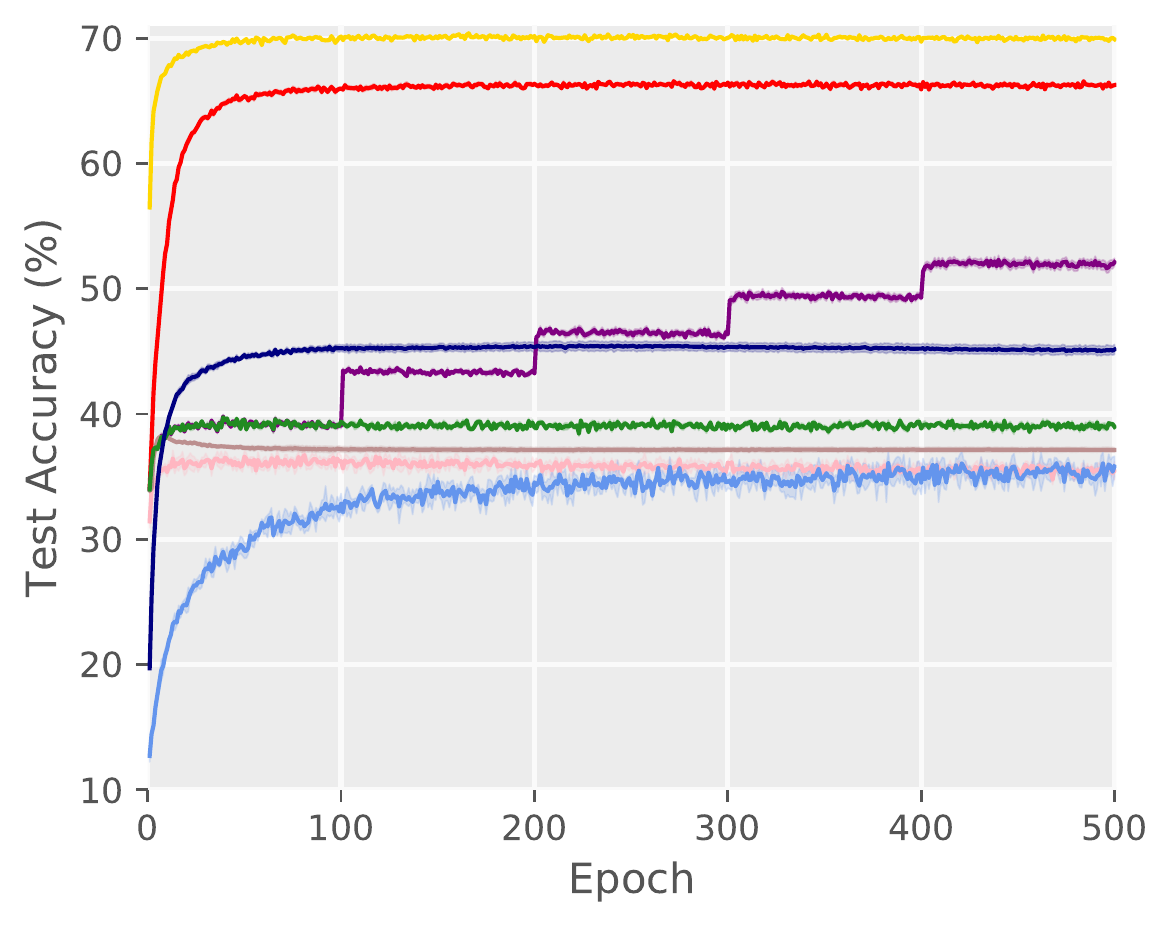}
			\centerline{\quad Linear, $q=0.9$}
	\end{minipage}}
	
	\subfigure{
		\begin{minipage}[b]{0.5\columnwidth}
			\includegraphics[width=1.6in]{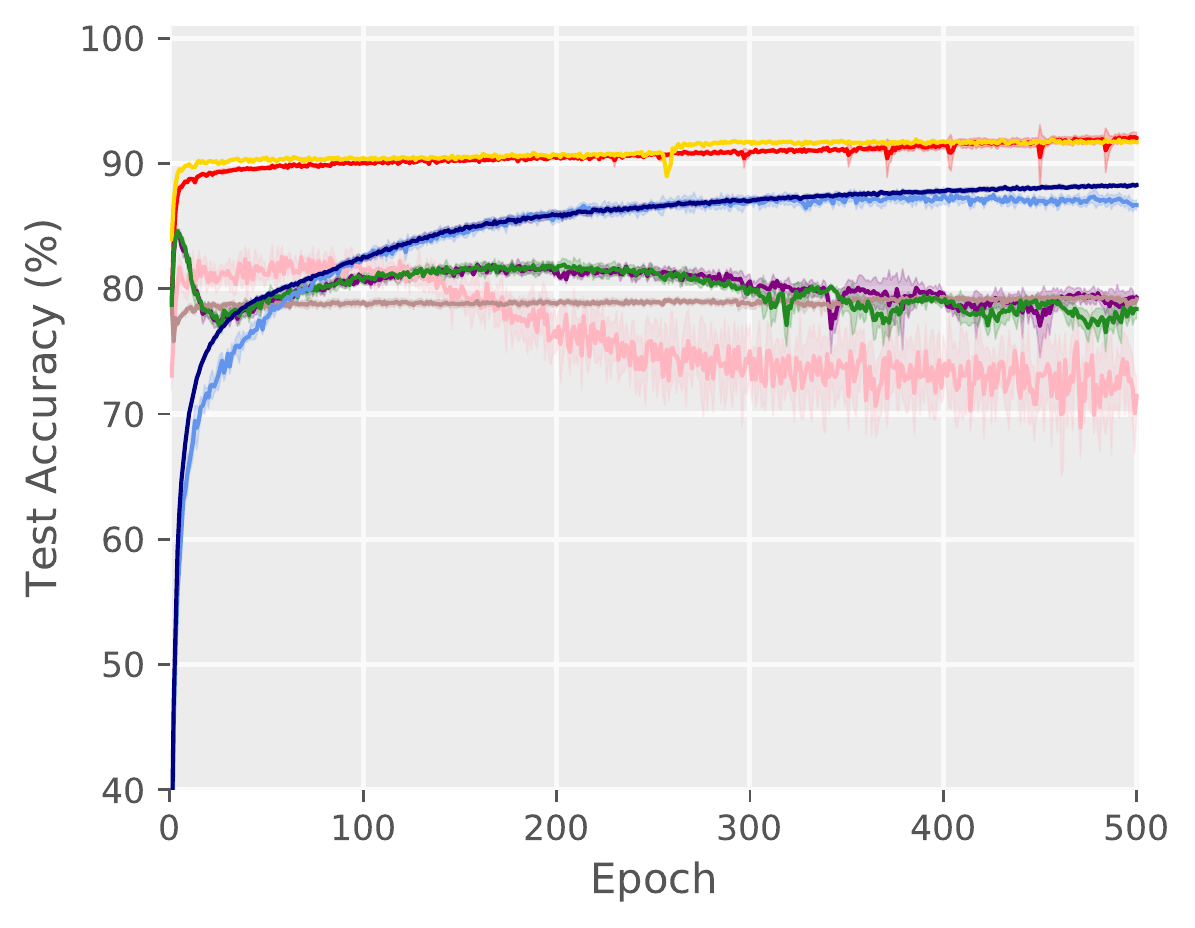}
			\centerline{\quad MLP, $q=0.5$}
	\end{minipage}}
	\subfigure{
		\begin{minipage}[b]{0.5\columnwidth}
			\includegraphics[width=1.6in]{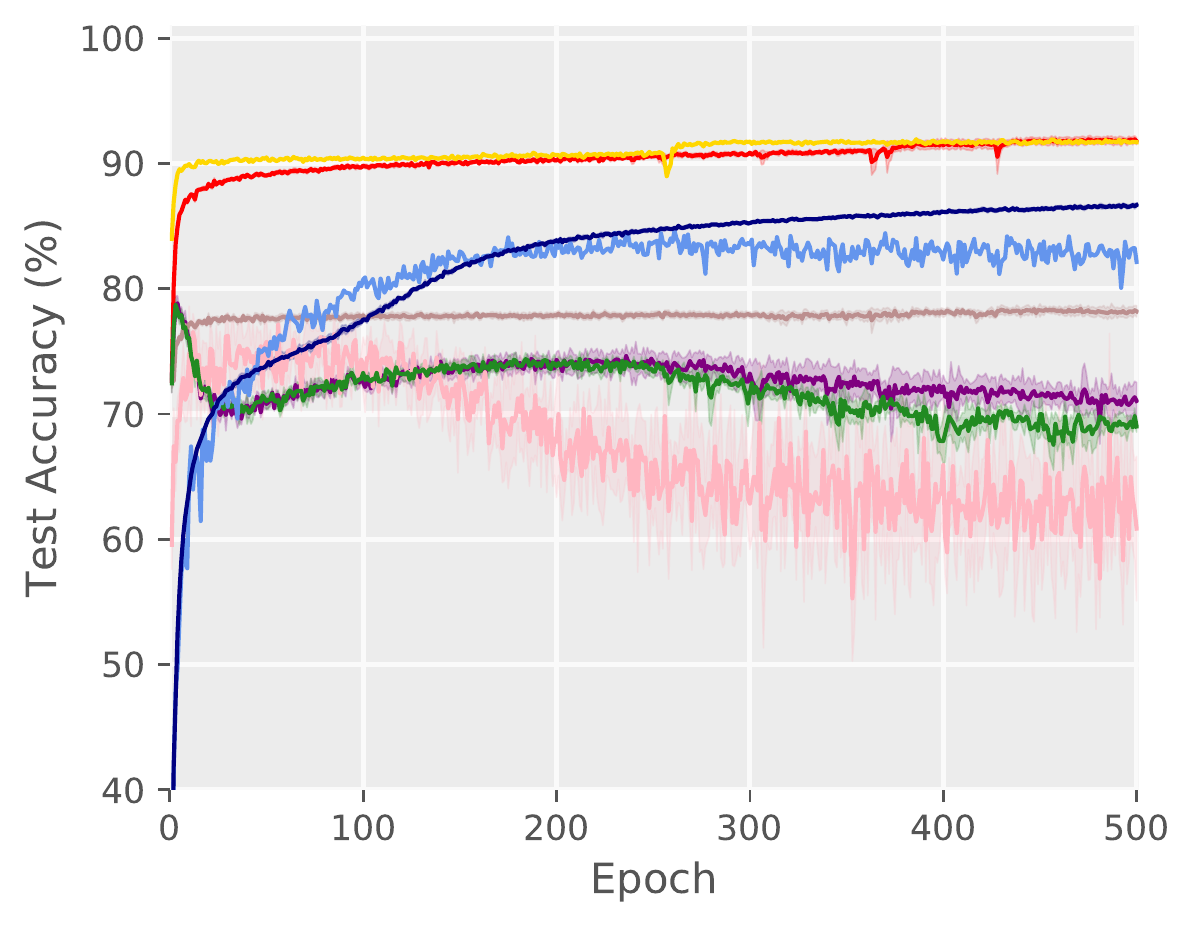}
			\centerline{\quad MLP, $q=0.7$}
	\end{minipage}}
	\subfigure{
		\begin{minipage}[b]{0.5\columnwidth}
			\includegraphics[width=1.6in]{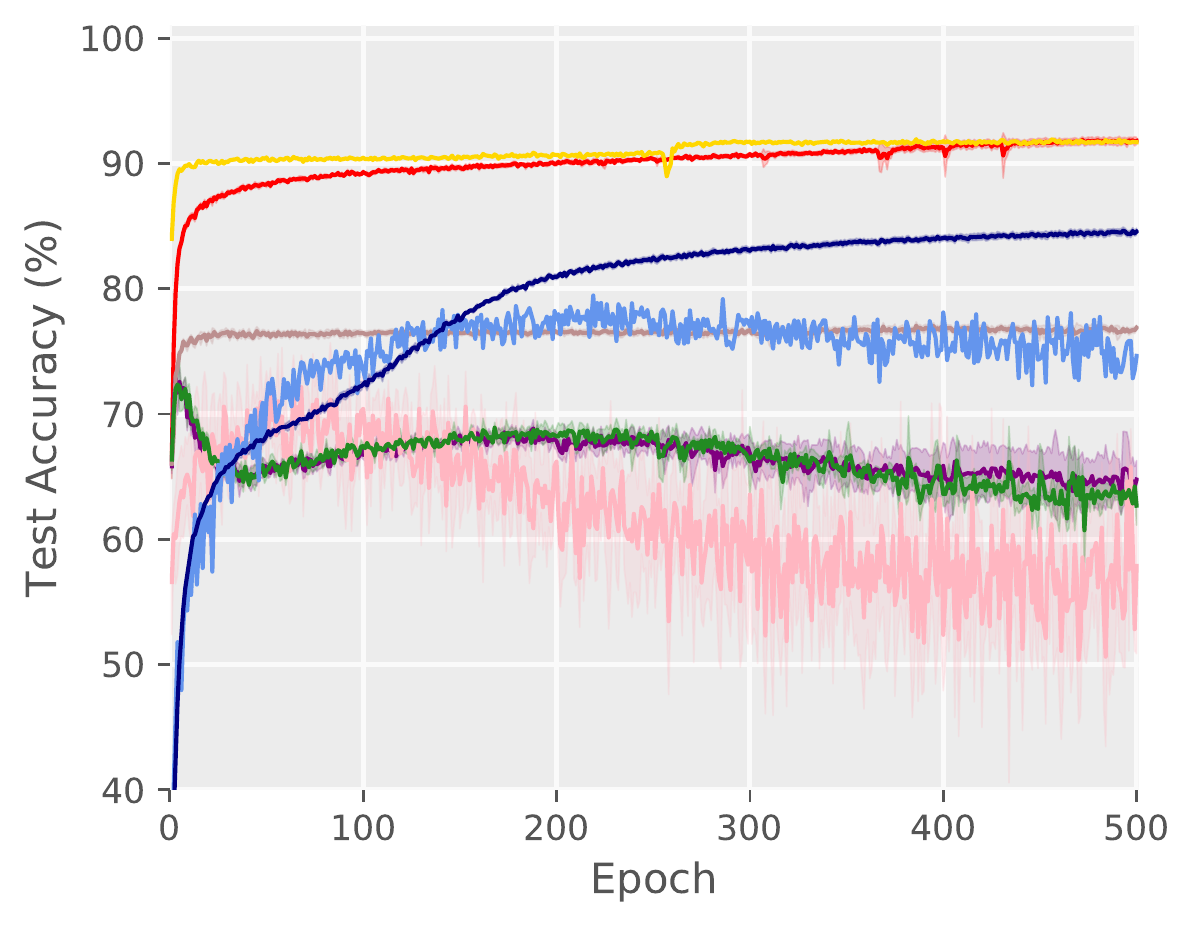}
			\centerline{\quad MLP, $q=0.8$}
	\end{minipage}}
	\subfigure{
		\begin{minipage}[b]{0.5\columnwidth}
			\includegraphics[width=1.6in]{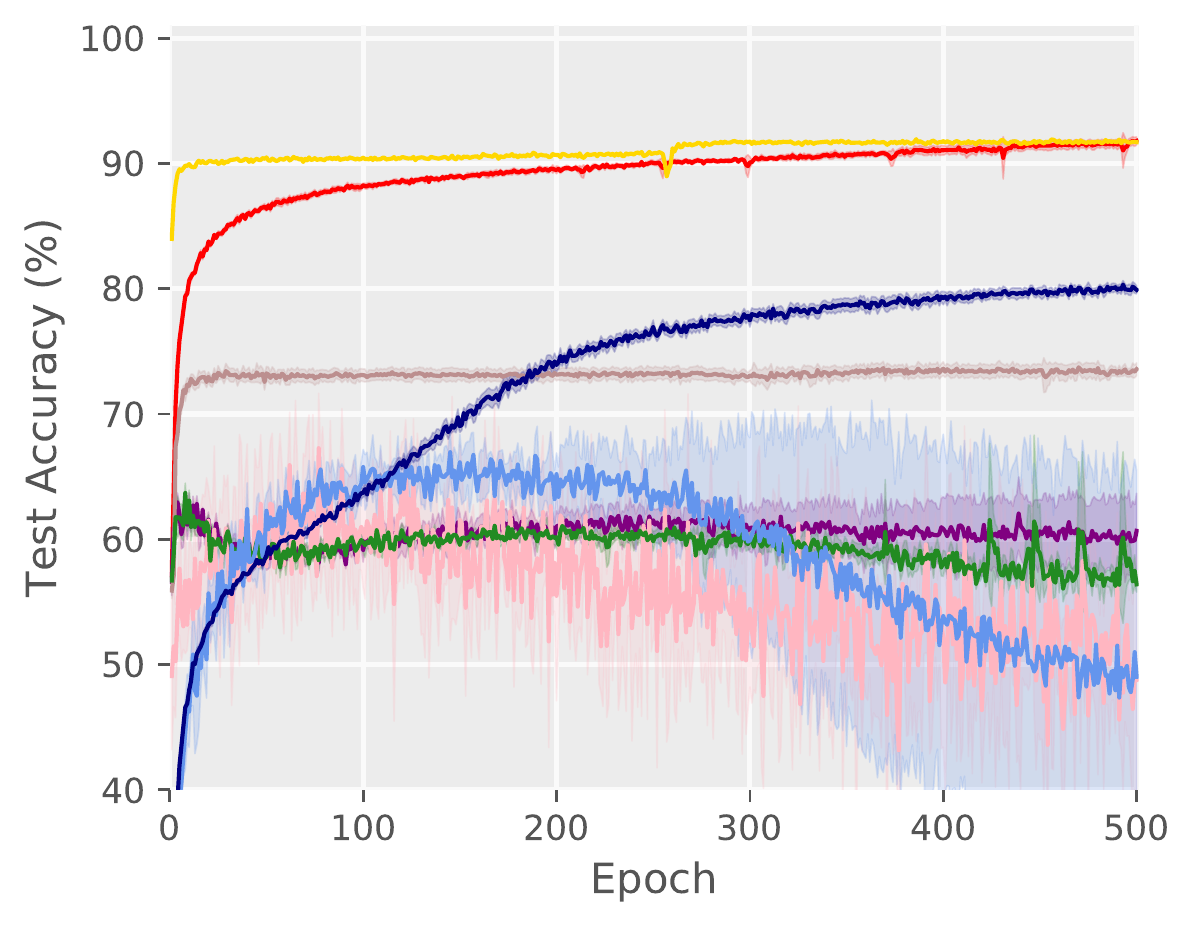}
			\centerline{\quad MLP, $q=0.9$}
	\end{minipage}}
	\caption{Test accuracy on Kuzushiji-MNIST in the pair case.}
	\label{fig:benchmark_pair_test}
	\vskip -0.2in
\end{figure*}

In this section, we experimentally analyze the proposed method PRODEN, and compare with state-of-the-art PLL methods. The implementation is based on PyTorch \cite{paszke2019pytorch} and experiments were carried out with NVIDIA Tesla V100 GPU; it is available at \url{https://github.com/Lvcrezia77/PRODEN}.

\textbf{Datasets} 
\quad We use four widely used benchmark datasets including MNIST \cite{lecun1998gradient}, Fashion-MNIST \cite{xiao2017fashion}, Kuzushiji-MNIST \cite{clanuwat2018deep}, and CIFAR-10 \cite{krizhevsky2009learning}, and five datasets from the UCI Machine Learning Repository \cite{krizhevsky2009learning}, including Yeast, Texture, Dermatology, Synthetic Control, and 20Newgroups. Similar to the corrupted strategy in label noise \cite{patrini2017making, han2018co}, we manually corrupt these datasets into partially labeled versions by a flipping probability $q$ where $q={\rm Pr}(\tilde{y}=1|y=0)$ gives the probability that false positive label $\tilde{y}$ is flipped from negative label $y$. We adopt a \emph{binomial} flipping strategy: $c-1$ independent experiments are conducted on all training examples, each determining whether a negative label is flipped with probability $q$. Then for the examples that none of the negative labels is flipped, we additionally flip a random negative label to the candidate label set for ensuring all the training examples are partially labeled. In this paper, we conduct the experiments under both less-partial circumstances $q=0.1$ and strong-partial circumstances $q=0.7$. In addition, five real-world partial-label datasets are adopted, including Lost \cite{cour2011learning}, Birdsong \cite{briggs2012rank}, MSRCv2 \cite{liu2012conditional}, Soccer Player \cite{zeng13learning}, and Yahoo! News \cite{guillaumin2010multiple}.

\textbf{Baselines}
\quad We first analyze PRODEN by comparing it with seven baseline methods based on DNNs, including three variants of PRODEN, two supervised methods, and two state-of-the-art PLL methods:
\begin{itemize}[topsep=0ex,itemsep=-1ex,leftmargin=*]
	\item \emph{PRODEN-itera} means updating the label weights in the iterative EM manner instead of a seamless manner proposed in Section~\ref{sec:method}. 
	\item \emph{PRODEN-sudden} means performing a \emph{sudden} identification, i.e., updating the weights $w_{ik}=1$ if $\argmax_{j\in s_i}g_j(x_i)=k$ and $w_{ij}=0, \forall j\neq k$ in every iteration step.
	\item \emph{PRODEN-naive} means never updating the uniform weights.
	\item \emph{PN-oracle} means supervised learning from ordinary labels. It should achieve the best performance, and is used merely for a proof of concept. 
	\item \emph{PN-decomp} means decomposing one instance with multiple candidate labels into many (same) instances each one single label, so that we could use any ordinary multi-class classification methods.
	\item \emph{D$^2$CNN} \cite{yao2020deep} means a PLL learning method based on DNNs.
	\item \emph{GA} \cite{ishida2019complementary} means a CLL method based on DNNs and gradient ascent.
\end{itemize}
We employ multiple base models, including linear model, 5-layer perceptron (MLP), 12-layer ConvNet \cite{laine17temporal} and 32-layer ResNet \cite{he2016deep} to show that our proposal is compatible with a wide family of learning models. The detailed descriptions of the datasets with their corresponding models are provided in Appendix. The optimizer is stochastic gradient descent (SGD) \cite{robbins1951stochastic} with momentum 0.9. We train each model 500 epochs with softmax function and cross-entropy loss. And PRODEN-itera updates label weights every 100 epochs.

We further compare PRODEN with six classical PLL methods that can hardly be implemented by DNNs on small-scale datasets (UCI and real-world datasets). They are four parametric methods: \emph{SURE} \cite{feng2019partial}, \emph{CLPL} \cite{cour2011learning}, \emph{ECOC} \cite{zhang2017disambiguation}, \emph{PLSVM} \cite{Nguyen2008classification}, and two non-parametric methods: \emph{PLkNN} \cite{Hullermeier2006learning} and \emph{IPAL} \cite{zhang2015solving}. Their hyper-parameters are selected according to the suggested parameter settings in original papers. Detailed information is provided in Appendix. For a fair comparison, PRODEN and all the parametric methods employ a linear model. We average the classification accuracy of PRODEN over the last 10 epochs as the results to prove that PRODEN is always stable and will not cause performance degradation due to overfitting when the number of epochs increases.

\textbf{Results on benchmark datasets}
\quad We record inductive results which indicate the classification accuracy in the test set (test accuracy). The means and standard deviations of test accuracy based on 5 random samplings in the binomial case are shown in Figure~\ref{fig:benchmark_binomial_test}. The transductive accuracy which reflects the ability in identifying the true labels in the training set can be found in Appendix.

We first observe the performance when $q=0.1$ (the left two columns). PRODEN is always the best method and comparable to PN-oracle with all the models. PRODEN-itera is comparable to PRODEN with a linear model, but its performance deteriorates drastically with complex models because of the overfitting issues. This phenomenon is consistent with the discussions in Section~\ref{sec:method}. In PRODEN-naive, the weights are always equally distributed, while in PRODEN-sudden, the sudden identification concentrates all the weights to the winners, resulting in their poorer performance.

Next, we compare these methods when $q=0.7$ (the right two columns). This is significantly harder than $q=0.1$. But when the task difficulty is relatively low (e.g. MNIST, Fashion-MNIST), PRODEN is still comparable to PN-oracle with such a large flipping probability, while the performance of the baselines is greatly reduced. The superiority always stands out for PRODEN compared with two latest deep methods GA and D$^2$CNN.

\begin{table*}[!t]
	\centering
	\caption{Test accuracy (mean$\pm$std) on the UCI datasets in the binomial case.}
	\label{tab:uci_binomial_test}
	\vskip 0.15in
	\renewcommand\arraystretch{1.2}
	\begin{small}
		\setlength{\tabcolsep}{3.3mm}{
			\begin{tabular}{c|cccccc}
				\toprule
				%\cline{2-11}
				&q & Yeast & Texture & Dermatology & Synthetic Control & 20Newsgroups\\
				\midrule
				\multirow{2}{*}{PRODEN}     &0.1              & \bf{59.05$\pm$4.66\%}    & 99.64$\pm$0.16\%      & \bf{96.16$\pm$3.26\%}   & \bf{98.33$\pm$1.32\%}         & 77.28$\pm$0.68\%  \\ 
				&0.7 & \bf{55.15$\pm$3.87\%}     & \bf{99.33$\pm$0.32\%}      & \bf{95.34$\pm$3.15\%}   & \bf{95.00$\pm$4.75\%}         & \bf{64.74$\pm$0.90\%}  \\ 
				\midrule
				\multirow{2}{*}{PRODEN-itera}              &0.1     & 57.55$\pm$3.96\%     & \bf{99.73$\pm$0.13}\%      & 95.56$\pm$2.90\%   & 82.38$\pm$4.18\%$\bullet$         & 77.04$\pm$0.77\%  \\
				&0.7 & 53.62$\pm$5.54\%     & 94.79$\pm$1.63\%$\bullet$      & 76.22$\pm$4.48\%$\bullet$   & 48.33$\pm$7.96\%$\bullet$         & 52.43$\pm$0.75\%$\bullet$  \\ 
				\midrule
				\multirow{2}{*}{GA}  &0.1& 25.43$\pm$3.81\%$\bullet$    &  96.77$\pm$0.51\%$\bullet$  & 73.53$\pm$7.73\%$\bullet$  &   64.70$\pm$0.42\%$\bullet$   & 66.53$\pm$2.60\%$\bullet$  \\
				&0.7& 22.59$\pm$2.51\%$\bullet$    &  95.76$\pm$0.62\%$\bullet$  & 52.00$\pm$12.32\%$\bullet$  &   49.72$\pm$6.38\%$\bullet$   & 56.47$\pm$0.59\%$\bullet$  \\
				\midrule
				\multirow{2}{*}{D$^2$CNN}  &0.1& 59.05$\pm$4.66\%    &  98.80$\pm$0.31\%$\bullet$  & 95.34$\pm$2.29\% &   84.50$\pm$11.28\%$\bullet$      &   73.20$\pm$0.46\%$\bullet$  \\
				&0.7& 55.15$\pm$3.87\%   &  97.23$\pm$0.71\%$\bullet$  & 91.59$\pm$1.08\%$\bullet$  &   71.00$\pm$13.02\%$\bullet$   &  52.13$\pm$0.41\%$\bullet$  \\
				\midrule
				\multirow{2}{*}{SURE}   &0.1& 55.52$\pm$4.92\%   &  97.96$\pm$0.32\%$\bullet$   &  95.16$\pm$2.25\%   &  76.67$\pm$2.83\%$\bullet$    &  69.82$\pm$0.26\%$\bullet$\\ 
				&0.7& 49.32$\pm$3.95\%$\bullet$  &  94.75$\pm$3.68\%$\bullet$   &  90.96$\pm$2.02\%$\bullet$   &  54.67$\pm$7.67\%$\bullet$    &  62.66$\pm$1.21\%$\bullet$\\
				\midrule
				\multirow{2}{*}{CLPL} &0.1& 57.90$\pm$4.35\% & 98.78$\pm$0.22\%$\bullet$   & 94.79$\pm$3.27\%   &  78.00$\pm$3.10\%$\bullet$ &  76.26$\pm$0.77\%  \\
				&0.7& 54.88$\pm$7.78\% & 95.09$\pm$1.01\%$\bullet$   & 92.33$\pm$2.67\%$\bullet$   &  63.33$\pm$7.02\%$\bullet$ &  50.37$\pm$0.57\%$\bullet$\\ 
				\midrule
				\multirow{2}{*}{ECOC}  &0.1&  59.01$\pm$3.72\%  & 99.15$\pm$0.27\%$\bullet$   &     94.71$\pm$2.08\%    &  96.67$\pm$2.08\%  & \bf{77.67$\pm$1.11\%}   \\
				&0.7&  53.37$\pm$2.37\%$\bullet$  & 97.69$\pm$0.82\%$\bullet$   &     91.47$\pm$3.02\%$\bullet$    &  91.67$\pm$2.08\%$\bullet$ & 61.32$\pm$2.45\%$\bullet$ \\
				\midrule
				\multirow{2}{*}{PLSVM} &0.1& 54.73$\pm$2.89\% & 93.75$\pm$1.99\%$\bullet$   &  92.88$\pm$2.25\%$\bullet$    &  92.50$\pm$2.12\%$\bullet$ & 76.25$\pm$1.22\% \\
				&0.7& 42.34$\pm$1.92\%$\bullet$ & 50.69$\pm$5.62\%$\bullet$   &  89.32$\pm$4.98\%$\bullet$    &  85.50$\pm$2.61\%$\bullet$ & 59.12$\pm$0.41\%$\bullet$ \\
				\midrule
				\multirow{2}{*}{PL$k$NN}    &0.1& 54.28$\pm$3.64\% & 97.20$\pm$0.28\%$\bullet$  &  94.52$\pm$2.74\%   & 94.83$\pm$2.08\%$\bullet$ & 42.39$\pm$0.97\%$\bullet$   \\
				&0.7& 28.72$\pm$1.01\%$\bullet$ & 96.78$\pm$0.31\%$\bullet$  &  85.48$\pm$3.70\%$\bullet$   & 85.50$\pm$6.36\%$\bullet$ & 18.01$\pm$0.34\%$\bullet$   \\
				\midrule
				\multirow{2}{*}{IPAL}  &0.1& 50.56$\pm$2.83\%$\bullet$    & 99.33$\pm$0.26\%$\bullet$  & 95.34$\pm$2.29\%  &   98.33$\pm$0.83\%   & 75.01$\pm$0.69\%$\bullet$  \\
				&0.7& 41.10$\pm$3.71\%$\bullet$    &  97.84$\pm$0.57\%$\bullet$  & 95.02$\pm$2.03\%  &   94.83$\pm$6.55\%   & 56.93$\pm$0.50\%$\bullet$  \\
				\bottomrule
			\end{tabular}
		}
	\end{small}
	\vskip -0.1in
\end{table*}

\begin{table*}[!t]
	\centering
	\caption{Test accuracy (mean$\pm$std) on the real-world datasets.}
	\label{tab:realworld_test}
	\vskip 0.15in
	\renewcommand\arraystretch{1.2}
	\begin{small}
		\setlength{\tabcolsep}{4.5mm}{
			\begin{tabular}{c|ccccc}
				\toprule
				%\cline{2-11}
				& Lost & Birdsong & MSRCv2 & Soccer Player & Yahoo! News\\
				\midrule
				PRODEN                   & \bf{76.57$\pm$1.47\%}     & \bf{72.01$\pm$0.44\%}      & 45.27$\pm$1.73\%   & \bf{55.99$\pm$0.58\%}         & \bf{67.40$\pm$0.55\%}  \\ 
				\midrule
				PRODEN-itera                   & 68.09$\pm$3.78\%$\bullet$     & 68.02$\pm$0.76\%$\bullet$      & 42.79$\pm$2.80\%   & 53.50$\pm$0.94\%$\bullet$         & 67.04$\pm$0.67\%  \\
				D$^2$CNN  & 69.61$\pm$5.48\%$\bullet$    &  66.58$\pm$1.49\%$\bullet$  & 40.17$\pm$1.99\%$\bullet$ &   49.06$\pm$0.15\%$\bullet$   & 56.39$\pm$0.89\%$\bullet$  \\
				GA  & 48.21$\pm$4.44\%$\bullet$    &  29.77$\pm$1.43\%$\bullet$  & 22.30$\pm$2.71\%$\bullet$ &   51.88$\pm$0.44\%$\bullet$   & 34.32$\pm$0.95\%$\bullet$  \\
				SURE   & 71.61$\pm$3.44\%$\bullet$   &  58.04$\pm$1.22\%$\bullet$   &  31.57$\pm$2.48\%$\bullet$  &  49.16$\pm$0.20\%$\bullet$    &  45.73$\pm$0.90\%$\bullet$\\
				CLPL & 76.17$\pm$1.81\% & 67.56$\pm$1.12\%$\bullet$   & 43.64$\pm$0.24\%   &  49.88$\pm$4.29\%$\bullet$ &  53.74$\pm$0.95\%$\bullet$\\ 
				ECOC  &  63.93$\pm$5.45\%$\bullet$  & 71.47$\pm$1.24\%$\bullet$   &     46.78$\pm$2.84\%    &  55.51$\pm$0.54\% & 64.78$\pm$0.78\%$\bullet$   \\
				PLSVM & 72.86$\pm$5.45\%$\bullet$ & 60.46$\pm$1.99\%$\bullet$   &  38.97$\pm$4.62\%$\bullet$    &  46.15$\pm$1.00\%$\bullet$ & 60.46$\pm$1.48\%$\bullet$ \\
				PL$k$NN    & 35.00$\pm$4.71\%$\bullet$ & 64.22$\pm$1.14\%$\bullet$  &  41.60$\pm$2.30\%$\bullet$   & 49.18$\pm$0.26\%$\bullet$ & 40.30$\pm$0.90\%$\bullet$   \\
				IPAL  & 71.25$\pm$1.40\%$\bullet$    &  71.19$\pm$1.54\%  & \bf{52.36$\pm$2.87\%}$\circ$  &   54.41$\pm$0.56\%$\bullet$   & 66.22$\pm$0.80\%$\bullet$  \\
				\bottomrule
			\end{tabular}
		}
	\end{small}
\end{table*}

\textbf{Analysis on the ambiguity degree}
\quad It is intuitive that the higher the ambiguity degree is, the more difficult it is to find the true labels. We investigate the influence of ambiguity degree $\gamma$ through a \emph{pair} flipping strategy: only the classes that are similar to the true class will be put into the candidate label set with probability $q$. We gradually move $q$ from 0.5 to 0.9 to simulate $\gamma$ ($\gamma \rightarrow q$ as $n \rightarrow \infty$), and the experimental results on Kuzushiji-MNIST are reported in Figure~\ref{fig:benchmark_pair_test}, and phenomena on other datasets are similar and reported in Appendix. We can see that even when the ambiguity degree is high, PRODEN is still highly competitive. PRODEN tends to be less affected with increased ambiguity, while the baselines are affected severely.

\textbf{Results on small-scale datasets}
\quad We perform five-fold cross-validation, and use paired $t$-test at 5\% significance level. Table~\ref{tab:uci_binomial_test} and Table~\ref{tab:realworld_test} report the mean test accuracy with standard deviation on UCI datasets and the real-world datasets, respectively. $\bullet/\circ$ represents whether PRODEN is significantly better/worse than the comparing methods. Clearly, PRODEN is overall the best performing, confirming the advantage of PRODEN afforded not only by the network architecture, but also the progressive identification process.

\section{Conclusion}
In this paper, we focused on proposing a novel method for PLL which is compatible with flexible multi-class classifiers and stochastic optimization. We first proposed a classifier-consistent risk estimator based on the minimal loss incurred by candidate labels. The estimator has good theoretical properties, while the $\min$ operator makes it difficult to be solved by the epoch-wise optimization of neural networks. Therefore, we proposed a progressive identification method named PRODEN for approximately minimizing the proposed risk estimator, whose idea is relaxing $\min$ to a weighted combination. Learning the weights and the classifier are then conducted in a seamless manner for mitigating the overfitting problem. At last, experiments demonstrated that the proposed method could successfully train various models, and it compared favorably with state-of-the-art methods.

\newpage
\section*{Acknowledgements}
JL and XG were supported by the National Key Research \&
Development Plan of China (No. 2017YFB1002801), the National Science
Foundation of China (61622203), the Collaborative Innovation Center of
Novel Software Technology and Industrialization, and the Collaborative
Innovation Center of Wireless Communications Technology. MX was supported by the Open Project Program of the State Key Lab. for Novel Software Technology, Nanjing University, Nanjing, 210093, P.R.China. LF was supported by MOE, NRF, and NTU, Singapore. GN and MS were supported by JST AIP Acceleration Research Grant Number JPMJCR20U3, Japan.

\bibliography{proden-icml20}

\begin{thebibliography}{61}
\providecommand{\natexlab}[1]{#1}
\providecommand{\url}[1]{\texttt{#1}}
\expandafter\ifx\csname urlstyle\endcsname\relax
  \providecommand{\doi}[1]{doi: #1}\else
  \providecommand{\doi}{doi: \begingroup \urlstyle{rm}\Url}\fi

\bibitem[Aharon et~al.(2006)Aharon, Elad, and Bruckstein]{aharon2011the}
Aharon, M., Elad, M., and Bruckstein, A.~M.
\newblock The k-svd: An algorithm for designing of overcomplete dictionaries
  for sparse representation.
\newblock \emph{IEEE Transactions on Signal Processing}, 54\penalty0
  (11):\penalty0 4311--4322, 2006.

\bibitem[Arpit et~al.(2017)Arpit, Jastrzebski, Ballas, Krueger, Bengio, Kanwal,
  Maharaj, and Fischer]{arpit2017a}
Arpit, D., Jastrzebski, S., Ballas, N., Krueger, D., Bengio, E., Kanwal, M.,
  Maharaj, T., and Fischer, A.
\newblock A closer look at memorization in deep networks.
\newblock In \emph{Proceedings of 34th International Conference on Machine
  Learning (ICML'17)}, pp.\  233--242, Sydney, Australia, 2017.

\bibitem[Bartlett \& Mendelson(2002)Bartlett and
  Mendelson]{bartlett2002rademacher}
Bartlett, P.~L. and Mendelson, S.
\newblock Rademacher and gaussian complexities: Risk bounds and structural
  results.
\newblock \emph{Journal of Machine Learning Research}, 3\penalty0
  (11):\penalty0 463--482, 2002.

\bibitem[Bertsekas(1997)]{bertsekas1997nonlinear}
Bertsekas, D.~P.
\newblock Nonlinear programming.
\newblock \emph{Journal of the Operational Research Society}, 48\penalty0
  (3):\penalty0 334--334, 1997.

\bibitem[Bottou \& Bousquet(2007)Bottou and Bousquet]{bottou2007the}
Bottou, L. and Bousquet, O.
\newblock The tradeoffs of large scale learning.
\newblock In \emph{Advances in Neural Information Processing Systems 20
  (NIPS'07)}, volume~20, pp.\  1--8, Vancouver, Canada, 2007.

\bibitem[Briggs et~al.(2012)Briggs, Fern, and Raich]{briggs2012rank}
Briggs, F., Fern, X.~Z., and Raich, R.
\newblock Rank-loss support instance machines for miml instance annotation.
\newblock In \emph{Proceedings of the 18th ACM SIGKDD International Conference
  on Knowledge Discovery and Data Mining (KDD'12)}, pp.\  534--542, Beijing,
  China, 2012.

\bibitem[Chen et~al.(2018)Chen, Patel, and Chellappa]{chen18learning}
Chen, C., Patel, V.~M., and Chellappa, R.
\newblock Learning from ambiguously labeled face images.
\newblock \emph{IEEE Transactions on Pattern Analysis and Machine
  Intelligence}, 40\penalty0 (7):\penalty0 1653--1667, 2018.

\bibitem[Chen et~al.(2013)Chen, Patel, Pillai, Chellappa, and
  Phillips]{chen13dictionary}
Chen, Y., Patel, V.~M., Pillai, J.~K., Chellappa, R., and Phillips, P.~J.
\newblock Dictionary learning from ambiguously labeled data.
\newblock In \emph{Proceedings of the 26th IEEE Conference on Computer Vision
  and Pattern Recognition (CVPR'13)}, pp.\  353--360, Portland, OR, 2013.

\bibitem[Chen et~al.(2014)Chen, Patel, Pillai, Chellappa, and
  Phillips]{chen14ambiguously}
Chen, Y., Patel, V.~M., Pillai, J.~K., Chellappa, R., and Phillips, P.~J.
\newblock Ambiguously labeled learning using dictionaries.
\newblock \emph{IEEE Transactions on Information Forensics and Security},
  9\penalty0 (12):\penalty0 2076--2088, 2014.

\bibitem[Clanuwat et~al.(2018)Clanuwat, Bober-Irizar, Kitamoto, Lamb, Yamamoto,
  and Ha]{clanuwat2018deep}
Clanuwat, T., Bober-Irizar, M., Kitamoto, A., Lamb, A., Yamamoto, K., and Ha,
  D.
\newblock Deep learning for classical japanese literature.
\newblock \emph{arXiv preprint arXiv:1812.01718}, 2018.

\bibitem[Cour et~al.(2011)Cour, Sapp, and Taskar]{cour2011learning}
Cour, T., Sapp, B., and Taskar, B.
\newblock Learning from partial labels.
\newblock \emph{Journal of Machine Learning Research}, 12\penalty0
  (5):\penalty0 1501--1536, 2011.

\bibitem[Duchi et~al.(2011)Duchi, Hazan, and Singer]{duchi2011adaptive}
Duchi, J., Hazan, E., and Singer, Y.
\newblock Adaptive subgradient methods for online learning and stochastic
  optimization.
\newblock \emph{Journal of Machine Learning Research}, 12\penalty0
  (7):\penalty0 2121--2159, 2011.

\bibitem[Fan et~al.(2008)Fan, Chang, Hsieh, Wang, and Lin]{fan2008liblinear}
Fan, R., Chang, K., Hsieh, C., Wang, X., and Lin, C.
\newblock Liblinear: A library for large linear classification.
\newblock \emph{Journal of Machine Learning Research}, 9\penalty0 (8):\penalty0
  1871--1874, 2008.

\bibitem[Feng \& An(2019{\natexlab{a}})Feng and An]{feng2019partial}
Feng, L. and An, B.
\newblock Partial label learning with self-guided retraining.
\newblock In \emph{Proceedings of 33rd AAAI Conference on Artificial
  Intelligence (AAAI'19)}, pp.\  3542--3549, Honolulu, HI, 2019{\natexlab{a}}.

\bibitem[Feng \& An(2019{\natexlab{b}})Feng and An]{feng2019partialb}
Feng, L. and An, B.
\newblock Partial label learning by semantic difference maximization.
\newblock In \emph{Proceedings of the 28th International Joint Conference on
  Artificial Intelligence (IJCAI'19)}, pp.\  2294--2300, Macao, China,
  2019{\natexlab{b}}.

\bibitem[Feng et~al.(2020)Feng, Kaneko, Han, Niu, An, and
  Sugiyama]{feng2020learning}
Feng, L., Kaneko, T., Han, B., Niu, G., An, B., and Sugiyama, M.
\newblock Learning with multiple complementary labels.
\newblock In \emph{Proceedings of 37th International Conference on Machine
  Learning (ICML'20)}, 2020.

\bibitem[Ghosh et~al.(2017)Ghosh, Kumar, and Sastry]{ghosh2017robust}
Ghosh, A., Kumar, H., and Sastry, P.
\newblock Robust loss functions under label noise for deep neural networks.
\newblock In \emph{Proceedings of 31st AAAI Conference on Artificial
  Intelligence (AAAI'17)}, pp.\  1919--1925, San Francisco, CA, 2017.

\bibitem[Gong et~al.(2017)Gong, Liu, Tang, Yang, Yang, and
  Tao]{gong2017regularization}
Gong, C., Liu, T., Tang, Y., Yang, J., Yang, J., and Tao, D.
\newblock A regularization approach for instance-based superset label learning.
\newblock \emph{IEEE Transactions on Cybernetics}, 48\penalty0 (3):\penalty0
  967--978, 2017.

\bibitem[Gong et~al.(2019)Gong, Shi, Liu, Zhang, Yang, and Tao]{gong2019loss}
Gong, C., Shi, H., Liu, T., Zhang, C., Yang, J., and Tao, D.
\newblock Loss decomposition and centroid estimation for positive and unlabeled
  learning.
\newblock \emph{IEEE Transactions on Pattern Analysis and Machine
  Intelligence}, 2019.

\bibitem[Goodfellow et~al.(2016)Goodfellow, Bengio, and
  Courville]{Goodfellow2016}
Goodfellow, I., Bengio, Y., and Courville, A.
\newblock \emph{Deep Learning}.
\newblock MIT Press, 2016.

\bibitem[Guillaumin et~al.(2010)Guillaumin, Verbeek, and
  Schmid]{guillaumin2010multiple}
Guillaumin, M., Verbeek, J., and Schmid, C.
\newblock Multiple instance metric learning from automatically labeled bags of
  faces.
\newblock \emph{Lecture Notes in Computer Science}, 63\penalty0 (11):\penalty0
  634--647, 2010.

\bibitem[H.~Xiao \& Vollgraf(2017)H.~Xiao and Vollgraf]{xiao2017fashion}
H.~Xiao, K.~R. and Vollgraf, R.
\newblock Fashion-mnist: a novel image dataset for benchmarking machine
  learning algorithms.
\newblock \emph{arXiv preprint arXiv:1708.07747}, 2017.

\bibitem[Han et~al.(2018)Han, Yao, Yu, Niu, Xu, Hu, Tsang, and
  Sugiyama]{han2018co}
Han, B., Yao, Q., Yu, X., Niu, G., Xu, M., Hu, W., Tsang, I., and Sugiyama, M.
\newblock Co-teaching: Robust training of deep neural networks with extremely
  noisy labels.
\newblock In \emph{Advances in Neural Information Processing Systems 31
  (NeurIPS'18)}, pp.\  8527--8537, Montreal, Canada, 2018.

\bibitem[He et~al.(2016)He, Zhang, Ren, and Sun]{he2016deep}
He, K., Zhang, X., Ren, S., and Sun, J.
\newblock Deep residual learning for image recognition.
\newblock In \emph{Proceedings of the 29th IEEE conference on Computer Vision
  and Pattern Recognition (CVPR'16)}, pp.\  770--778, Las Vegas, NV, 2016.

\bibitem[Hullermeier \& Beringer(2006)Hullermeier and
  Beringer]{Hullermeier2006learning}
Hullermeier, E. and Beringer, J.
\newblock Learning from ambiguously labeled examples.
\newblock \emph{Intelligent Data Analysis}, 10\penalty0 (5):\penalty0 419--439,
  2006.

\bibitem[Ioffe \& Szegedy(2015)Ioffe and Szegedy]{ioffe2015batch}
Ioffe, S. and Szegedy, C.
\newblock Batch normalization: Accelerating deep network training by reducing
  internal covariate shift.
\newblock In \emph{Proceedings of the 32nd International Conference on Machine
  Learning (ICML'15)}, pp.\  448--456, Lille, France, 2015.

\bibitem[Ishida et~al.(2017)Ishida, Niu, Hu, and Sugiyama]{ishida2017learning}
Ishida, T., Niu, G., Hu, W., and Sugiyama, M.
\newblock Learning from complementary labels.
\newblock In \emph{Advances in Neural Information Processing Systems 30
  (NIPS'17)}, pp.\  5639--5649, Long Beach, CA, 2017.

\bibitem[Ishida et~al.(2019)Ishida, Niu, Menon, and
  Sugiyama]{ishida2019complementary}
Ishida, T., Niu, G., Menon, A.~K., and Sugiyama, M.
\newblock Complementary-label learning for arbitrary losses and models.
\newblock In \emph{Proceedings of 36th International Conference on Machine
  Learning (ICML'19)}, pp.\  2971--2980, Long Beach, CA, 2019.

\bibitem[Jin \& Ghahramani(2003)Jin and Ghahramani]{jin2003learning}
Jin, R. and Ghahramani, Z.
\newblock Learning with multiple labels.
\newblock In \emph{Advances in Neural Information Processing Systems 16
  (NIPS'03)}, pp.\  921--928, Vancouver, Canada, 2003.

\bibitem[Kamnitsas et~al.(2018)Kamnitsas, Castro, Folgoc, Walker, Tanno,
  Rueckert, Glocker, Criminisi, and Nori]{kamnitsas2018semi}
Kamnitsas, K., Castro, D.~C., Folgoc, L.~L., Walker, I., Tanno, R., Rueckert,
  D., Glocker, B., Criminisi, A., and Nori, A.
\newblock Semi-supervised learning via compact latent space clustering.
\newblock In \emph{Proceedings of 35th International Conference on Machine
  Learning (ICML'18)}, pp.\  2464--2473, Stockholm, Sweden, 2018.

\bibitem[Krizhevsky \& Hinton(2009)Krizhevsky and
  Hinton]{krizhevsky2009learning}
Krizhevsky, A. and Hinton, G.
\newblock Learning multiple layers of features from tiny images.
\newblock 2009.

\bibitem[Laine \& Aila(2017)Laine and Aila]{laine17temporal}
Laine, S. and Aila, T.
\newblock Temporal ensembling for semi-supervised learning.
\newblock In \emph{Proceedings of 5th International Conference on Learning
  Representations (ICLR'17)}, Toulon, France, 2017.

\bibitem[LeCun et~al.(1998)LeCun, Bottou, Bengio, and
  Haffner]{lecun1998gradient}
LeCun, Y., Bottou, L., Bengio, Y., and Haffner, P.
\newblock Gradient-based learning applied to document recognition.
\newblock \emph{Proceedings of the IEEE}, 86\penalty0 (11):\penalty0
  2278--2324, 1998.

\bibitem[Ledoux \& Talagrand(2013)Ledoux and Talagrand]{ledoux2013probability}
Ledoux, M. and Talagrand, M.
\newblock \emph{Probability in Banach Spaces: Isoperimetry and Processes}.
\newblock Springer Science \& Business Media, 2013.

\bibitem[Liu \& Dietterich(2012)Liu and Dietterich]{liu2012conditional}
Liu, L. and Dietterich, T.~G.
\newblock A conditional multinomial mixture model for superset label learning.
\newblock In \emph{Advances in Neural Information Processing Systems 25
  (NIPS'12)}, pp.\  548--556, Lake Tahoe, NV, 2012.

\bibitem[Liu \& Dietterich(2014)Liu and Dietterich]{liu2014learnability}
Liu, L. and Dietterich, T.~G.
\newblock Learnability of the superset label learning problem.
\newblock In \emph{Proceedings of 31st International Conference on Machine
  Learning (ICML'14)}, pp.\  1629--1637, Beijing, China, 2014.

\bibitem[Lu et~al.(2019)Lu, Niu, Menon, and Sugiyama]{lu2018minimal}
Lu, N., Niu, G., Menon, A.~K., and Sugiyama, M.
\newblock On the minimal supervision for training any binary classifier from
  only unlabeled data.
\newblock In \emph{Proceedings of 7th International Conference on Learning
  Representations (ICLR'19)}, New Orleans, LA, 2019.

\bibitem[Luo \& Orabona(2010)Luo and Orabona]{luo2010learning}
Luo, J. and Orabona, F.
\newblock Learning from candidate labeling sets.
\newblock In \emph{Advances in Neural Information Processing Systems 23
  (NIPS'10)}, pp.\  1504--1512, Vancouver, Canada, 2010.

\bibitem[Lyu et~al.(2019)Lyu, Feng, Wang, Lang, and Li]{lyu2019gm}
Lyu, G., Feng, S., Wang, T., Lang, C., and Li, Y.
\newblock Gm-pll: Graph matching based partial label learning.
\newblock \emph{IEEE Transactions on Knowledge and Data Engineering}, 2019.

\bibitem[Maas et~al.(2013)Maas, Hannun, and Ng]{maas2013rectifier}
Maas, A.~L., Hannun, A.~Y., and Ng, A.~Y.
\newblock Rectifier nonlinearities improve neural network acoustic models.
\newblock In \emph{Proceedings of 30th International Conference on Machine
  Learning (ICML'13)}, volume~30, pp.\ ~3, Atlanta, GA, 2013.

\bibitem[Masnadi-Shirazi \& Vasconcelos(2009)Masnadi-Shirazi and
  Vasconcelos]{masnadi2009design}
Masnadi-Shirazi, H. and Vasconcelos, N.
\newblock On the design of loss functions for classification: theory,
  robustness to outliers, and savageboost.
\newblock In \emph{Advances in Neural Information Processing Systems 22
  (NIPS'09)}, pp.\  1049--1056, Vancouver, Canada, 2009.

\bibitem[Maurer(2016)]{maurer2016vector}
Maurer, A.
\newblock A vector-contraction inequality for rademacher complexities.
\newblock In \emph{International Conference on Algorithmic Learning Theory
  (ALT'16)}, pp.\  3--17, 2016.

\bibitem[McDiarmid(1989)]{mcdiarmid1989method}
McDiarmid, C.
\newblock On the method of bounded differences.
\newblock \emph{Surveys in combinatorics}, 141\penalty0 (1):\penalty0 148--188,
  1989.

\bibitem[Nguyen \& Caruana(2008)Nguyen and Caruana]{Nguyen2008classification}
Nguyen, N. and Caruana, R.
\newblock Classification with partial labels.
\newblock In \emph{Proceedings of the 14th ACM SIGKDD International Conference
  on Knowledge Discovery and Data Mining (KDD'08)}, pp.\  381--389, Las Vegas,
  NV, 2008.

\bibitem[Panis \& Lanitis(2014)Panis and Lanitis]{panis2014overview}
Panis, G. and Lanitis, A.
\newblock An overview of research activities in facial age estimation using the
  fg-net aging database.
\newblock In \emph{Proceedings of the 13th European Conference on Computer
  Vision (ECCV'14)}, pp.\  737--750, Zurich, Switzerland, 2014.

\bibitem[Paszke et~al.(2019)Paszke, Gross, Massa, Lerer, Bradbury, Chanan,
  Killeen, Lin, Gimelshein, Antiga, et~al.]{paszke2019pytorch}
Paszke, A., Gross, S., Massa, F., Lerer, A., Bradbury, J., Chanan, G., Killeen,
  T., Lin, Z., Gimelshein, N., Antiga, L., et~al.
\newblock Pytorch: An imperative style, high-performance deep learning library.
\newblock In \emph{Advances in Neural Information Processing Systems 32
  (NeurIPS'19)}, pp.\  8024--8035, Vancouver, Canada, 2019.

\bibitem[Patrini et~al.(2017)Patrini, Rozza, Menon, Nock, and
  Qu]{patrini2017making}
Patrini, G., Rozza, A., Menon, A.~K., Nock, R., and Qu, L.
\newblock Making deep neural networks robust to label noise: A loss correction
  approach.
\newblock In \emph{Proceedings of the 30th IEEE Conference on Computer Vision
  and Pattern Recognition (CVPR'17)}, pp.\  1944--1952, Honolulu, HI, 2017.

\bibitem[Robbins \& Monro(1951)Robbins and Monro]{robbins1951stochastic}
Robbins, H. and Monro, S.
\newblock A stochastic approximation method.
\newblock \emph{The Annals of Mathematical Statistics}, 22\penalty0
  (3):\penalty0 400--407, 1951.

\bibitem[Shrivastava et~al.(2015)Shrivastava, Patel, and
  Chellappa]{shrivastavav15non}
Shrivastava, A., Patel, V.~M., and Chellappa, R.
\newblock Non-linear dictionary learning with partially labeled data.
\newblock \emph{Pattern Recognition}, 48\penalty0 (11):\penalty0 3283--3292,
  2015.

\bibitem[Tang \& Zhang(2017)Tang and Zhang]{tang2017confidence}
Tang, C. and Zhang, M.
\newblock Confidence-rated discriminative partial label learning.
\newblock In \emph{Proceedings of 31st AAAI Conference on Artificial
  Intelligence (AAAI'17)}, pp.\  2611--2617, San Francisco, CA, 2017.

\bibitem[Vapnik(1998)]{vapnik1998statistical}
Vapnik, V.~N.
\newblock Statistical learning theory.
\newblock \emph{John Wiley \& Sons}, 1998.

\bibitem[Wang et~al.(2019)Wang, Liu, Zhao, Zhang, Hu, and
  Chen]{wang2019discriminative}
Wang, H., Liu, W., Zhao, Y., Zhang, C., Hu, T., and Chen, G.
\newblock Discriminative and correlative partial multi-label learning.
\newblock In \emph{Proceedings of 28th International Joint Conference on
  Artificial Intelligence (IJCAI'19)}, pp.\  3691--3697, Macao, China, 2019.

\bibitem[Xia et~al.(2019)Xia, Liu, Wang, Han, Gong, Niu, and
  Sugiyama]{xia2019anchor}
Xia, X., Liu, T., Wang, N., Han, B., Gong, C., Niu, G., and Sugiyama, M.
\newblock Are anchor points really indispensable in label-noise learning?
\newblock In \emph{Advances in Neural Information Processing Systems 32
  (NeurIPS'19)}, pp.\  6838--6849, Vancouver, Canada, 2019.

\bibitem[Yao et~al.(2020{\natexlab{a}})Yao, Chen, Deng, and
  Yang]{yao2020network}
Yao, Y., Chen, G., Deng, J., and Yang, J.
\newblock Network cooperation with progressive disambiguation for partial label
  learning.
\newblock \emph{arXiv preprint arXiv:2002.11919}, 2020{\natexlab{a}}.

\bibitem[Yao et~al.(2020{\natexlab{b}})Yao, Gong, Deng, Chen, Wu, and
  Yang]{yao2020deep}
Yao, Y., Gong, C., Deng, J., Chen, X., Wu, J., and Yang, J.
\newblock Deep discriminative cnn with temporal ensembling for
  ambiguously-labeled image classification.
\newblock In \emph{Proceedings of 34th AAAI Conference on Artificial
  Intelligence (AAAI'20)}, New York, NY, in press, 2020{\natexlab{b}}.

\bibitem[Yu \& Zhang(2017)Yu and Zhang]{yu2017maximum}
Yu, F. and Zhang, M.
\newblock Maximum margin partial label learning.
\newblock \emph{Machine Learning}, 106\penalty0 (4):\penalty0 573--593, 2017.

\bibitem[Yu et~al.(2018)Yu, Liu, Gong, and Tao]{yu2018learning}
Yu, X., Liu, T., Gong, M., and Tao, D.
\newblock Learning with biased complementary labels.
\newblock In \emph{Proceedings of the 15th European Conference on Computer
  Vision (ECCV'18)}, pp.\  68--83, Munich, Germany, 2018.

\bibitem[Zeng et~al.(2013)Zeng, Xiao, Jia, Chan, Gao, Xu, and
  Ma]{zeng13learning}
Zeng, Z., Xiao, S., Jia, K., Chan, T., Gao, S., Xu, D., and Ma, Y.
\newblock Learning by associating ambiguously labeled images.
\newblock In \emph{Proceedings of the 26th IEEE Conference on Computer Vision
  and Pattern Recognition (CVPR'13)}, pp.\  708--715, Portland, OR, 2013.

\bibitem[Zhang \& Yu(2015)Zhang and Yu]{zhang2015solving}
Zhang, M. and Yu, F.
\newblock Solving the partial label learning problem: An instance-based
  approach.
\newblock In \emph{Proceedings of the 24th International Joint Conference on
  Artificial Intelligence (IJCAI'15)}, pp.\  4048--4054, Buenos Aires,
  Argentina, 2015.

\bibitem[Zhang et~al.(2017)Zhang, Yu, and Tang]{zhang2017disambiguation}
Zhang, M., Yu, F., and Tang, C.
\newblock Disambiguation-free partial label learning.
\newblock \emph{IEEE Transactions on Knowledge and Data Engineering},
  29\penalty0 (10):\penalty0 2155--2167, 2017.

\bibitem[Zhou(2017)]{zhou2017brief}
Zhou, Z.
\newblock A brief introduction to weakly supervised learning.
\newblock \emph{National Science Review}, 5\penalty0 (1):\penalty0 44--53,
  2017.

\end{thebibliography}
\bibliographystyle{icml2020}

\newpage

\renewcommand\thesection{\Alph{section}}
\setcounter{section}{0}
\onecolumn
\icmltitle{Supplementary Material}

\section{Proof of Lemma~\ref{lemma:identify}}
\textbf{Cross-Entropy loss} 
\quad According to \cite{masnadi2009design}, since the $\ell_{\rm CE}$ is non-negative, minimizing the conditional risk $\EE_{p(y|x)}[\ell_{\rm CE}(\bm g(X),Y)|X], \forall X\in\X$ is an alternative of minimizing $\R(\bm g)$. The conditional risk can be written as
\begin{equation*}
\C(\bm g) = -\sum_{i=1}^{c}p(Y=i|X)\log(g_i(X)), \quad {\rm s.t.}\ \sum_{i=1}^{c}g_i(X)=1.
\end{equation*}
By the Lagrange Multiplier method \cite{bertsekas1997nonlinear}, we have
\begin{equation*}
\L = -\sum_{i=1}^{c}p(Y=i|X)\log(g_i(X)) + \lambda(\sum_{i=1}^{c}g_i(X)-1).
\end{equation*}
To minimize $\L$, we take the partial derivative of $\L$ with respect to $g_i$ and set it be $0$:
\begin{equation*}
g^*_i(X) = \frac{1}{\lambda} p(Y=i|X).
\end{equation*}
Because $\sum_{i=1}^{c}g^*_i(X)=1$ and $\sum_{i=1}^{c}g^*_i(X)=1$, we have
\begin{equation*}
\sum_{i=1}^{c}g^*_i(X)=\frac{1}{\lambda}\sum_{i=1}^{c}p(Y=i|X)=1.
\end{equation*}
Therefore, we can obtain $\lambda=1$ that ensures $g^*_i(X) = p(Y=i|X), \forall i\in [c], \forall X\in\X$, which concludes the proof. 

\textbf{Mean squared error loss} 
\quad Analogously, if the mean squared error loss is used, we can write the optimization problem as
\begin{equation*}
\C(\bm g) = \sum_{i=1}^{c}(p(Y=i|X)-g_i(X))^2, \quad {\rm s.t.}\ \sum_{i=1}^{c}g_i(X)=1.
\end{equation*}
By the Lagrange Multiplier method, we have
\begin{equation*}
\L = \sum_{i=1}^{c}(p(Y=i|X)-g_i(X))^2 - \lambda'(\sum_{i=1}^{c}g_i(X)-1).
\end{equation*}
By setting the derivative to 0, we obtain
\begin{equation*}
g^*_i(X) = \frac{\lambda'}{2}+p(Y=i|X).
\end{equation*}
Because $\sum_{i=1}^{c}g^*_i(X)=1$ and $\sum_{i=1}^{c}g^*_i(X)=1$, we have
\begin{equation*}
\sum_{i=1}^{c}g^*_i(X)=\frac{\lambda'c}{2}+\sum_{i=1}^{c}p(Y=i|X).
\end{equation*}
Since $c\neq0$, we can obtain $\lambda'=0$. In this way, $g^*_i(X)=p(Y=i|X), \forall i\in [c], \forall X\in\X$, which concludes the proof. \qed

\section{Proof of Theorem~\ref{thm:consistent}}

First we prove \bm $\bm g^*$ is the optimal classifier for PLL by substituting the $\bm g^*$ into the PLL risk estimator Eq.~(\ref{eq:risk-pll}):
\begin{align*}
\R_{\rm PLL}(\bm g^*) &= \EE_{(X,S)\sim p(x,s)}\min_{i\in S}\ell(\bm g^*(X), \bm{e}^i) = \int \sum_{S\in\S}\min_{i\in S}\ell(\bm g^*(X), \bm{e}^i) p(s|x)p(x)dX\\
&= \int \sum_{S\in\S}\min_{i\in S}\ell(\bm g^*(X), \bm{e}^i)\sum_{Y \in\Y}p(s,y|x)p(x)dX\\
&= \int \sum_{Y \in\Y}\sum_{S\in\S}\min_{i\in S}\ell(\bm g^*(X), \bm{e}^i)p(s|x,y)p(y|x)p(x)dX\\
&= \int \sum_{Y\in\Y}\sum_{S\in\S}\ell(\bm g^*(X), \bm{e}^{Y_X})p(s|x,y)p(y|x)p(x)dX \\
&= \int \sum_{Y\in\Y}\ell(\bm g^*(X), \bm{e}^{Y_X}) \sum_{S\in\S}p(s|x,y)p(y|x)p(x)dX\\
&=  \int \sum_{Y\in\Y}\ell(\bm g^*(X), \bm{e}^{Y_X})p(x,y)dX = \R(\bm g^*) = 0.
\end{align*}
where we have used $\min_{i\in S}\ell(\bm g^*(X), \bm{e}^i)=\ell(\bm g^*(X), \bm{e}^{Y_X})$ because $\ell$ is a proper loss and the derministic assumption is made. This indicates that the PLL risk has been minimized by $\bm g^*$.

On the other hand, we prove $\bm g^*$ is the only solution to Eq.~(\ref{eq:risk-pll}) by contradiction, namely, there is at least one other solution $\bm h$ enables $\R_{\rm PLL}(\bm h)=0$, and predicts different label $Y^{\bm h} \neq Y_X$ for at least one instance $X$. Hence for any $S\ni Y_X$ we have
\begin{equation*}
\min_{i\in S}\ell(\bm h(X), \bm{e}^i)=\ell(\bm h(X), \bm{e}^{Y^{\bm h}})=0.
\end{equation*} 
Nevertheless, the above equality is always true unless $Y^{\bm h}$ is invariably included in the candidate label set of $X$, i.e., ${\rm Pr}_{S \sim p(s|x,y)}(Y^{\bm h}\in S)=1$. 
Obviously, this contradicts the small ambiguity degree condition. Therefore, there is one, and only one minimizer of the PLL risk estimator, which is the same as the minimizer learned from ordinarily labeled data. The proof is complete. \qed

\section{Proof of Theorem~\ref{thm:errorbound}}

First, we show the uniform deviation bound, which is useful to derive the estimation error bound.
\begin{lemma}\label{lem3}
	For any $\delta>0$, we have with probability at least $1-\delta$,
	\begin{equation*}
	\sup\nolimits_{\bm g\in\G}\Big|\R_{\rm PLL}(\bm g) - \widehat\R_{\rm PLL}(\bm g)\Big| \le 2\Ra_n(\ell_{\rm PLL}\circ\G) + M\sqrt{\frac{\log (2/\delta)}{2n}}
	\end{equation*}
\end{lemma}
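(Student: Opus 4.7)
The plan is to follow the standard two-step Rademacher-complexity argument for uniform deviation bounds, adapting it to the PLL loss $\ell_{\rm PLL}(\bm g(X),S)=\min_{i\in S}\ell(\bm g(X),\bm e^i)$. Let $\Phi(\D_n):=\sup_{\bm g\in\G}|\R_{\rm PLL}(\bm g)-\widehat\R_{\rm PLL}(\bm g)|$ be regarded as a function of the i.i.d. sample $\D_n=\{(x_i,s_i)\}_{i=1}^n$.

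First, I would verify the bounded differences condition. If one example $(x_i,s_i)$ is replaced by any $(x_i',s_i')$, then $\widehat\R_{\rm PLL}$ changes by at most $M/n$ because the per-sample loss $\ell_{\rm PLL}(\bm g(X),S)=\min_{i\in S}\ell(\bm g(X),\bm e^i)$ inherits the bound $M$ from $\ell$ (a minimum of non-negative values bounded by $M$ is still in $[0,M]$). Taking a supremum over $\bm g\in\G$ of an $M/n$-Lipschitz functional preserves the constant, so $|\Phi(\D_n)-\Phi(\D_n')|\le M/n$.

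Second, I would apply McDiarmid's inequality twice to control each side of the absolute value. For the one-sided deviation $\sup_{\bm g}(\R_{\rm PLL}(\bm g)-\widehat\R_{\rm PLL}(\bm g))$ and its mirror, McDiarmid gives with probability at least $1-\delta/2$ each,
\begin{equation*}
\sup_{\bm g\in\G}\bigl(\R_{\rm PLL}(\bm g)-\widehat\R_{\rm PLL}(\bm g)\bigr)\le \EE\Bigl[\sup_{\bm g\in\G}(\R_{\rm PLL}(\bm g)-\widehat\R_{\rm PLL}(\bm g))\Bigr]+M\sqrt{\frac{\log(2/\delta)}{2n}},
\end{equation*}
and similarly for the reverse direction. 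A union bound and combining the two halves produces the absolute-value form with the advertised $M\sqrt{\log(2/\delta)/(2n)}$ concentration term.

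Third, I would bound the expected one-sided supremum by $2\Ra_n(\ell_{\rm PLL}\circ\G)$ via the standard symmetrization argument: introduce a ghost sample $\D_n'$, write $\R_{\rm PLL}(\bm g)=\EE_{\D_n'}[\widehat\R_{\rm PLL}'(\bm g)]$, pull the supremum inside the expectation by Jensen, then insert i.i.d.\ Rademacher variables $\sigma_i\in\{\pm1\}$ since the pairs $((x_i,s_i),(x_i',s_i'))$ are exchangeable. Splitting the resulting expression into two identical terms yields the factor $2$, recovering $2\Ra_n(\ell_{\rm PLL}\circ\G)$. Combining this with the McDiarmid step completes the proof.

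I do not anticipate a major obstacle: the only PLL-specific check is that $\ell_{\rm PLL}$ retains the boundedness $M$ and that the loss class $\ell_{\rm PLL}\circ\G$ is well-defined so that its Rademacher complexity makes sense. The remaining work is entirely standard. The only mild subtlety is the factor-of-two splitting for the two-sided bound, which determines whether the constant in the concentration term is $\log(2/\delta)$ (two-sided via union bound) or $\log(1/\delta)$ (direct McDiarmid on the absolute-value functional); the lemma as stated uses the former, so the split-then-union approach is the one to execute.
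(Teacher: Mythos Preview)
Your proposal is correct and mirrors the paper's proof essentially step for step: the paper also verifies the $M/n$ bounded-differences condition, applies McDiarmid's inequality to each one-sided supremum with probability $1-\delta/2$, bounds the expected one-sided supremum by $2\Ra_n(\ell_{\rm PLL}\circ\G)$ via symmetrization, and then combines the two sides by a union bound. There is no substantive difference in approach.
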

\begin{proof}
	Consider the one-side uniform deviation $\sup\nolimits_{\bm g\in\G}\R_{\rm PLL}(\bm g) - \widehat\R_{\rm PLL}(\bm g)$. Since the loss function $\ell$ is upper-bounded by $M$, the
	change of it will be no more than $M/n$ after replacing some $x$. Then, by \emph{McDiarmid's inequality} \cite{mcdiarmid1989method}, for any $\delta>0$, with probability at least $1-\delta/2$, the following holds:
	\begin{equation*}
	\sup\nolimits_{\bm g\in\G}\R_{\rm PLL}(\bm g)-\widehat\R_{\rm PLL}(\bm g) \le \EE\Big[\sup\nolimits_{\bm g\in\G}\R_{\rm PLL}(\bm g)-\widehat\R_{\rm PLL}(\bm g)\Big] + M\sqrt{\frac{\log(2/\delta)}{2n}}.
	\end{equation*}
	By \emph{symmetrization} \cite{vapnik1998statistical}, it is a routine work to show that
	\begin{equation*}
	\EE\Big[\sup\nolimits_{\bm g\in\G}\R_{\rm PLL}(\bm g)-\widehat\R_{\rm PLL}(\bm g)\Big] \le 2\Ra_n(\ell_{\rm PLL}\circ\G).
	\end{equation*}
	The one-side uniform deviation $\sup\nolimits_{\bm g\in\G}\widehat\R_{\rm PLL}(\bm g)-\R_{\rm PLL}(\bm g)$ can be bounded similarly.
\end{proof}

Then we upper bound $\Ra_n(\ell_{\rm PLL}\circ\G)$.
\begin{lemma}\label{lem4}
	Suppose $\ell_{\rm PLL}$ is defined as Eq.~(\ref{eq:loss-pll}), it holds that
	\begin{equation*}
	\Ra_n (\ell_{\rm PLL}  \circ \G) \le c\Ra_n (\ell \circ \G) \le \sqrt{2}cL_{\ell}\sum_{y=1}^c \Ra_n(\G_y).
	\end{equation*}	
\end{lemma}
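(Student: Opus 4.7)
The plan is to prove the two inequalities separately. The first reduces the complexity of the min-over-candidates loss to a sum of $c$ per-label complexities; the second applies a vector-valued contraction to obtain the $\sqrt{2}L_\ell$ factor and decompose into the coordinate classes $\G_y$.

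For the first inequality, I would start from the pointwise identity $\min(a,b) = (a+b-|a-b|)/2$. Combining subadditivity of Rademacher complexity with Talagrand's contraction lemma applied to $|\cdot|$ (which is $1$-Lipschitz and vanishes at $0$) yields
\begin{equation*}
\Ra_n(\min(\mathcal{F}_1,\mathcal{F}_2)) \leq \Ra_n(\mathcal{F}_1) + \Ra_n(\mathcal{F}_2),
\end{equation*}
and induction on the number of classes extends this to $\Ra_n(\min_{j\in[c]}\mathcal{F}_j) \leq \sum_{j=1}^c \Ra_n(\mathcal{F}_j)$. Because the minimum in $\ell_{\rm PLL}$ is taken over the sample-dependent subset $s_k$ rather than over all of $[c]$, I would pad: set $\tilde\ell_{j,k}(\bm g) = \ell(\bm g(x_k),\bm e^j)$ when $j\in s_k$ and $\tilde\ell_{j,k}(\bm g)=M$ otherwise. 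Since $M$ is the uniform upper bound on $\ell$, we have $\min_{j\in[c]}\tilde\ell_{j,k} = \min_{j\in s_k}\ell(\bm g(x_k),\bm e^j)$, so $\ell_{\rm PLL}\circ\G$ embeds (with $\bm g$ shared across $j$) into the min-class of the padded per-label families. Monotonicity of $\Ra_n$ lets me decouple the shared $\bm g$; the padding constant $M$ vanishes in the per-label Rademacher sum because $\bm g$-independent shifts have zero expectation; and the restriction to indices $k$ with $j\in s_k$ can be removed by appending the missing terms, which (by Jensen's inequality applied to the extra $\sigma_k$'s) can only increase the sup expectation. Hence each per-label complexity is bounded by $\Ra_n(\ell\circ\G)$, producing the factor $c$.

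For the second inequality, I would invoke the vector-valued contraction lemma of Maurer: for any map $\phi:\RR^c\to\RR$ that is $L$-Lipschitz with respect to $\|\cdot\|_2$, and any class $\G$ of $\RR^c$-valued functions,
\begin{equation*}
\EE_\sigma \sup_{\bm g\in\G} \sum_k \sigma_k \phi(\bm g(x_k)) \leq \sqrt{2}\, L\, \EE_{\sigma'} \sup_{\bm g\in\G} \sum_{k=1}^n \sum_{j=1}^c \sigma'_{kj}\, g_j(x_k).
\end{equation*}
By assumption, $\ell(\cdot,\bm e^y)$ is $L_\ell$-Lipschitz for every $y$. Since $\G=\oplus_y\G_y$, the sup on the right separates across coordinates, giving $\sum_{y=1}^c \Ra_n(\G_y)$ after division by $n$. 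This produces $\Ra_n(\ell\circ\G) \leq \sqrt{2}L_\ell \sum_{y=1}^c \Ra_n(\G_y)$, and multiplying by $c$ yields the claimed second inequality.

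The main obstacle is the first inequality, specifically the care required to deal with the sample-dependent candidate set $s_k$ without introducing spurious factors. The padding approach is natural but each of the reductions (decoupling the shared $\bm g$ via monotonicity, absorbing the constant $M$, and extending the partial Rademacher sum to a full one) needs a short justification. The second inequality is then a routine application of Maurer's vector-contraction lemma.
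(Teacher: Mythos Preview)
Your proposal is correct and follows essentially the same approach as the paper: both use the identity $\min(a,b)=\tfrac12(a+b-|a-b|)$ together with Talagrand's contraction for $|\cdot|$ and induction on $c$ for the first inequality, and Maurer's vector contraction lemma for the second. Your padding-by-$M$ argument to handle the sample-dependent candidate set $s_k$ is in fact more careful than the paper's treatment, which simply replaces $\min_{y\in S_i}$ by $\min_{y\in[c]}$ without justification.
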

\begin{proof}
	By definition of $\ell_{\rm PLL}$, $\ell_{\rm PLL} \circ \G(x_i,S_i) = \min_{y\in S} \ell\circ \G(x_i, y) = \min_{y\in [c]} \ell\circ \G(x_i, y)$. Given sample sized $n$, we first prove the result in the case $c=2$. The $\min$ operator can be written as
	\begin{equation*}
	\min\{z_1,z_2\} = \frac{1}{2} \big[z_1+z_2-|z_1-z_2|\big].
	\end{equation*}
	In this way, we can write
	\begin{align}\label{eq:rademacher1}
	\Ra_n (\ell_{\rm PLL} \circ \G) &= \EE_{\bm\sigma}  \left[\sup_{\bm g\in\G} \frac{1}{n}\sum_{i=1}^n \sigma_i \ell(\bm g(x_i),s_i))\right]\\\nonumber
	&= \EE_{\bm\sigma} \left[\sup_{\bm g\in\G} \frac{1}{n}\sum_{i=1}^n \sigma_i \min\{\ell(\bm g(x_i),y_1),\ell(\bm g(x_i),y_2)\}\right]\\\nonumber
	&= \EE_{\bm\sigma} \left[\sup_{\bm g\in\G} \frac{1}{2n}\sum_{i=1}^n \sigma_i \Big[\ell(\bm g(x_i),y_1)+\ell(\bm g(x_i),y_2)-|\ell(\bm g(x_i),y_1)-\ell(\bm g(x_i),y_2)|\Big]\right]\\\nonumber
	&\le \EE_{\bm\sigma} \left[\sup_{\bm g\in\G} \frac{1}{2n}\sum_{i=1}^n \sigma_i \ell(\bm g(x_i),y_1)\right]+\EE_{\bm\sigma} \left[\sup_{\bm g\in\G} \frac{1}{2n}\sum_{i=1}^n \sigma_i \ell(\bm g(x_i),y_2)\right]\\\nonumber
	&+ \EE_{\bm\sigma} \left[\sup_{\bm g\in\G} \frac{1}{2n}\sum_{i=1}^n \sigma_i\Big|\ell(\bm g(x_i),y_1)-\ell(\bm g(x_i),y_2)\Big|\right]\\\nonumber
	&=\frac{1}{2}\Big(\Ra_n(\ell \circ \G)+\Ra_n(\ell \circ \G)\Big)+\EE_{\bm\sigma} \left[\sup_{\bm g\in\G} \frac{1}{2n}\sum_{i=1}^n \sigma_i\Big|\ell(\bm g(x_i),y_1)-\ell(\bm g(x_i),y_2))\Big|\right].
	\end{align}
	Since $x \mapsto |x|$ is a 1-Lipschitz function, by \emph{Talagrand's contraction lemma} \cite{ledoux2013probability}, the last term can be bounded:
	\begin{align}\label{eq:rademacher2}
	&\EE_{\bm\sigma} \left[\sup_{\bm g\in\G} \frac{1}{2n}\sum_{i=1}^n \sigma_i\Big|\ell(\bm g(x_i),y_1)-\ell(\bm g(x_i),y_2))\Big|\right]\\\nonumber
	\le& \EE_{\bm\sigma} \left[\sup_{\bm g\in\G} \frac{1}{2n}\sum_{i=1}^n \sigma_i \Big(\ell(\bm g(x_i),y_1)-\ell(\bm g(x_i),y_2))\Big)\right] \le \frac{1}{2}\Big(\Ra_n(\ell \circ \G)+\Ra_n(\ell\circ\G)\Big).
	\end{align}
	
	Combining Eq.~(\ref{eq:rademacher1}) and Eq.~(\ref{eq:rademacher2}) yields $\Ra_n(\ell_{\rm PLL} \circ\G) \le \Ra_n(\ell \circ \G)+\Ra_n(\ell \circ\G)$. The general case can be derived from the case $c=2$ using $\min\{z_1,\ldots,z_c\}=\min\{z_1,\min\{z_2,\ldots,z_c\}\}$ and an immediate recurrence. 
	
	Then we apply the Rademacher vector contraction inequality \cite{maurer2016vector},
	\begin{equation*}
	\Ra_n (\ell\circ\G) \leq \sqrt{2}L_{\ell}\sum_{y=1}^c \Ra_n(\G_y).
	\end{equation*}
	The proof is completed.
\end{proof}

Based on Lemma~\ref{lem3} and~\ref{lem4}, the estimation error bound Eq.~(\ref{eq:errorbound}) is proven through
\begin{align*}
\R_{\rm PLL}(\widehat{\bm g}_{\rm PLL})-\R_{\rm PLL}(\bm g_{\rm PLL}^*) &=\left (\R_{\rm PLL}(\widehat{\bm g}_{\rm PLL})-\widehat\R_{\rm PLL}(\widehat{\bm g}_{\rm PLL})\right)+\left(\widehat\R_{\rm PLL}(\widehat{\bm g}_{\rm PLL})-\widehat\R_{\rm PLL}(\bm g_{\rm PLL}^*)\right)\\
&+\left(\widehat\R_{\rm PLL}(\bm g_{\rm PLL}^*)-\R_{\rm PLL}(\bm g^*)\right)\\
&\le \left (\R_{\rm PLL}(\widehat{\bm g}_{\rm PLL})-\widehat\R_{\rm PLL}(\widehat{\bm g}_{\rm PLL})\right)+\left(\widehat\R_{\rm PLL}(\bm g_{\rm PLL}^*)-\R_{\rm PLL}(\bm g_{\rm PLL}^*)\right)\\
&\le 2\sup_{\bm g\in \G}\Big|\R_{\rm PLL}(\bm g)- \widehat\R_{\rm PLL}(\bm g)\Big|\\
&\le 4\sqrt{2}cL_{\ell}\sum_{y=1}^c \Ra_n(\G_y) + 2M\sqrt{\frac{\log (2/\delta)}{2n}}.
\end{align*}\qed

\section{Supplementary Theorem on Section~\ref{sec:method}}

\begin{theorem}
	The learning objective in \citet{jin2003learning} is a special case of Eq.~(\ref{eq:softmin}).
\end{theorem}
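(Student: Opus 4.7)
The plan is to exhibit an explicit choice of the decomposable loss $\ell$ and of the weights $\{{\rm w}_{ij}\}$ under which Eq.~(\ref{eq:softmin}) reduces, up to a $\bm g$-independent additive constant, to the empirical objective of \citet{jin2003learning}. Since the claim is one of \emph{inclusion as a special case}, it is enough to produce one valid specialization rather than match the update rule.

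First, I would recall that \citet{jin2003learning} take the per-example PLL loss to be $\ell_{\rm PLL}(\bm g(X),S) = {\rm KL}\big[\widehat{\bm p}(y|X)\,\|\,\bm g(X)\big]$, where $\widehat{\bm p}(\cdot|X)$ is the currently estimated distribution over labels, supported on the candidate set (i.e.\ $\widehat p(y=j|X)=0$ for $j\notin S$). Expanding,
$$\mathrm{KL}\!\left[\widehat{\bm p}(y|X)\,\|\,\bm g(X)\right] = \sum_{j=1}^c \widehat p(y=j|X)\log\widehat p(y=j|X) - \sum_{j=1}^c \widehat p(y=j|X)\log g_j(X),$$
and the first term is constant in $\bm g$. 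Hence minimizing their objective is equivalent to minimizing $-\sum_{j\in S}\widehat p(y=j|X)\log g_j(X)$ per example.

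Next, I would substitute the decomposable cross-entropy loss $\ell_{\rm CE}(g_j(X), e_j^{s_i}) = -e_j^{s_i}\log g_j(X)$ into Eq.~(\ref{eq:softmin}). Because $e_j^{s_i}=\mathbb{I}[j\in s_i]$, the inner sum collapses to
$$\widehat{\R}_{\rm PLL} = -\frac{1}{n}\sum_{i=1}^n\sum_{j\in s_i}{\rm w}_{ij}\log g_j(x_i).$$
Setting ${\rm w}_{ij}=\widehat p(y=j|x_i)$ for $j\in s_i$ and ${\rm w}_{ij}=0$ otherwise then makes $\widehat{\R}_{\rm PLL}$ agree, up to the dropped constant, with the averaged Jin--Ghahramani objective over the training set; these weights are valid because $\widehat{\bm p}(\cdot|x_i)$ is a probability mass function supported on $s_i$, so ${\bf w}_i\in\Delta^{c-1}$.

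The only subtlety I anticipate is that \citet{jin2003learning} refresh $\widehat{\bm p}(y|X)$ via an E-step, whereas Eq.~(\ref{eq:softmin}) leaves the weight-update rule unspecified. This is a non-issue for the theorem as stated, since it concerns only the form of the objective: instantiating the weights to $\widehat p(y=j|x_i)$ at any stage of their EM procedure recovers their objective from Eq.~(\ref{eq:softmin}) verbatim, completing the specialization.
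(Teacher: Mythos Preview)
Your proposal is correct and follows essentially the same route as the paper: specialize $\ell$ to the per-coordinate cross-entropy, set ${\rm w}_{ij}=\widehat p(y=j\mid x_i)$ on $s_i$ (and $0$ off it), and observe that Eq.~(\ref{eq:softmin}) then equals the Jin--Ghahramani KL objective up to the $\bm g$-independent entropy term. Your treatment is slightly more explicit than the paper's in handling the additive constant and in checking ${\bf w}_i\in\Delta^{c-1}$, but the argument is the same.
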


\begin{proof}
	Recall the learning objective in \citet{jin2003learning} is formulated as:
	\begin{equation}\label{eq:jin-obj}
	\widehat\R_{\rm PLL} = \frac{1}{n} \sum_{i=1}^n {\rm KL}\big[\bm z_{i} ||\bm g(x_i) \big] = \frac{1}{n} \sum_{i=1}^n \bm z_{i} \log\frac{\bm z_{i}}{\bm g(x_i)},
	\end{equation}
	where KL divergence is used and $\bm z_i$ represents the prior probability of $x_i$.
	
	Then in Eq.~(\ref{eq:softmin}), the loss function can be specified as the cross-entropy loss: $\ell_{\rm CE}(g_j(x_i), e_j^{s_i})=-e_j^{s_i}\log(g_j(x_i))$, which is linear in the second term, i.e.,
	\begin{equation*}
	\frac{1}{n}\sum_{i=1}^n\sum_{j=1}^c {\rm w}_{ij}\ell_{\rm CE}(g_j(x_i), e_j^{s_i}) = \frac{1}{n}\sum_{i=1}^n\sum_{j=1}^c \ell_{\rm CE}(g_j(x_i), {\rm w}_{ij}e_j^{s_i}).
	\end{equation*} 
	Thus, the weights ${\bf w}$ can be moved into the loss function and yields:
	\begin{equation}\label{eq:softmin-ce}
	\widehat\R_{\rm PLL} = \frac{1}{n} \sum_{i=1}^n\sum_{j=1}^c \ell_{\rm CE}( g_j(x_i), z_{ij}) = -\frac{1}{n} \sum_{i=1}^n \bm z_{i} \log\bm g(x_i),
	\end{equation} 
	where $z_{ij} = {\rm w}_{ij}\II(j \in s_i)$. The optimal $\bm g^*$ of Eq.~(\ref{eq:softmin-ce}) is essentially equivalent to $\bm g^*$ learned from Eq.~(\ref{eq:jin-obj}). Therefore, our method is a strict extension of \cite{jin2003learning}.
\end{proof}

\section{Benchmark Datasets}
\subsection{Setup}

\begin{table*}[t]
	\caption{Summary of benchmark datasets and models.}
	\label{tab:benchmarks}
	\vskip 0.15in
	\begin{center}	
		\renewcommand\arraystretch{1.2}
		\begin{small}
			\setlength{\tabcolsep}{3.5mm}{
				\begin{tabular}{c|cccc|c}
					\toprule
					Dataset & \# Train & \# Test & \# Feature & \# Class & Model $\bm g(x;\Theta)$ \\
					\midrule
					MNIST & 60,000 & 10,000 & 784 & 10 & Linear model, MLP (depth 5)\\
					Fashion-MNIST & 60,000 & 10,000 & 784 & 10 & Linear model, MLP (depth 5) \\
					Kuzushiji-MNIST & 60,000 & 10,000 & 784 & 10 & Linear model, MLP (depth 5) \\
					CIFAR-10 & 50,000 & 10,000 & 3,072 & 10 & ConvNet \cite{laine17temporal}, ResNet \cite{he2016deep} \\
					\bottomrule
			\end{tabular}}
		\end{small}
	\end{center}
	\vskip -0.1in
\end{table*}

Tabel~\ref{tab:benchmarks} describes the benchmark datasets and the corresponding models of them.

\textbf{MNIST}
\quad This is a grayscale image dataset of handwritten digits from 0 to 9 where the size of the images is $28 \times 28$. 

The linear model is a linear-in-input model: $d$-10, and MLP refers to a 5-layer FC with ReLU as the activation function: $d$-300-300-300-300-10. Batch normalization \cite{ioffe2015batch} was applied before hidden layers. For both models, the softmax function was applied to the output layer, and $\ell_2$-regularization was added. The two models were trained by SGD with the default momentum parameter ($\beta = 0.9$), and the batch size was set to 256.

\textbf{Fashion-MNIST}
\quad This is a grayscale image dataset similarly to MNIST. In Fashion-MNIST, each instance is a $28 \times 28$ grayscale image and associated with a label from 10 fashion item classes.
The models and optimizer were the same as MNIST.

\textbf{Kuzushiji-MNIST}
\quad
This is another grayscale image dataset similarly to MNIST. In Kuzushiji-MNIST, each instance is a $28 \times 28$ grayscale image and associated with a label from 10 cursive Japanese (Kuzushiji) characters. 
The models and optimizer were the same as MNIST.

\textbf{CIFAR-10}
\quad
This dataset consists of 60,000 $32 \times 32 \times 3$ colored image in RGB format in 10 classes.

The detailed architecture of ConvNet \cite{laine17temporal} is as follows.

\quad\ \ \ \ \   0th (input) layer: \ \ (32*32*3)-
{\setlength{\parskip}{0em}
	
	\quad\ \ \ \ \  1st to 4th layers: \ \ [C(3*3, 128)]*3-Max Pooling-
	
	\quad\ \ \ \ \  5th to 8th layers: \ \ [C(3*3, 256)]*3-Max Pooling-
	
	\quad\ \ \   9th to 11th layers: \ \ C(3*3, 512)-C(3*3, 256)-C(3*3, 128)-
	
	\quad\quad\quad\quad\  12th layers: \ \ Average Pooling-10
}

where C(3*3, 128) means 128 channels of 3*3 convolutions followed by Leaky-ReLU (LReLU) active function \cite{maas2013rectifier}, [ · ]*3 means 3 such layers, etc.

The detailed architecture of ResNet \cite{he2016deep} was as follows.

\quad\ \ \ \ \   0th (input) layer: \ \ (32*32*3)-
{\setlength{\parskip}{0em}
	
	\quad\ \ \ \  1st to 11th layers: \ \ C(3*3, 16)-[C(3*3, 16), C(3*3, 16)]*5-
	
	\quad\ \  12th to 21st layers: \ \  [C(3*3, 32), C(3*3, 32)]*5-
	
	\quad\  22nd to 31st layers: \ \ [C(3*3, 64), C(3*3, 64)]*5-
	
	\quad\quad\quad\quad \ \  32nd layer: \ \ Average Pooling-10
}

where [ ·, · ] means a building block \cite{he2016deep}. These two models were trained by SGD with the default momentum parameter and the batch size was 256.

An example of a binomial flipping with $q=0.1$ and of a pair flipping with $q=0.5$ used on MNIST are below, respectively:
\begin{equation*}
\begin{scriptsize}\begin{bmatrix}
1 & 0.1 & 0.1 & 0.1 & 0.1 & 0.1 & 0.1 & 0.1 & 0.1 & 0.1 \\ 
0.1 & 1 & 0.1 & 0.1 & 0.1 & 0.1 & 0.1 & 0.1 & 0.1 & 0.1 \\ 
0.1 & 0.1 & 1 & 0.1 & 0.1 & 0.1 & 0.1 & 0.1 & 0.1 & 0.1 \\ 
0.1 & 0.1 & 0.1 & 1 & 0.1 & 0.1 & 0.1 & 0.1 & 0.1 & 0.1 \\ 
0.1 & 0.1 & 0.1 & 0.1 & 1 & 0.1 & 0.1 & 0.1 & 0.1 & 0.1 \\ 
0.1 & 0.1 & 0.1 & 0.1 & 0.1 & 1 & 0.1 & 0.1 & 0.1 & 0.1 \\ 
0.1 & 0.1 & 0.1 & 0.1 & 0.1 & 0.1 & 1 & 0.1 & 0.1 & 0.1\\ 
0.1 & 0.1 & 0.1 & 0.1 & 0.1 & 0.1 & 0.1 & 1 & 0.1 & 0.1 \\ 
0.1 & 0.1 & 0.1 & 0.1 & 0.1 & 0.1 & 0.1 & 0.1 & 1 & 0.1 \\ 
0.1 & 0.1 & 0.1 & 0.1 & 0.1 & 0.1 & 0.1 & 0.1 & 0.1 & 1 
\end{bmatrix}\end{scriptsize} \quad\quad
\begin{scriptsize}\begin{bmatrix}
1 & 0.5 & 0 & 0 & 0 & 0 & 0 & 0 & 0 & 0 \\ 
0 & 1 & 0.5 & 0 & 0 & 0 & 0 & 0 & 0 & 0 \\ 
0 & 0 & 1 & 0.5 & 0 & 0 & 0 & 0 & 0 & 0 \\ 
0 & 0 & 0 & 1 & 0.5 & 0 & 0 & 0 & 0 & 0 \\ 
0 & 0 & 0 & 0 & 1 & 0.5 & 0 & 0 & 0 & 0 \\ 
0 & 0 & 0 & 0 & 0 & 1 & 0.5 & 0 & 0 & 0 \\ 
0 & 0 & 0 & 0 & 0 & 0 & 1 & 0.5 & 0 & 0 \\ 
0 & 0 & 0 & 0 & 0 & 0 & 0 & 1 & 0.5 & 0 \\ 
0 & 0 & 0 & 0 & 0 & 0 & 0 & 0 & 1 & 0.5 \\ 
0.5 & 0 & 0 & 0 & 0 & 0 & 0 & 0 & 0 & 1 
\end{bmatrix}\end{scriptsize}
\end{equation*}

\subsection{Transductive Results}

Figure~\ref{fig:benchmark_binomial_transductive} illustrates the transductive results on the benchmark datasets, i.e., the ability in identifying the true labels in the training set. We can see that PRODEN has a strong ability to find the true labels.

\begin{figure*}[!t]
	\vskip 0.2in
	\subfigure{
		\begin{minipage}[b]{0.24\columnwidth}
			\centering
			\includegraphics[width=1.6in]{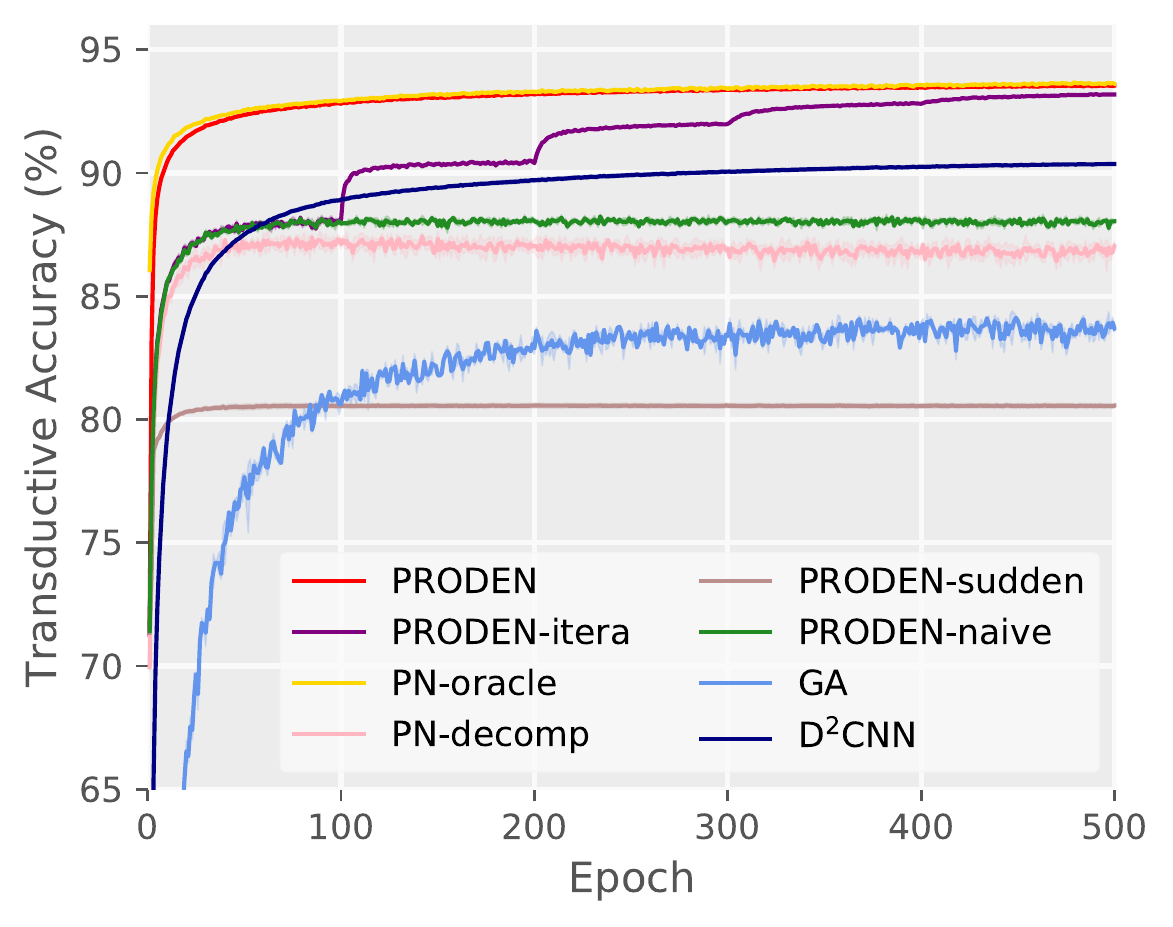}
			\centerline{MNIST, Linear, $q=0.1$}
	\end{minipage}}
	\subfigure{
		\begin{minipage}[b]{0.24\columnwidth}
			\centering
			\includegraphics[width=1.6in]{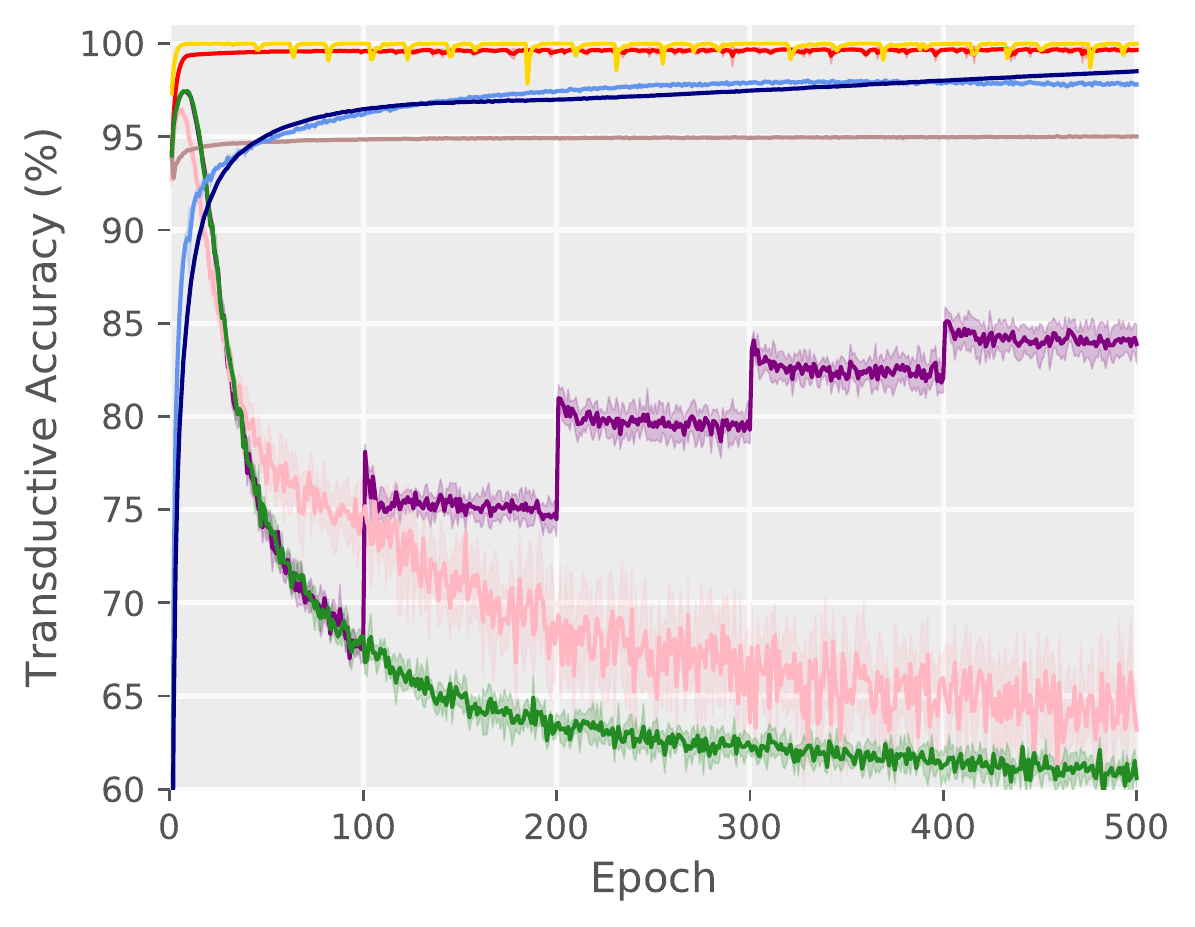}
			\centerline{\quad MNIST, MLP, $q=0.1$}
	\end{minipage}}
	\subfigure{
		\begin{minipage}[b]{0.24\columnwidth}
			\centering
			\includegraphics[width=1.6in]{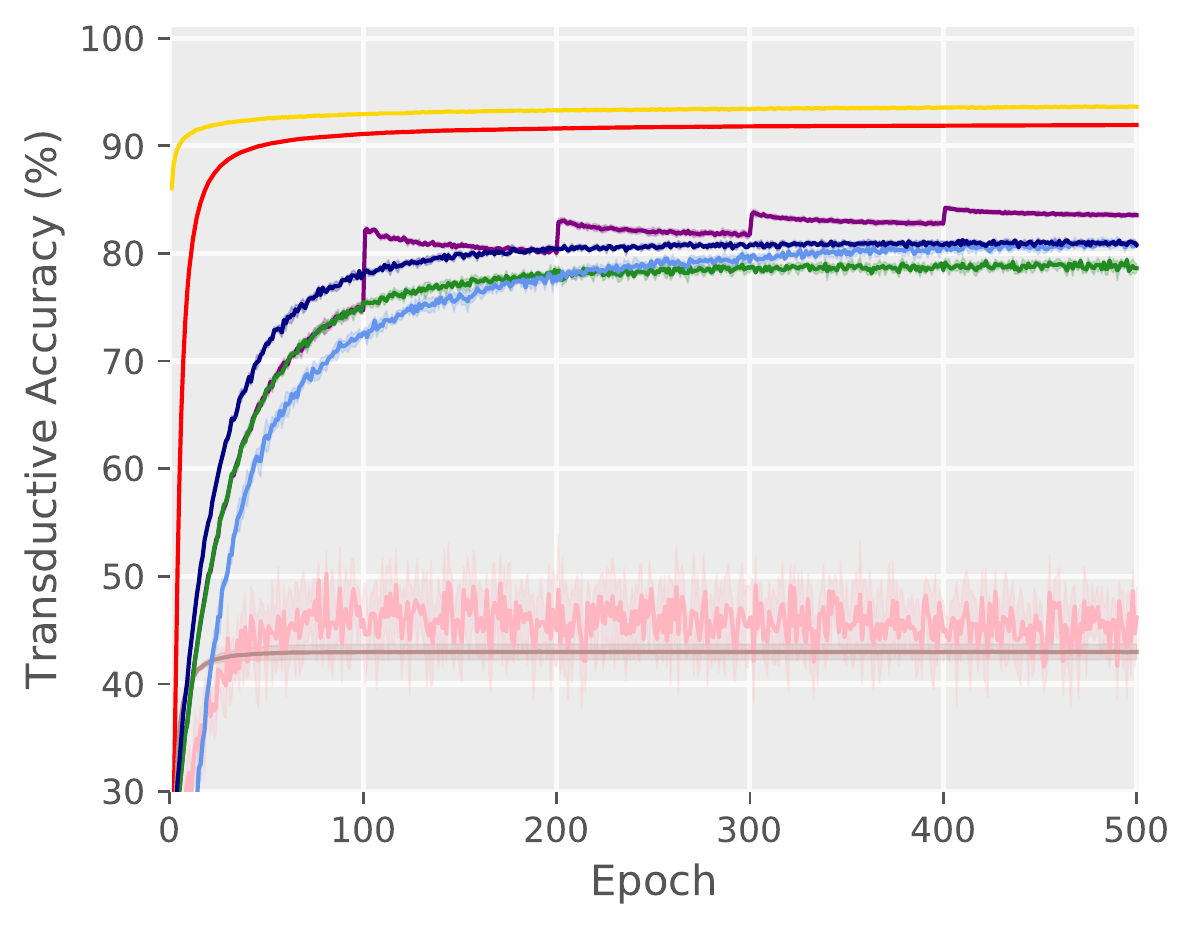}
			\centerline{\quad MNIST, Linear, $q=0.7$}
	\end{minipage}}
	\subfigure{
		\begin{minipage}[b]{0.24\columnwidth}
			\centering
			\includegraphics[width=1.6in]{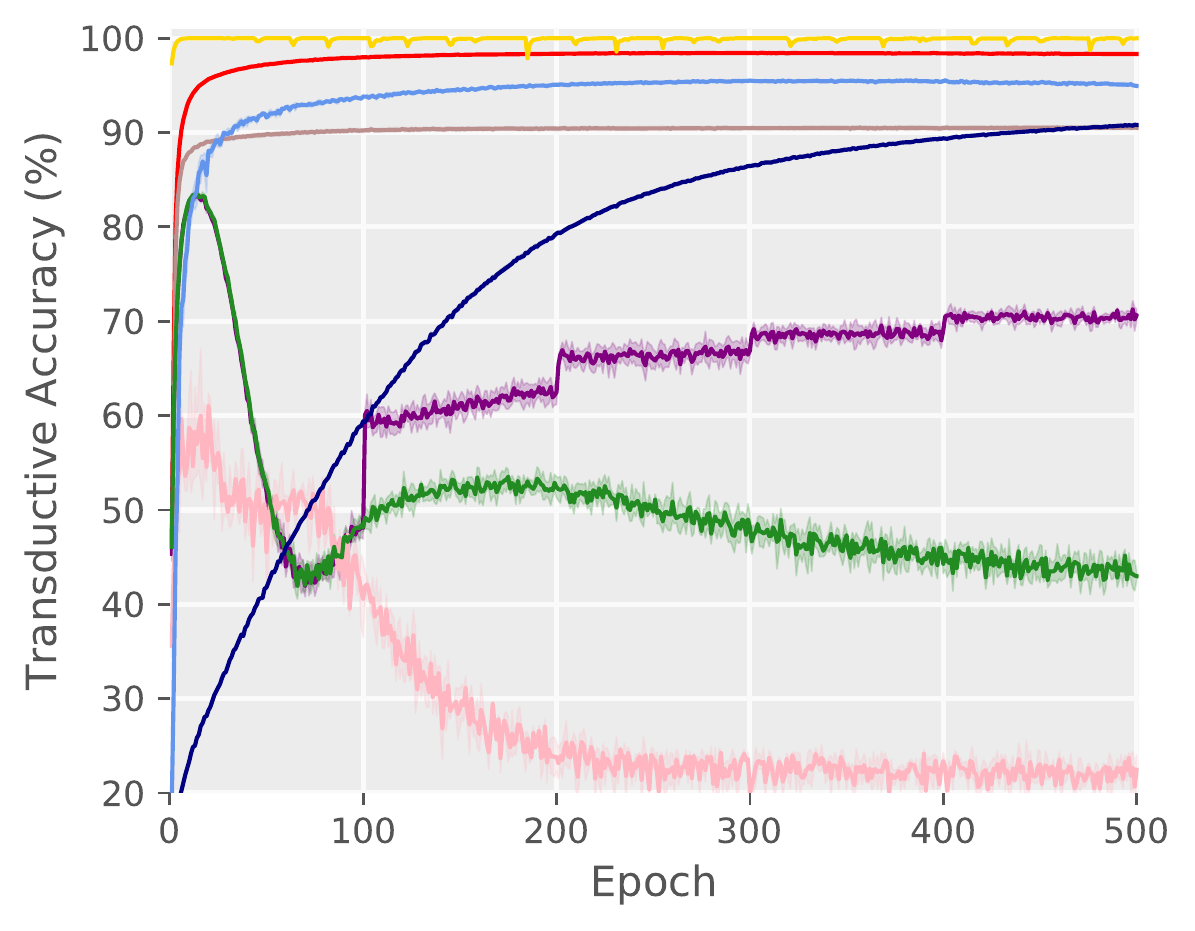}
			\centerline{\quad MNIST, MLP, $q=0.7$}
	\end{minipage}}
	
	\subfigure{
		\begin{minipage}[b]{0.24\columnwidth}
			\centering
			\includegraphics[width=1.6in]{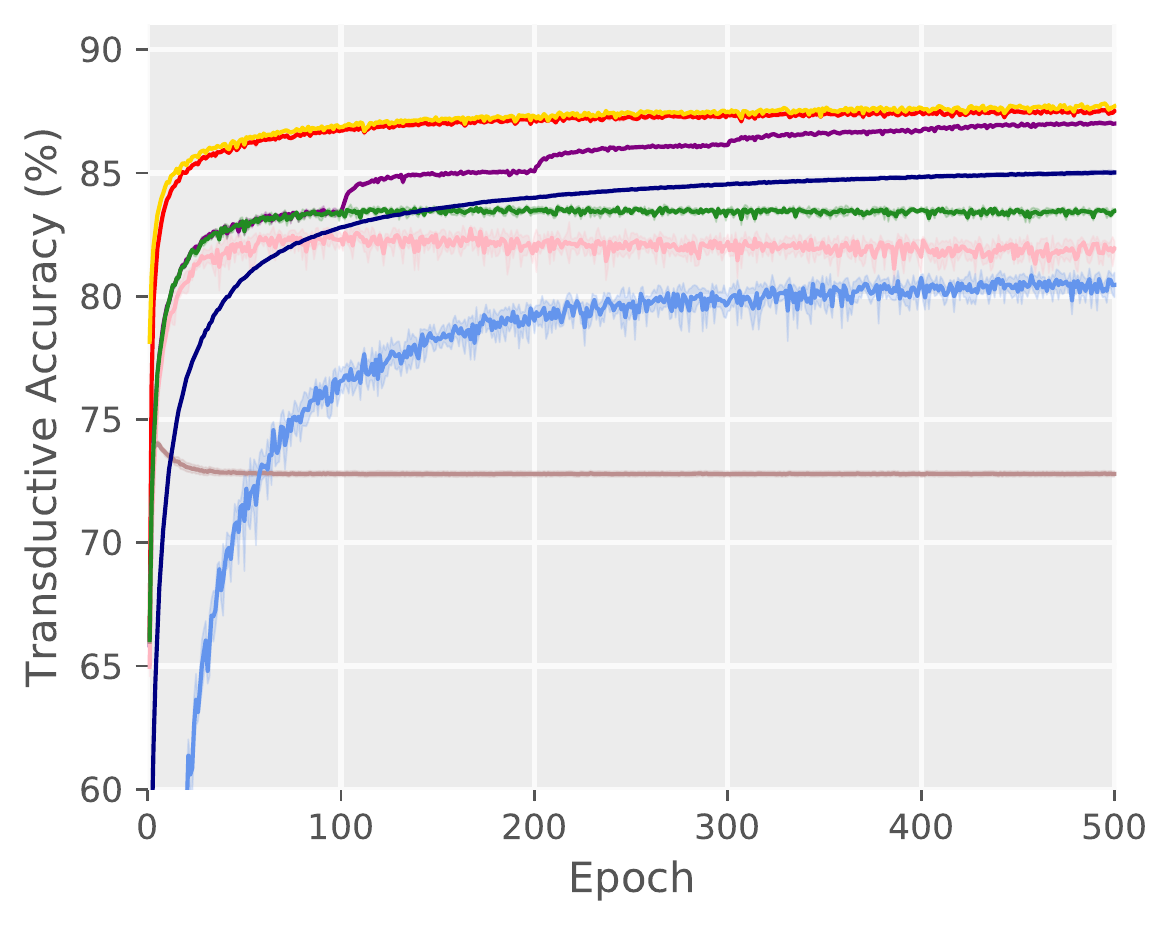}
			\centerline{Fashion, Linear, $q=0.1$}
	\end{minipage}}
	\subfigure{
		\begin{minipage}[b]{0.24\columnwidth}
			\centering
			\includegraphics[width=1.6in]{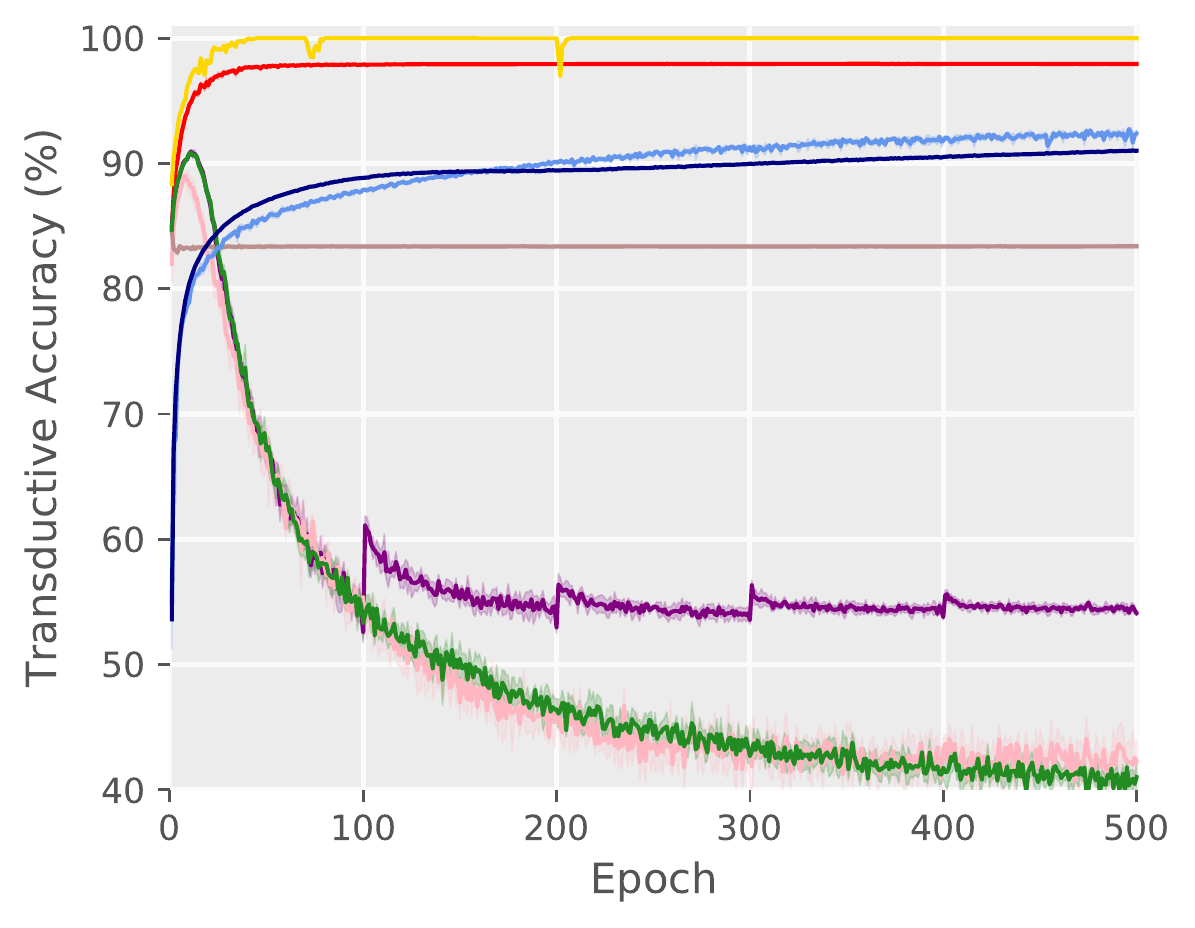}
			\centerline{\quad Fashion, MLP, $q=0.1$}
	\end{minipage}}
	\subfigure{
		\begin{minipage}[b]{0.24\columnwidth}
			\centering
			\includegraphics[width=1.6in]{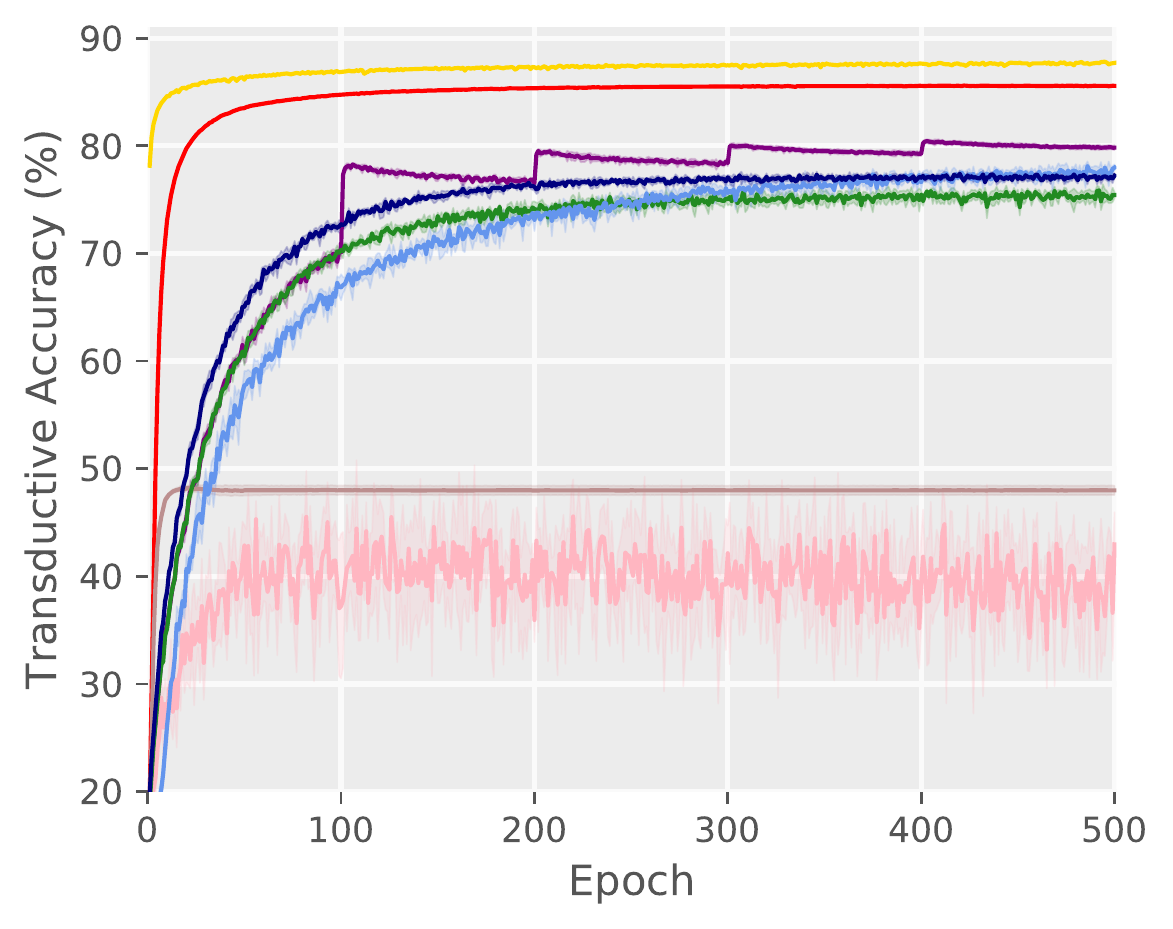}
			\centerline{\quad Fashion, Linear, $q=0.7$}
	\end{minipage}}
	\subfigure{
		\begin{minipage}[b]{0.24\columnwidth}
			\centering
			\includegraphics[width=1.6in]{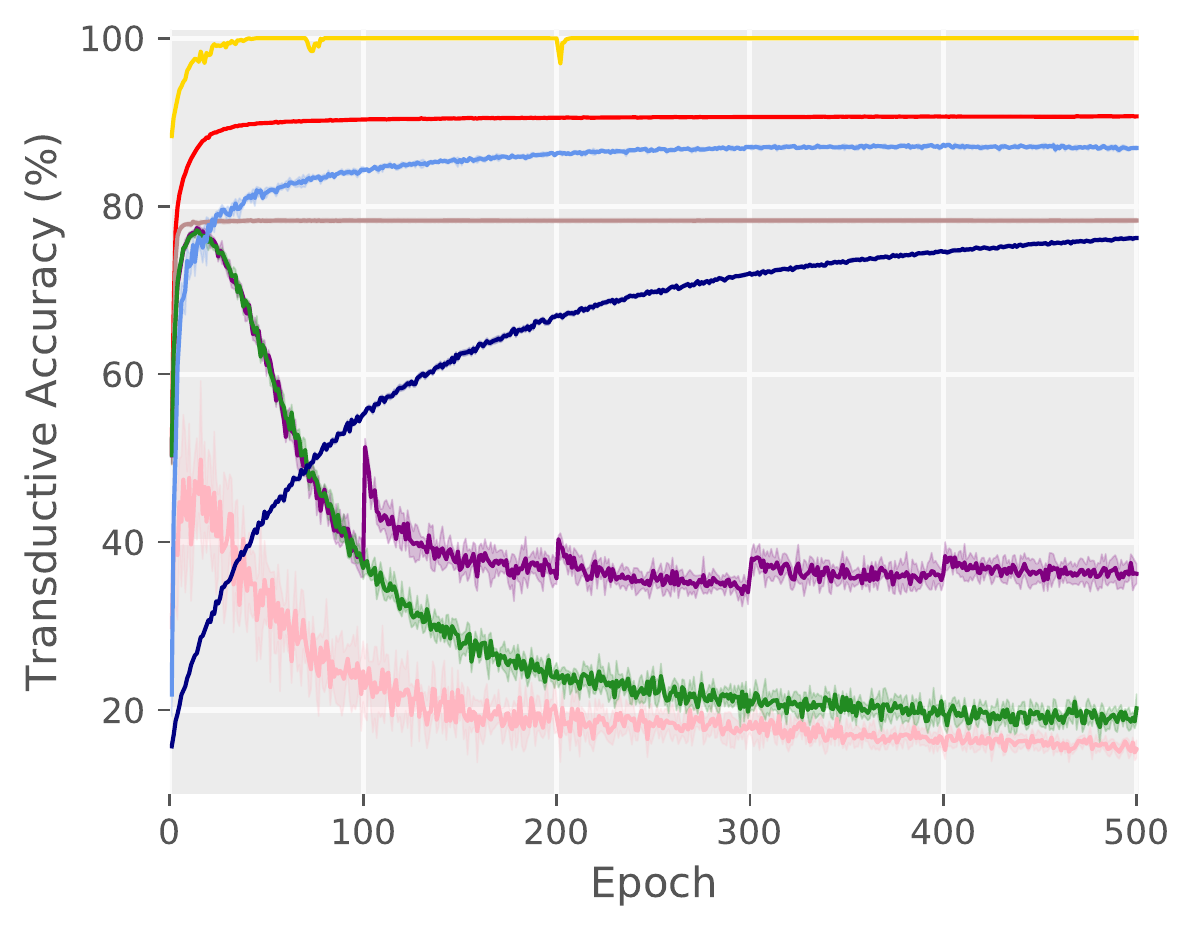}
			\centerline{\quad Fashion, MLP, $q=0.7$}
	\end{minipage}}
	
	\subfigure{
		\begin{minipage}[b]{0.24\columnwidth}
			\centering
			\includegraphics[width=1.65in]{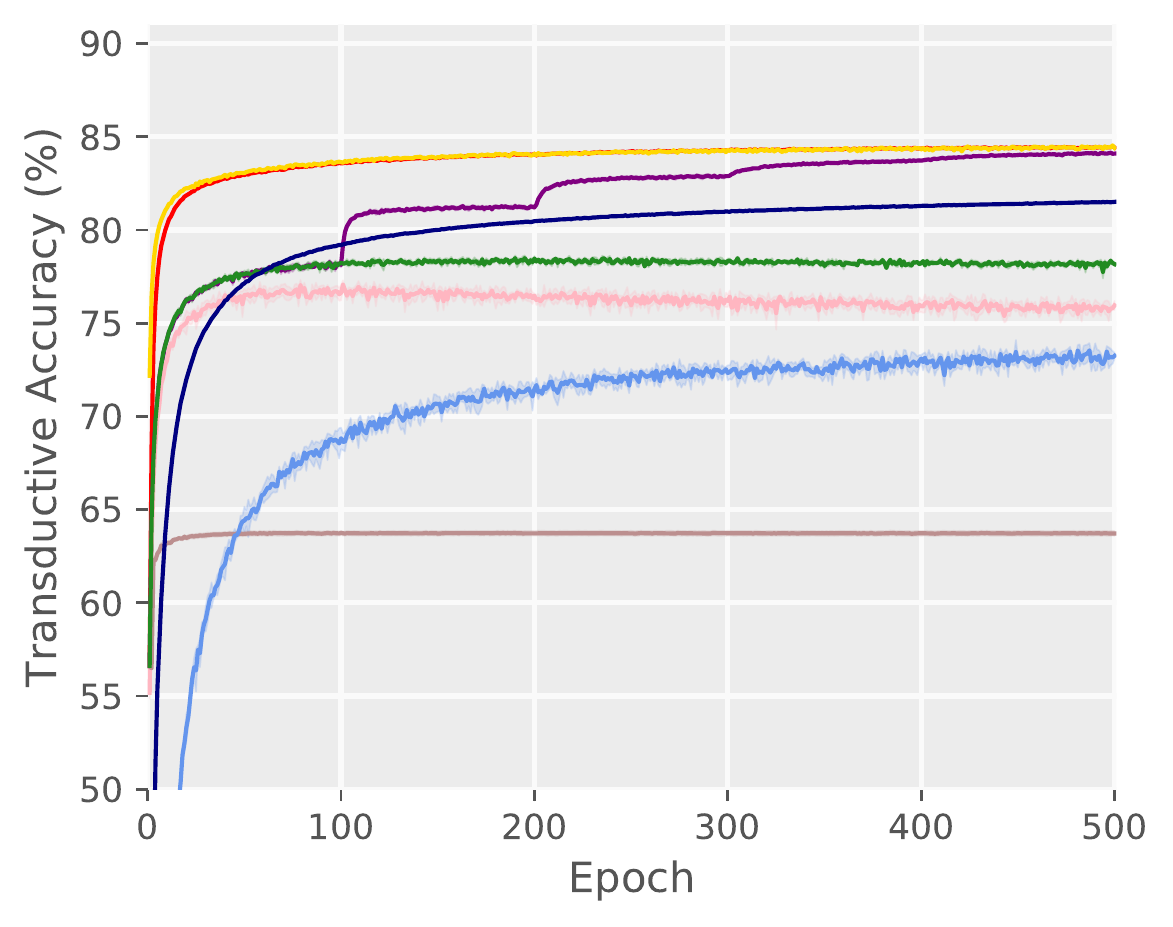}
			\centerline{Kuzushiji, Linear, $q=0.1$}
	\end{minipage}}%
	\subfigure{
		\begin{minipage}[b]{0.24\columnwidth}
			\centering
			\includegraphics[width=1.65in]{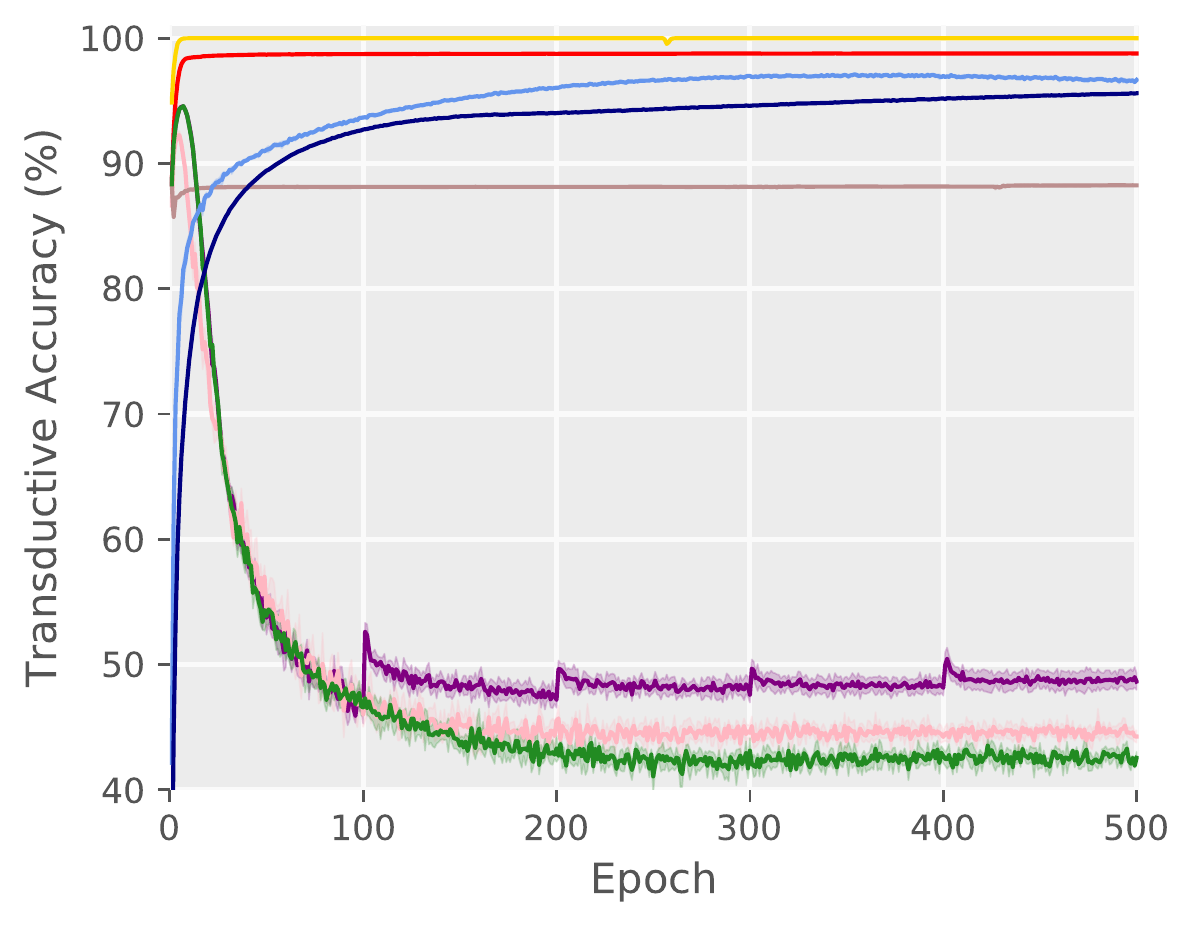}
			\centerline{\quad Kuzushiji, MLP, $q=0.1$}
	\end{minipage}}
	\subfigure{
		\begin{minipage}[b]{0.24\columnwidth}
			\centering
			\includegraphics[width=1.65in]{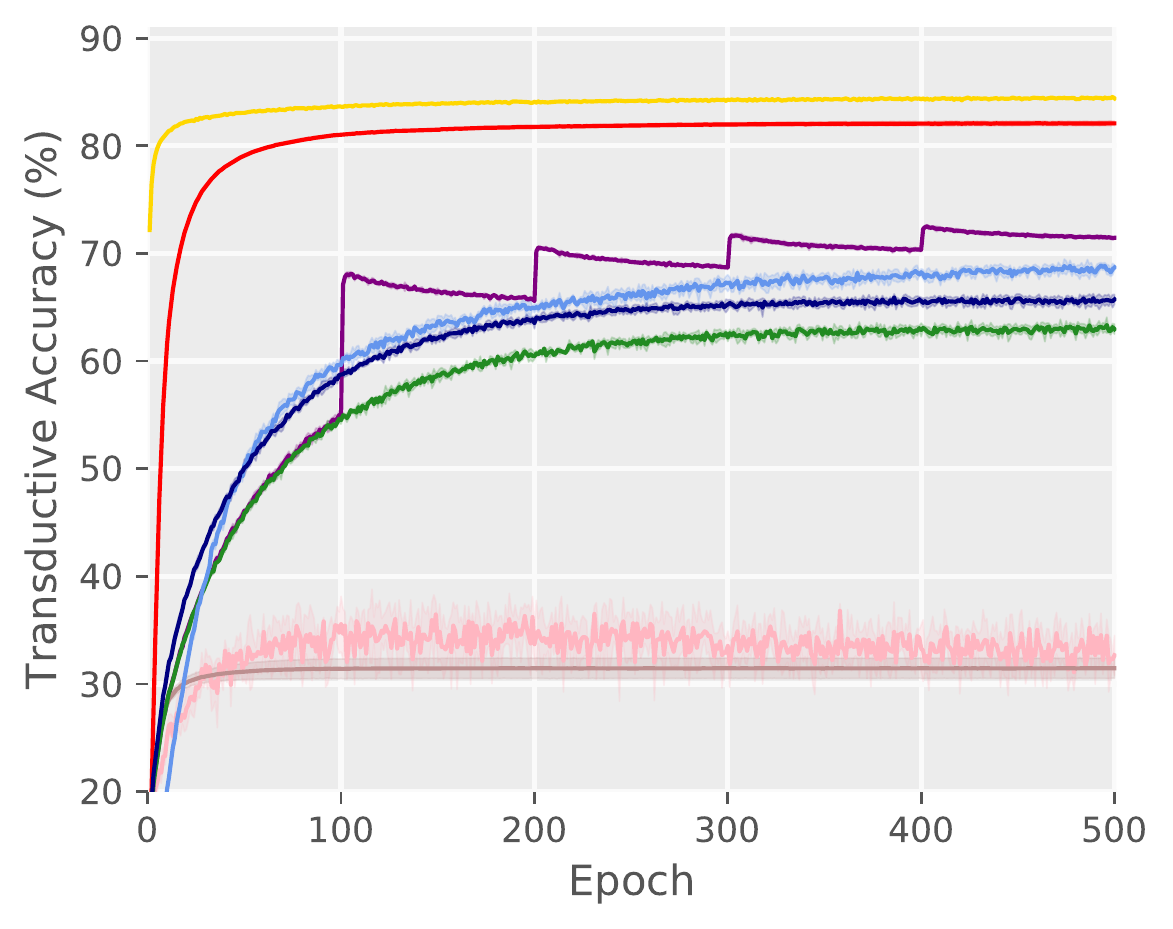}
			\centerline{\quad Kuzushiji, Linear, $q=0.7$}
	\end{minipage}}%
	\subfigure{
		\begin{minipage}[b]{0.24\columnwidth}
			\centering
			\includegraphics[width=1.65in]{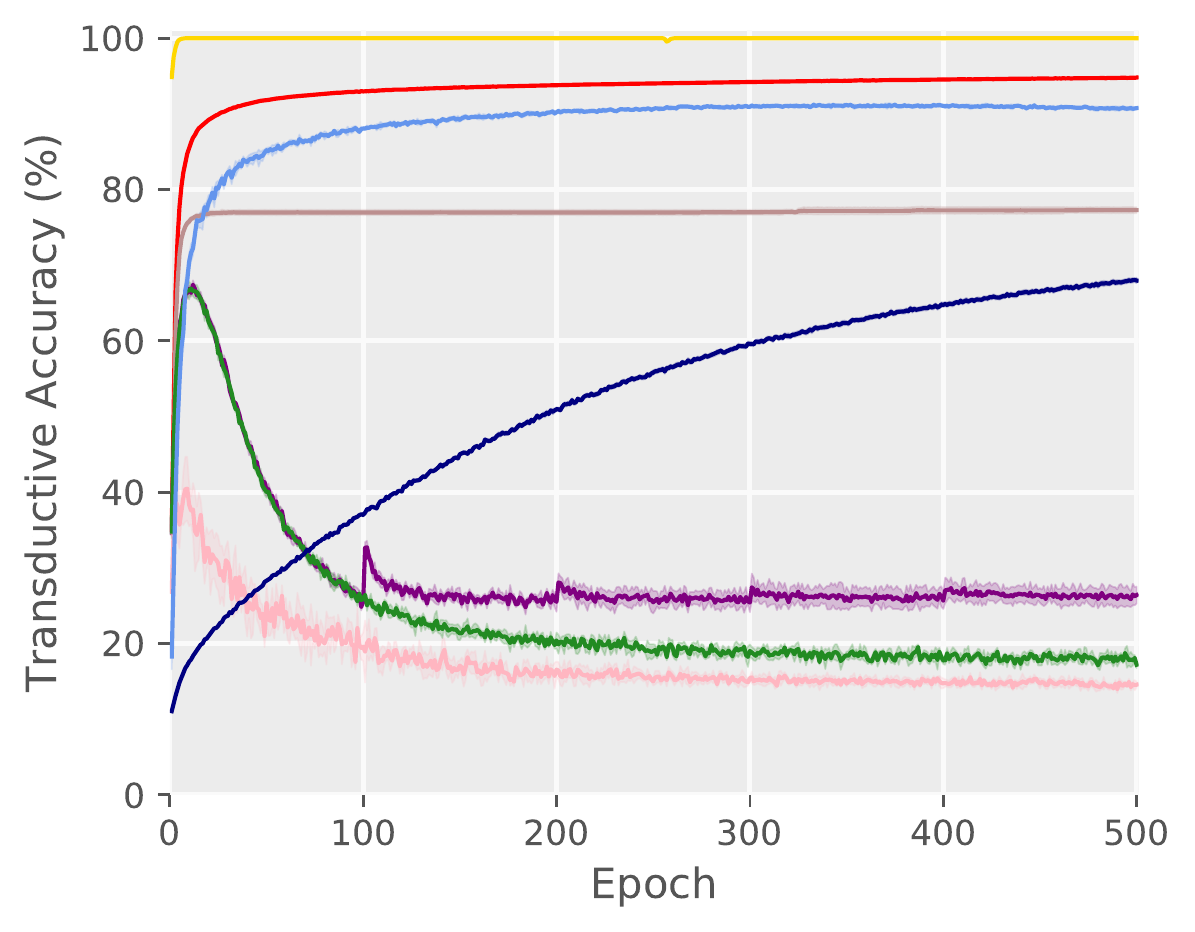}
			\centerline{\quad Kuzushiji, MLP, $q=0.7$}
	\end{minipage}}
	
	\subfigure{
		\begin{minipage}[b]{0.24\columnwidth}
			\centering
			\includegraphics[width=1.65in]{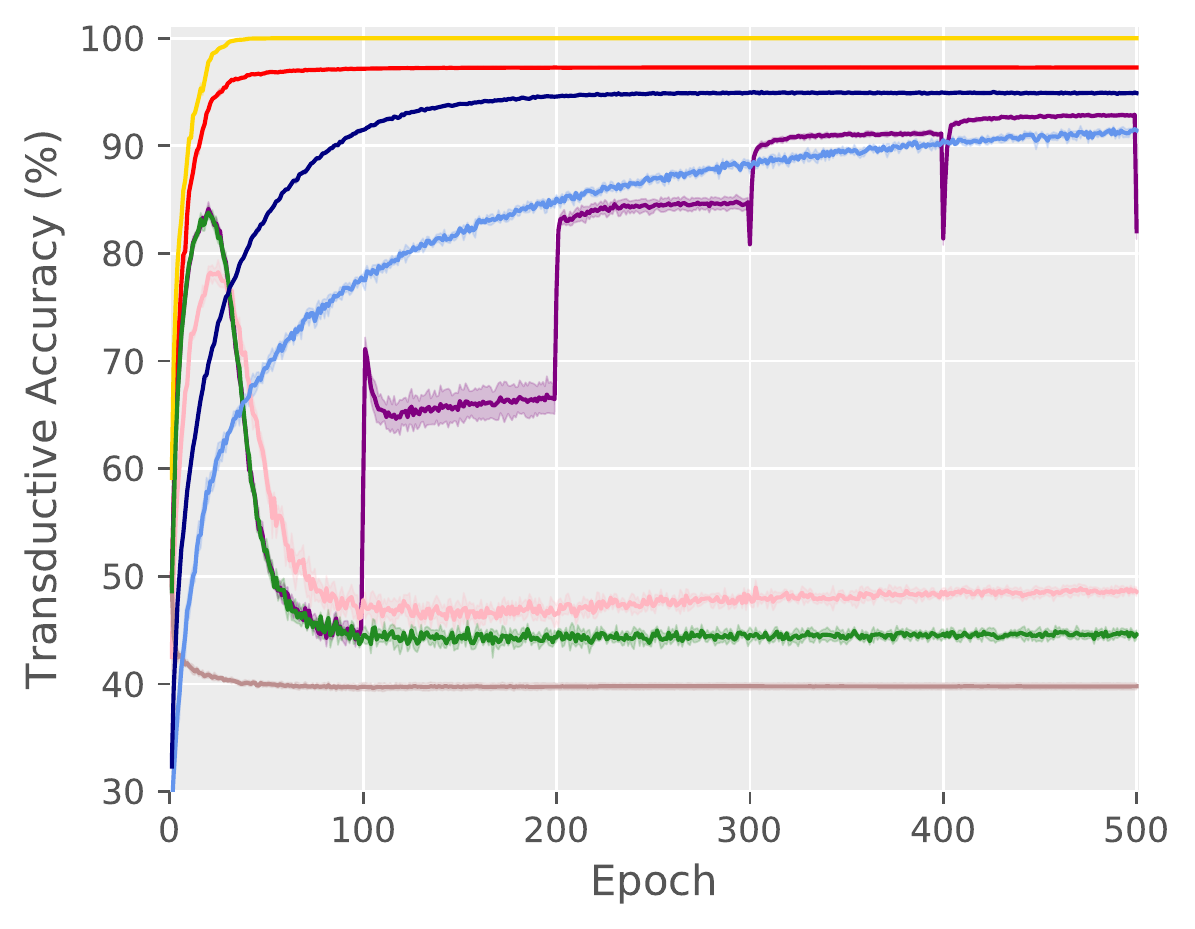}
			\centerline{CIFAR, ConvNet, $q=0.1$}
	\end{minipage}}%
	\subfigure{
		\begin{minipage}[b]{0.24\columnwidth}
			\centering
			\includegraphics[width=1.65in]{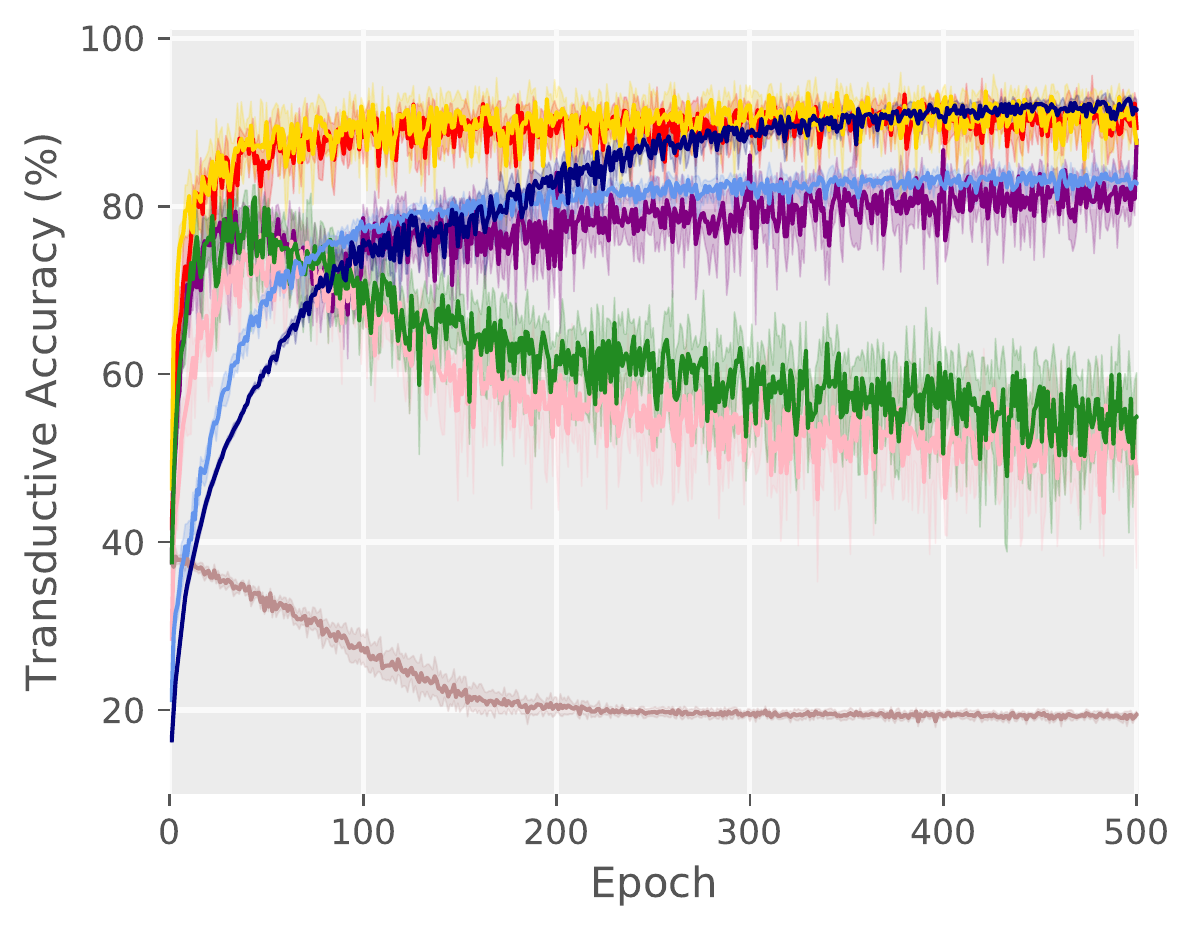}
			\centerline{\quad CIFAR, ResNet, $q=0.1$}
	\end{minipage}}
	\subfigure{
		\begin{minipage}[b]{0.24\columnwidth}
			\centering
			\includegraphics[width=1.65in]{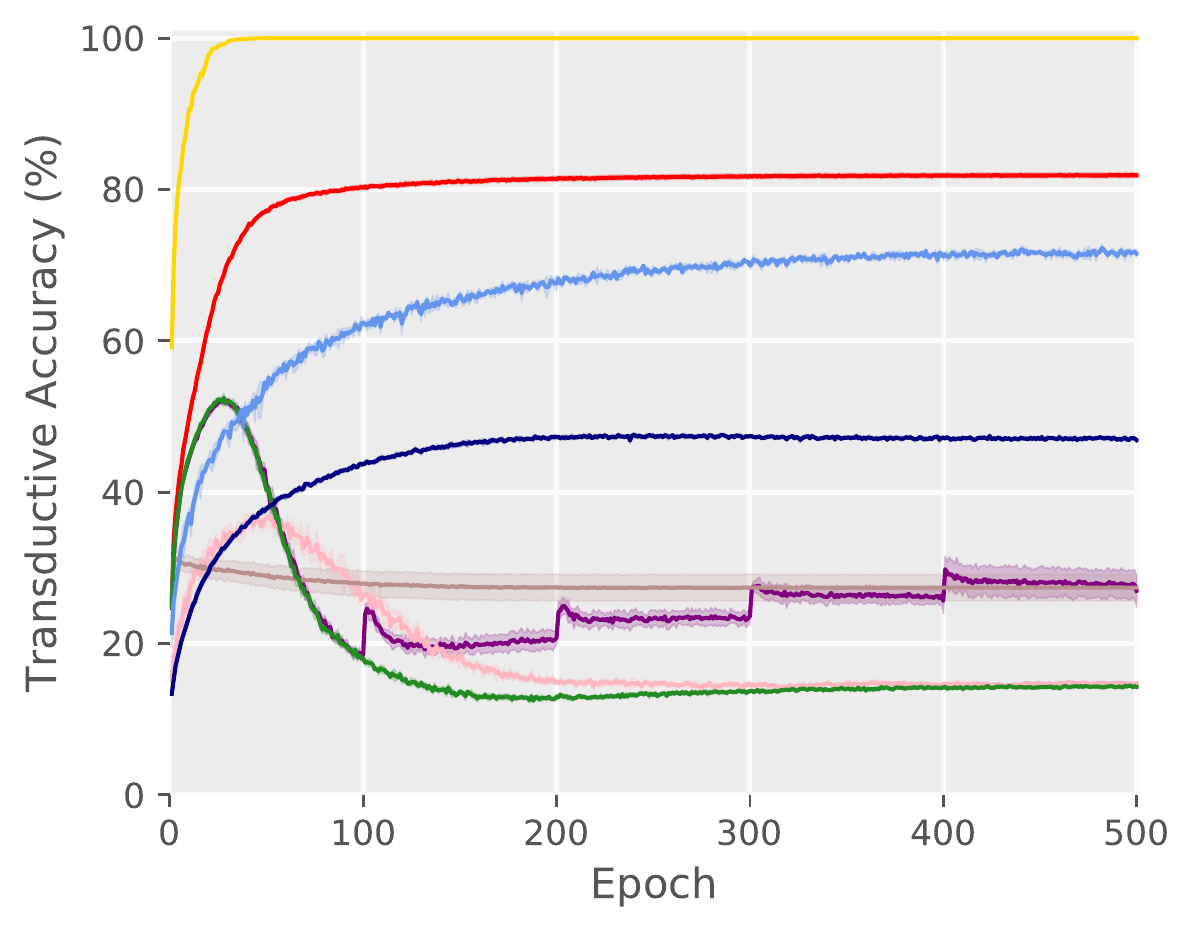}
			\centerline{\quad CIFAR, ConvNet, $q=0.7$}
	\end{minipage}}%
	\subfigure{
		\begin{minipage}[b]{0.24\columnwidth}
			\centering
			\includegraphics[width=1.65in]{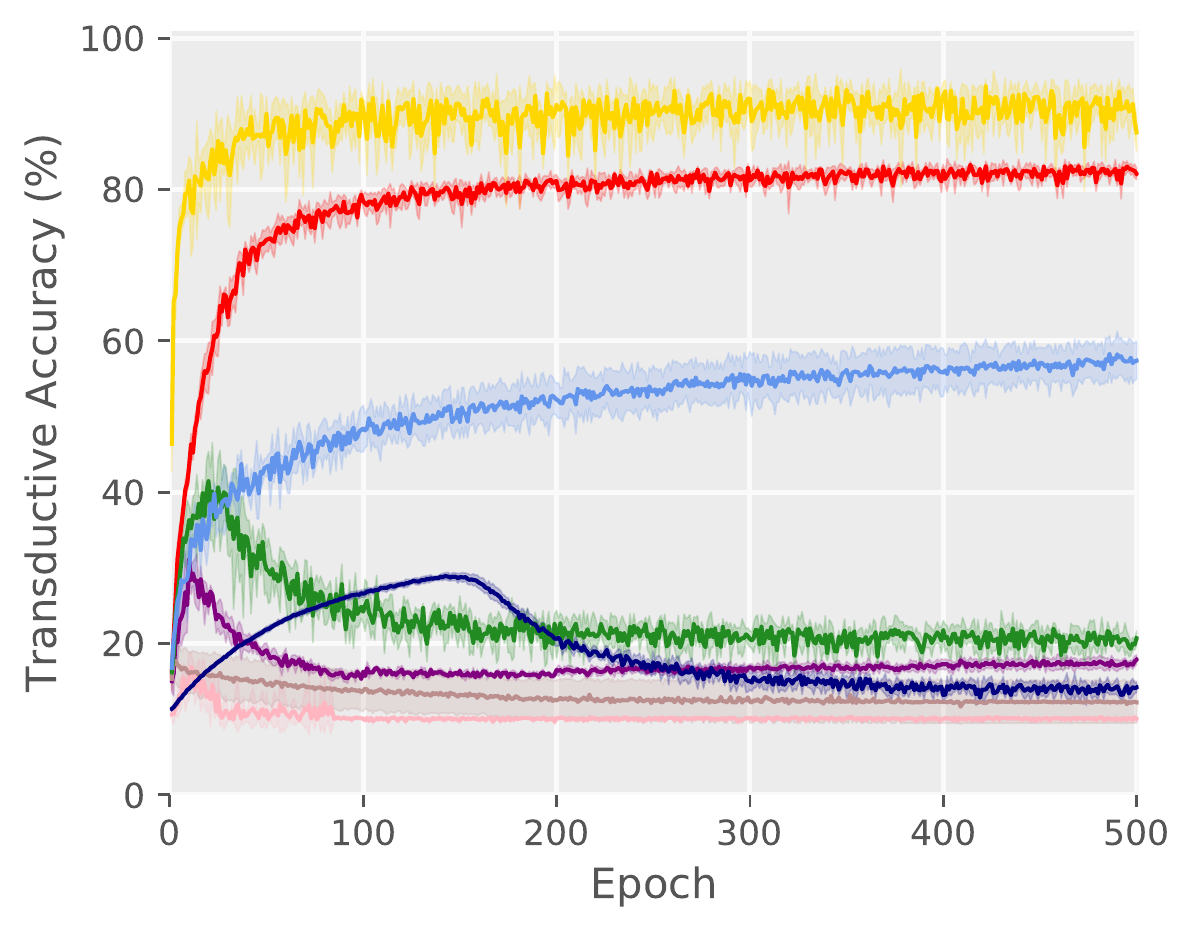}
			\centerline{\quad CIFAR, ResNet, $q=0.7$}
	\end{minipage}}
	\caption{Transductive accuracy for various models and datasets. Dark colors show the mean accuracy of 5 trials and light colors show standard deviation. Fashion is short for Fashion-MNIST, Kuzushiji is short of Kuzushiji-MNIST, CIFAR is short of CIFAR-10.}
	\label{fig:benchmark_binomial_transductive}
	\vskip -0.2in
\end{figure*}

\begin{figure*}[!t]
	\vskip 0.2in
	\subfigure{
		\begin{minipage}[b]{0.24\columnwidth}
			\centering
			\includegraphics[width=1.6in]{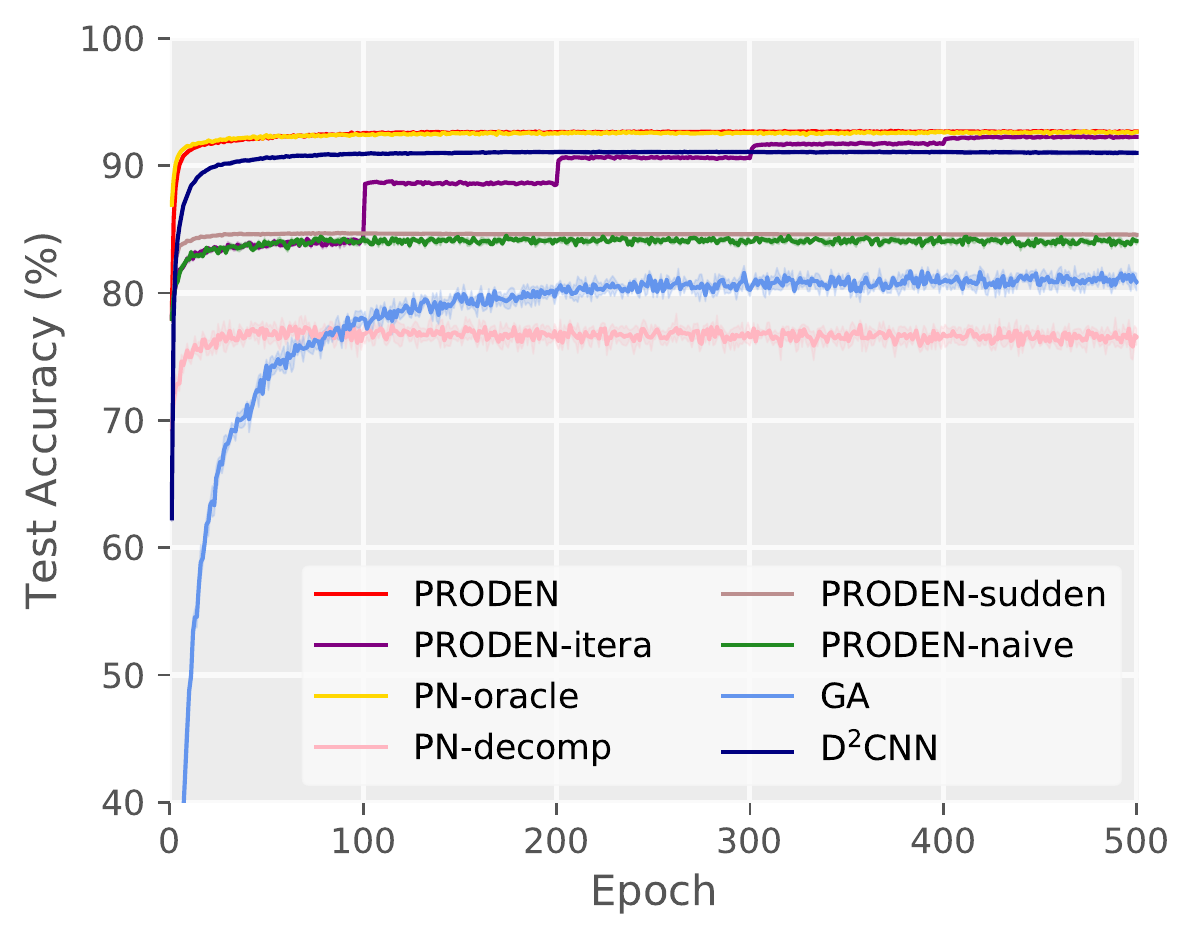}
			\centerline{MNIST, Linear, $q=0.5$}
	\end{minipage}}
	\subfigure{
		\begin{minipage}[b]{0.24\columnwidth}
			\centering
			\includegraphics[width=1.6in]{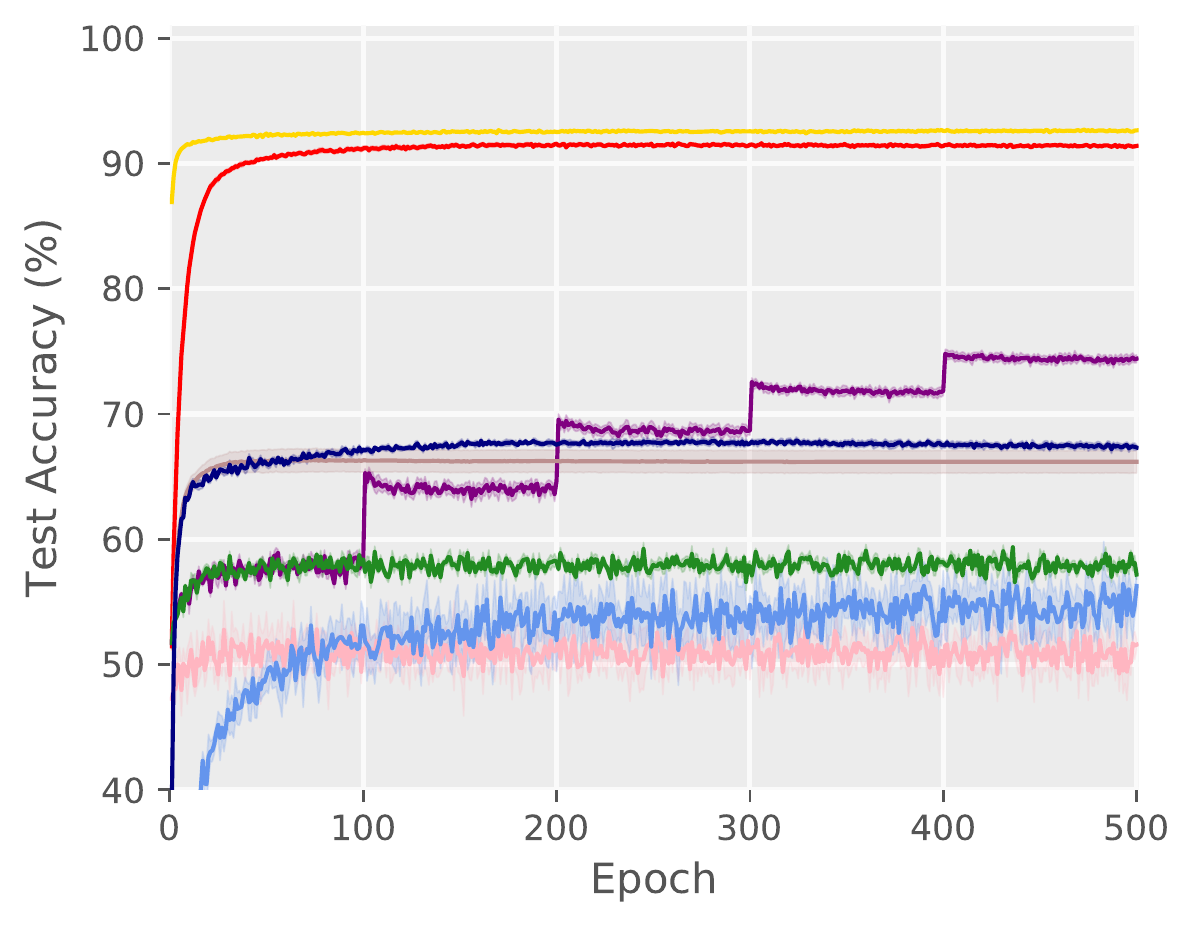}
			\centerline{\quad MNIST, Linear, $q=0.9$}
	\end{minipage}}
	\subfigure{
		\begin{minipage}[b]{0.24\columnwidth}
			\centering
			\includegraphics[width=1.6in]{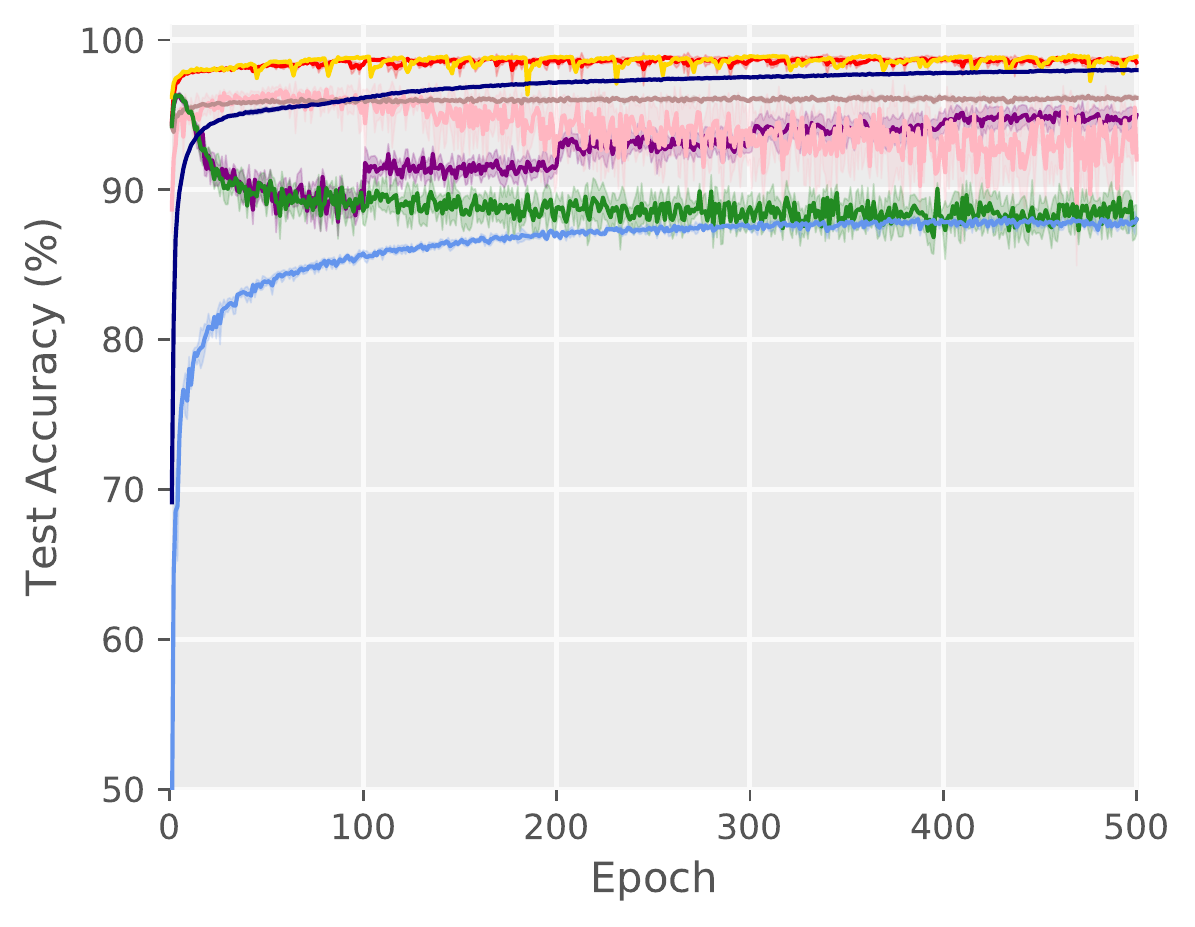}
			\centerline{\quad MNIST, MLP, $q=0.5$}
	\end{minipage}}
	\subfigure{
		\begin{minipage}[b]{0.24\columnwidth}
			\centering
			\includegraphics[width=1.6in]{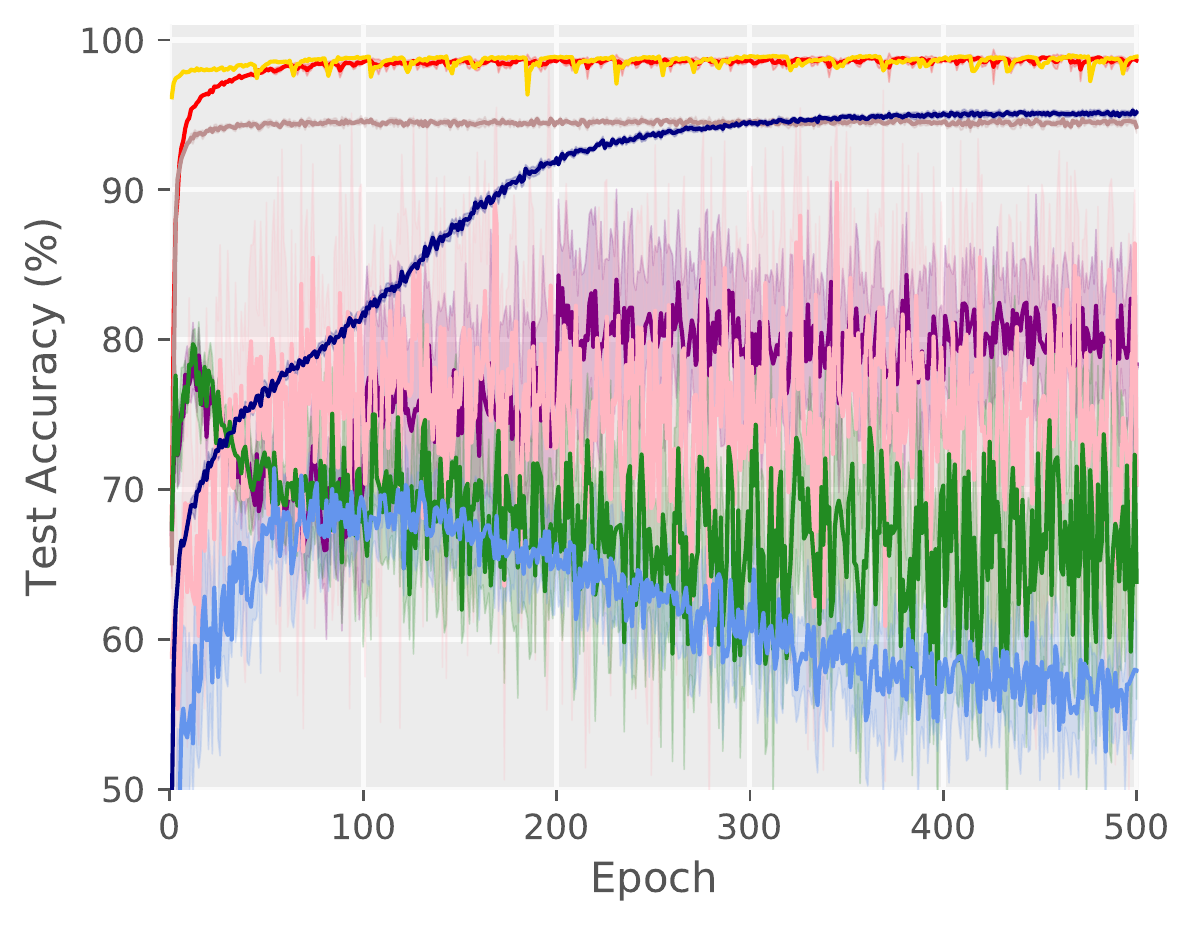}
			\centerline{\quad MNIST, MLP, $q=0.9$}
	\end{minipage}}
	
	\subfigure{
		\begin{minipage}[b]{0.24\columnwidth}
			\centering
			\includegraphics[width=1.6in]{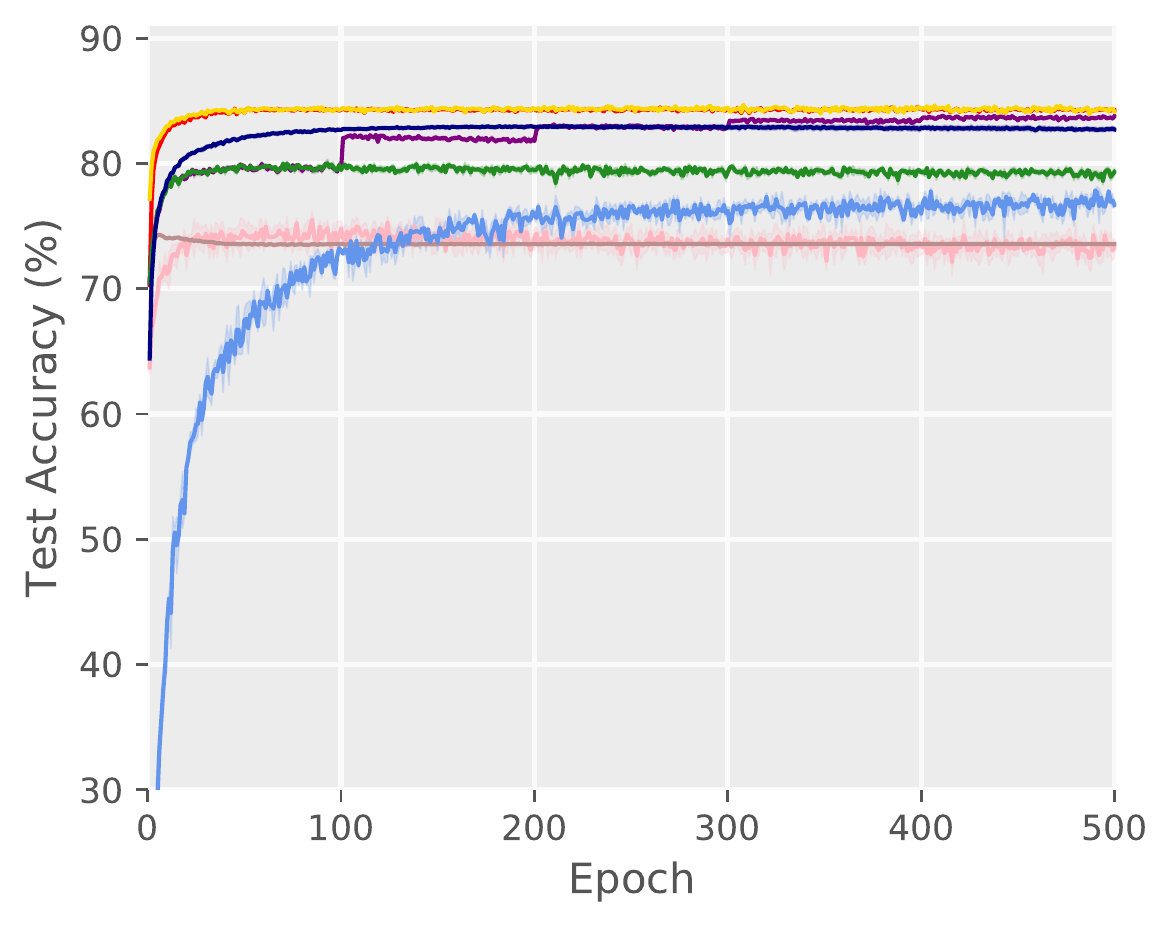}
			\centerline{Fashion, Linear, $q=0.5$}
	\end{minipage}}
	\subfigure{
		\begin{minipage}[b]{0.24\columnwidth}
			\centering
			\includegraphics[width=1.6in]{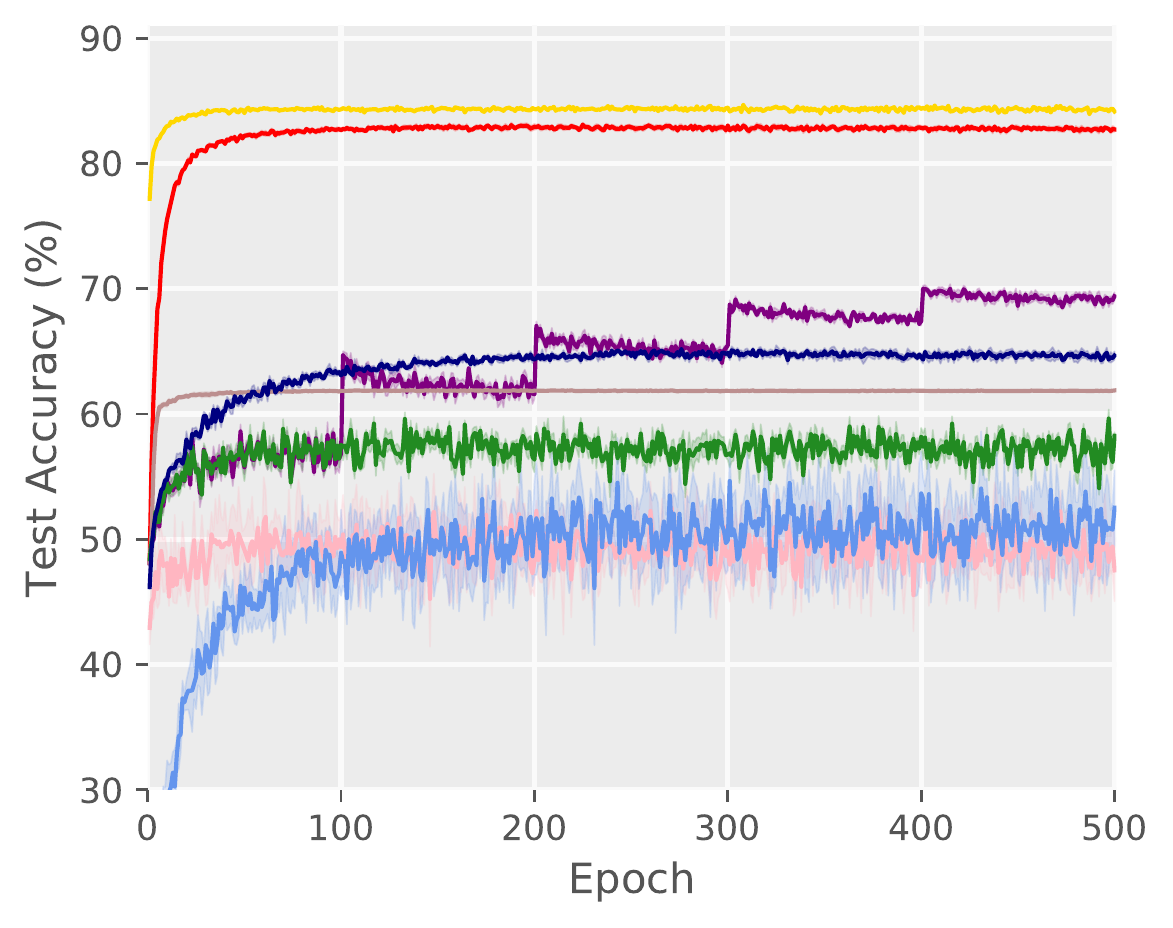}
			\centerline{\quad Fashion, Linear, $q=0.9$}
	\end{minipage}}
	\subfigure{
		\begin{minipage}[b]{0.24\columnwidth}
			\centering
			\includegraphics[width=1.6in]{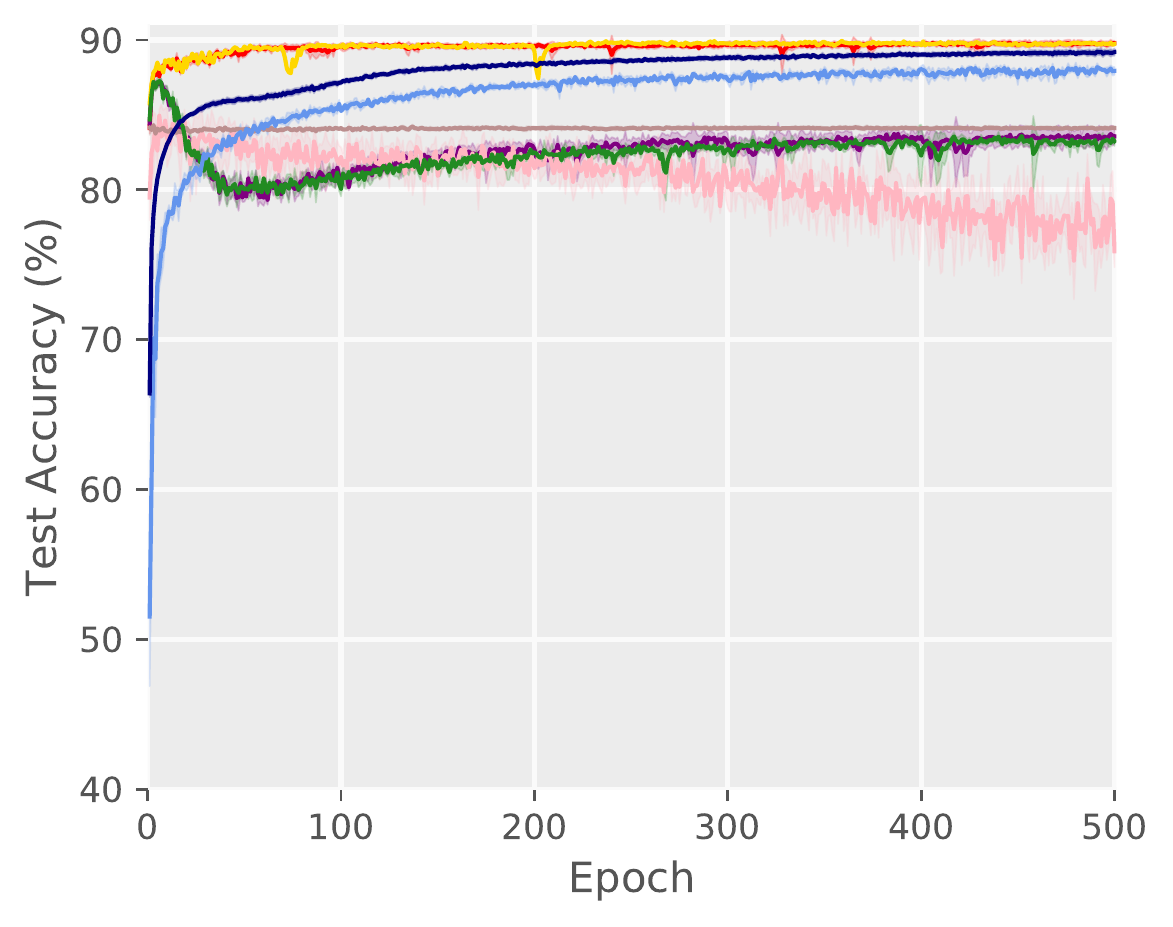}
			\centerline{\quad Fashion, MLP, $q=0.5$}
	\end{minipage}}
	\subfigure{
		\begin{minipage}[b]{0.24\columnwidth}
			\centering
			\includegraphics[width=1.6in]{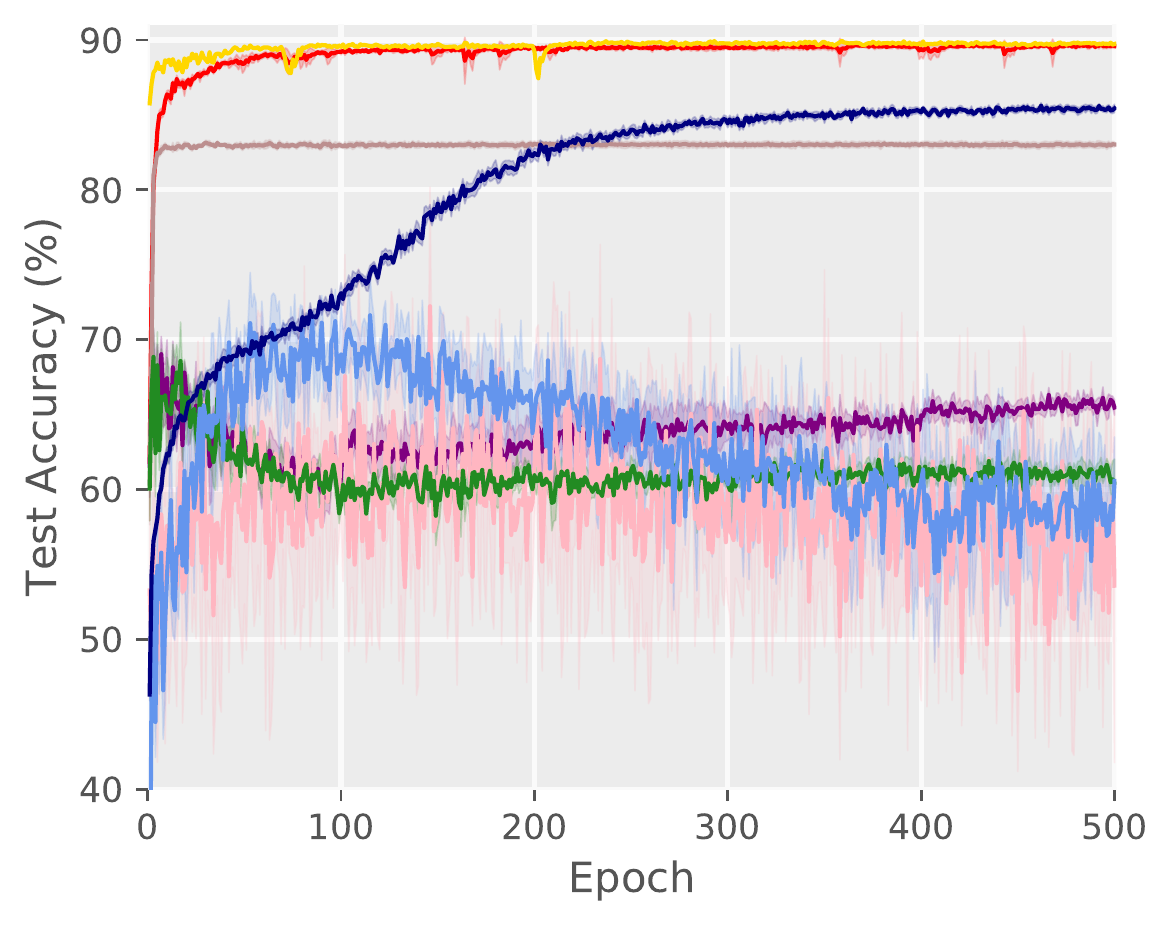}
			\centerline{\quad Fashion, MLP, $q=0.9$}
	\end{minipage}}
	
	\subfigure{
		\begin{minipage}[b]{0.24\columnwidth}
			\centering
			\includegraphics[width=1.6in]{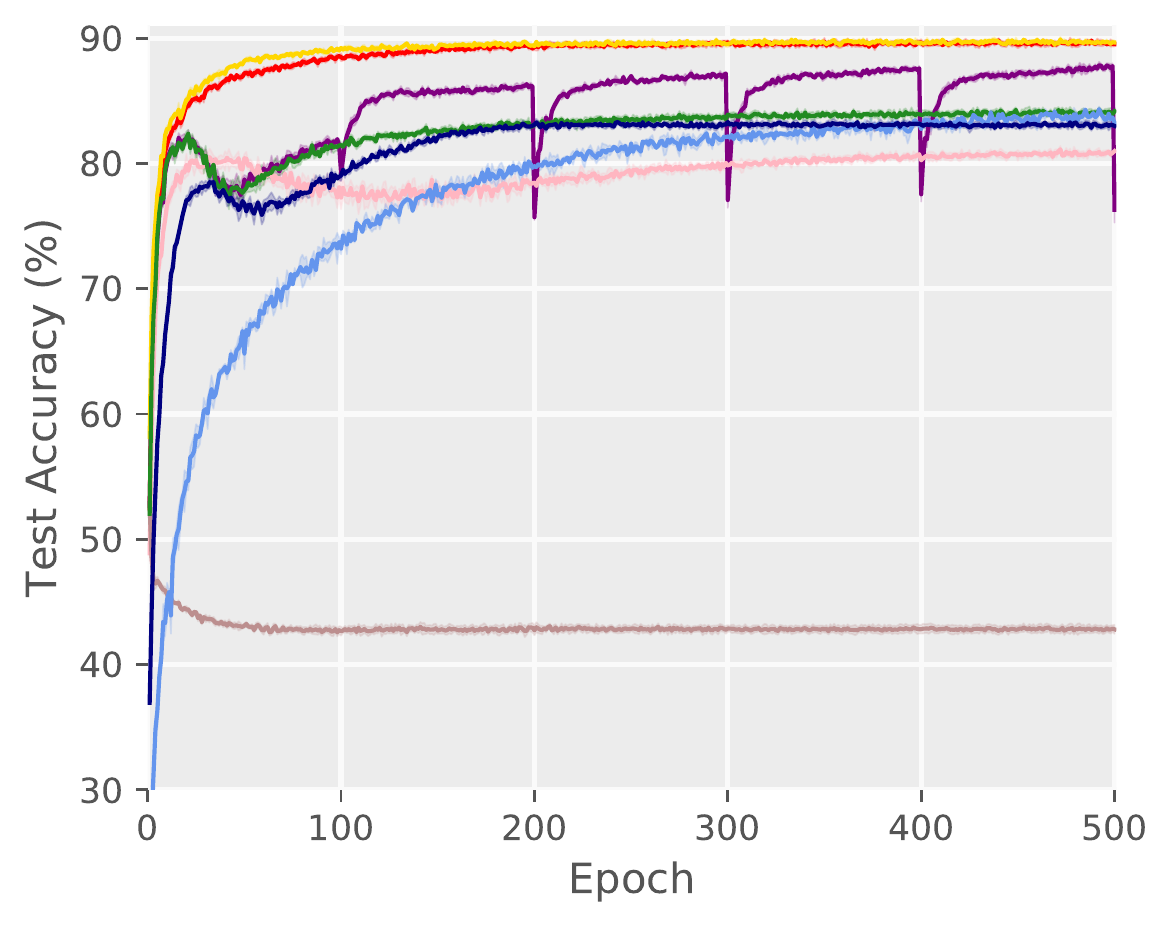}
			\centerline{CIFAR, ConvNet, $q=0.5$}
	\end{minipage}}
	\subfigure{
		\begin{minipage}[b]{0.24\columnwidth}
			\centering
			\includegraphics[width=1.6in]{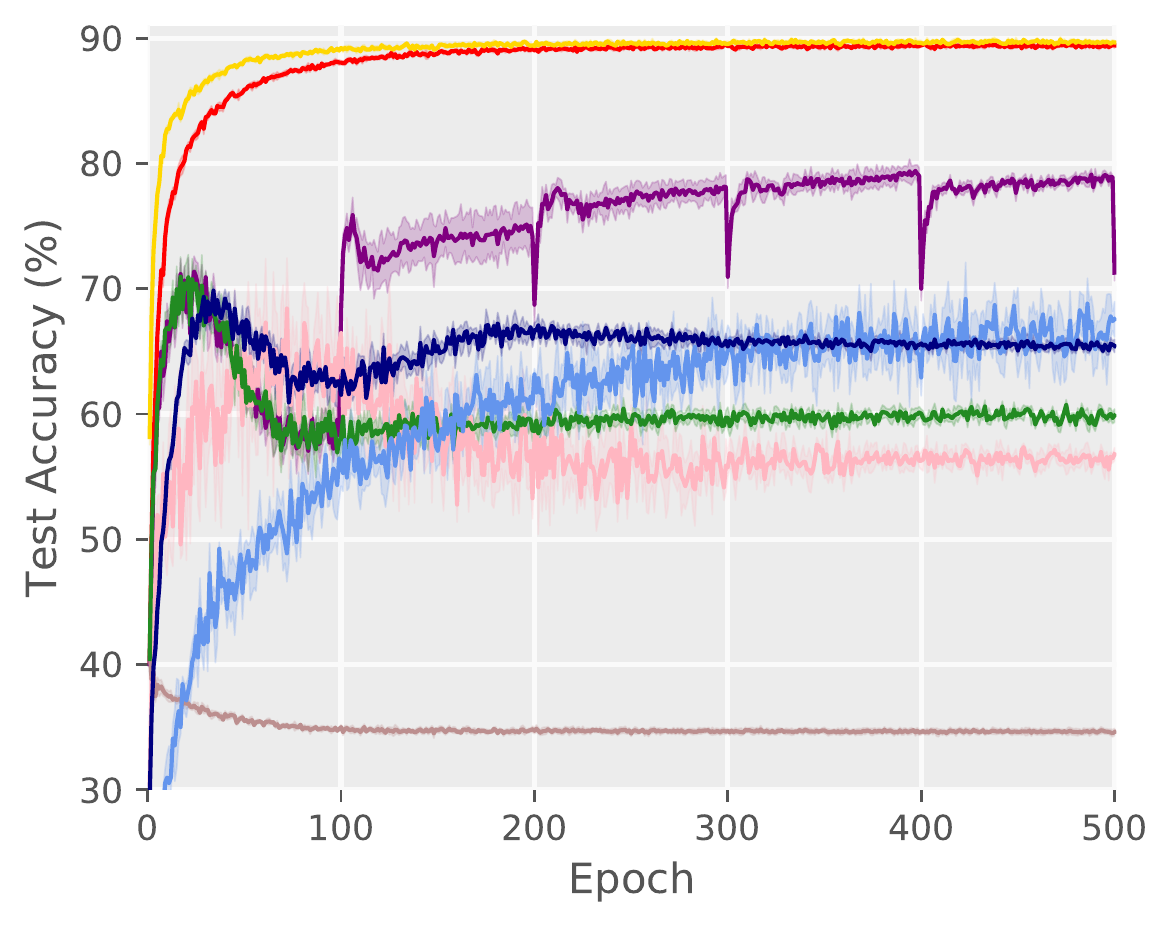}
			\centerline{\quad CIFAR, ConvNet, $q=0.9$}
	\end{minipage}}%
	\subfigure{
		\begin{minipage}[b]{0.24\columnwidth}
			\centering
			\includegraphics[width=1.6in]{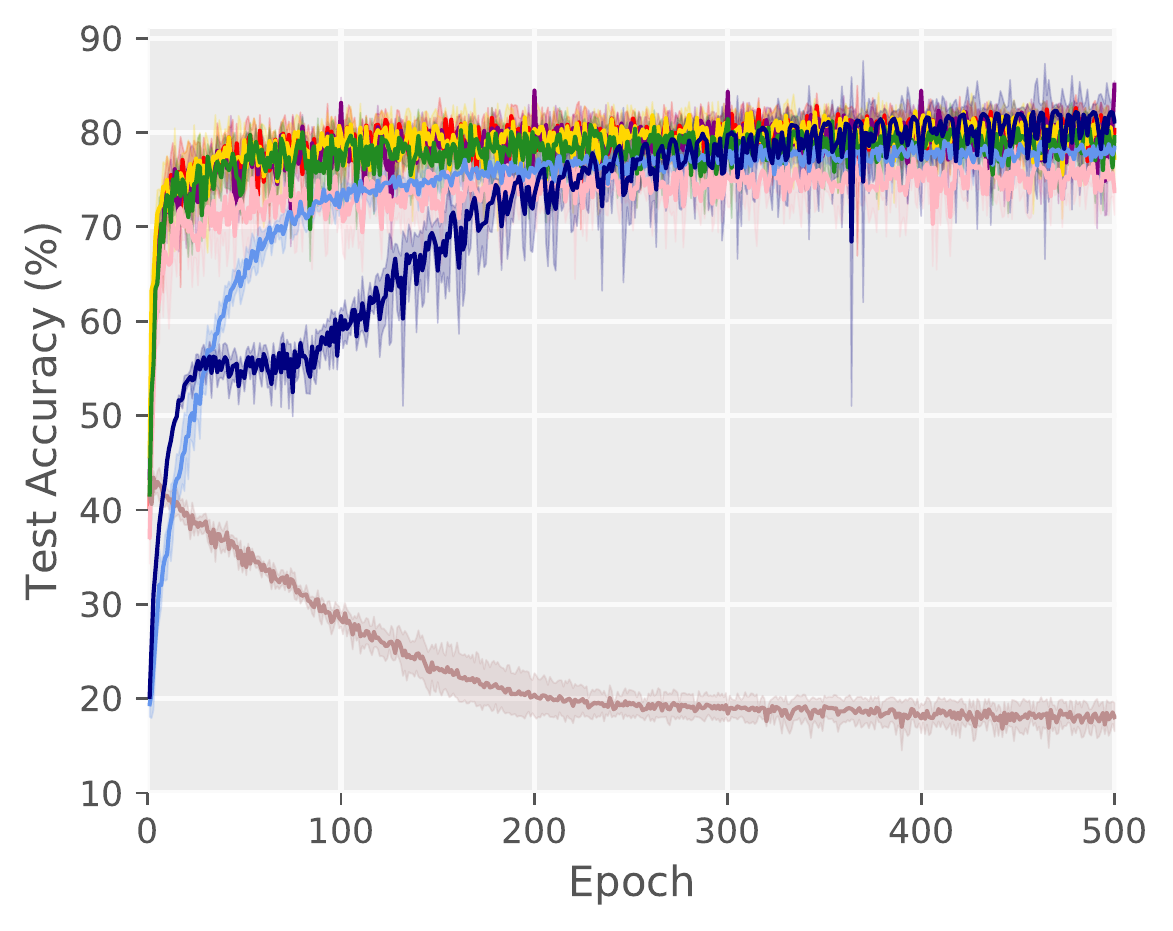}
			\centerline{\quad CIFAR, ResNet, $q=0.5$}
	\end{minipage}}
	\subfigure{
		\begin{minipage}[b]{0.24\columnwidth}
			\centering
			\includegraphics[width=1.6in]{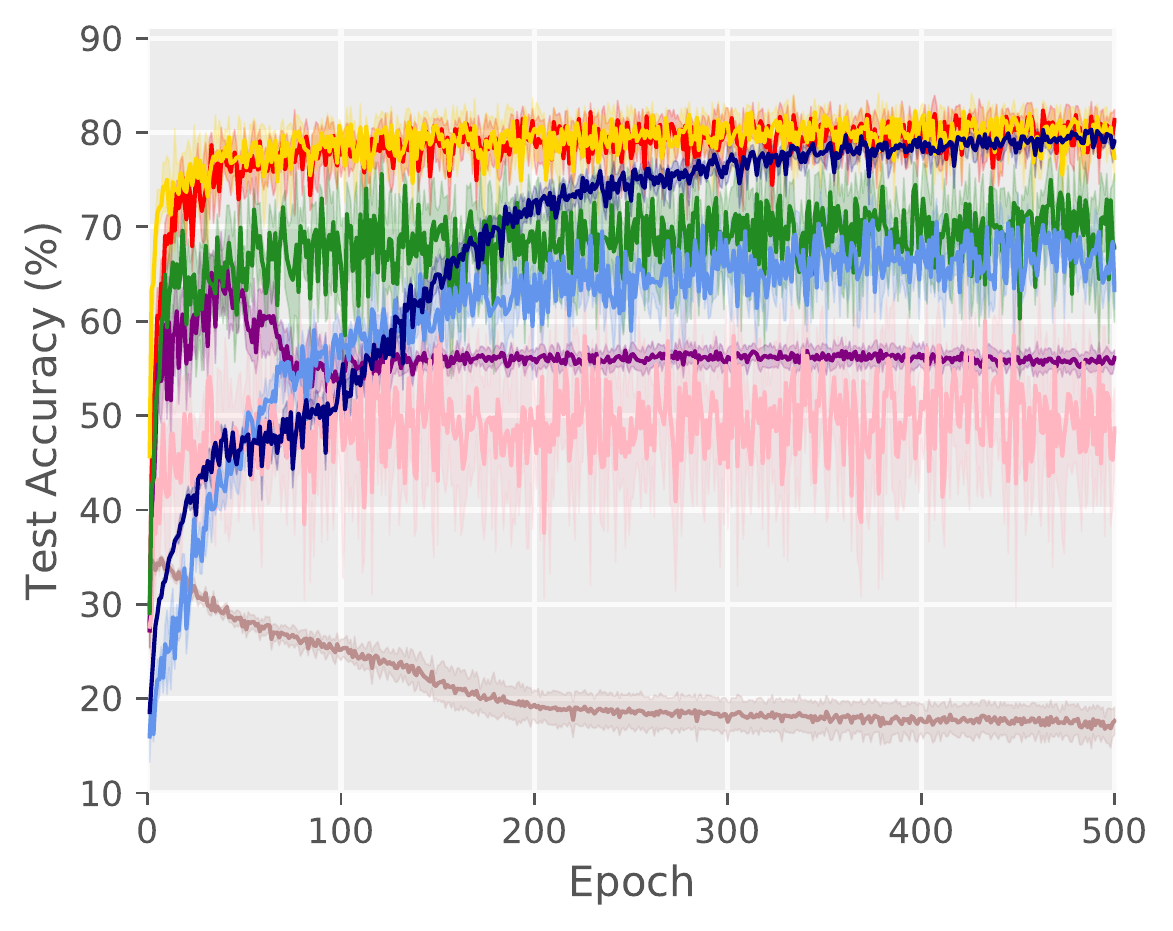}
			\centerline{\quad CIFAR, ResNet, $q=0.9$}
	\end{minipage}}
	\caption{Test accuracy on MNIST, Fashion-MNIST, and CIFAR-10 in the pair case.}
	\label{fig:benchmark_pair_2}
	\vskip -0.2in
\end{figure*}

\subsection{Test Results in the Pair Case}

Figure~\ref{fig:benchmark_pair_2} illustrates the test results on the benchmark datasets in the pair case. They show a similar phenomenon to Figure~\ref{fig:benchmark_pair_test} that PRODEN is affected slightly and CCN is affected severely when the ambiguity degree goes large.

\section{UCI datasets}

\subsection{Characteristic of the UCI Datasets and Setup}
Table~\ref{tab:uci} summaries the characteristic of the UCI datasets. We normalized these dataset by the Z-scores by convention and use the linear model trained by SGD with momentum 0.9. 

\begin{table*}[t]
	\caption{Summary of UCI datasets and models.}
	\label{tab:uci}
	\vskip 0.15in
	\begin{center}	
		\renewcommand\arraystretch{1.2}
		\begin{small}
			\setlength{\tabcolsep}{7.5mm}{
				\begin{tabular}{c|cccc|c}
					\toprule
					Dataset & \# Train & \# Test & \# Feature & \# Class & Model $\bm g(x;\Theta)$ \\
					\midrule
					Yeast & 1,335 & 149 & 8 & 10 & Linear model \\
					Texture & 4,950 & 550 & 40 & 11 & Linear model \\
					Dermatology & 329 & 37 & 34 & 6 & Linear model \\
					Synthetic-Control & 540 & 60 & 60 & 6 & Linear model \\
					20Newsgroups & 16,961 & 1,885 & 300 & 20 & Linear model \\
					\bottomrule
			\end{tabular}}
		\end{small}
	\end{center}
	\vskip -0.1in
\end{table*}

\subsection{Comparing Methods}

The comparing PLL methods are listed as follows.
\begin{itemize}[topsep=0ex,itemsep=0ex,leftmargin=*]
	\item \emph{SURE} \cite{feng2019partial}: an iterative EM-based method [suggested configuration: $\lambda, \beta \in \{0.001, 0.01, 0.05, 0.1, 0.3, 0.5, 1\}$].
	\item \emph{CLPL} \cite{cour2011learning}: a parametric method that transforms the PLL problem to the binary learning problem [suggested configuration: SVM with squared hinge loss].
	\item \emph{ECOC} \cite{zhang2017disambiguation}: a disambiguation-free method that adapts the binary decomposition strategy to PLL [suggested configuration: $L = log_2(l)$].
	\item \emph{PLSVM} \cite{Nguyen2008classification}: a SVM-based method that differentiates candidate labels from non-candidate labels by maximizing the margin between them [suggested configuration: $\lambda \in \{10^{-3}, \ldots, 10^3\}$].
	\item \emph{PLkNN} \cite{Hullermeier2006learning}: a non-parametric approach that adapts k-nearest neighbors method to handle partially labeled data [suggested configuration: $k \in \{5, 6, \ldots, 10\}$].
	\item \emph{IPAL} \cite{zhang2015solving}: a non-parametric method that applies the label propagation strategy to iteratively update the weight of each candidate label [suggested configuration: $\alpha = 0.95, k = 10, T = 100$].
\end{itemize}

\subsection{Results}

Tabel~\ref{tab:uci_pair_test} provides additional experiments to investigate the performances of each comparing methods on the UCI datasets with the pair flipping strategy. It shows that PRODEN generally achieves superior performance against other parametric comparing methods. Our advantage is a less obvious compared with the non-parametric method IPAL, whereas the performance of PRODEN could be easily increased by employing a deeper network.

\begin{table*}[!t]
	\centering
	\caption{Test accuracy (mean$\pm$std) on the UCI datasets in the pair case.}
	\label{tab:uci_pair_test}
	\vskip 0.15in
	\renewcommand\arraystretch{1.2}
	\begin{small}
		\setlength{\tabcolsep}{3.3mm}{
			\begin{tabular}{c|cccccc}
				\toprule
				%\cline{2-11}
				&q & Yeast & Texture & Dermatology & Synthetic Control & 20Newsgroups\\
				\midrule
				\multirow{2}{*}{PRODEN}     &0.5              & \bf{56.38$\pm$4.71\%}     & \bf{99.71$\pm$0.12\%}      & \bf{96.16$\pm$3.27\%}   & 98.05$\pm$1.58\%         & 78.05$\pm$0.97\% \\ 
				&0.9 & \bf{44.03$\pm$4.12\%}     & \bf{99.35$\pm$0.31\%}      & 93.68$\pm$6.80\%  & 96.33$\pm$1.12\%         & \bf{70.71$\pm$0.72\%}   \\ 
				\midrule
				\multirow{2}{*}{PRODEN-itera}              &0.5     & 56.26$\pm$4.74\%     & 99.13$\pm$0.40\%$\bullet$      & 93.15$\pm$2.56\%   & 87.40$\pm$7.25\%$\bullet$         & 76.90$\pm$0.92\% \\
				&0.9 & 42.62$\pm$5.52\%     & 78.74$\pm$4.09\%$\bullet$      & 68.14$\pm$6.89\%$\bullet$   & 57.90$\pm$5.08\%$\bullet$         & 57.15$\pm$0.71\%$\bullet$ \\ 
				\midrule
				\multirow{2}{*}{GA}  &0.5& 24.39$\pm$4.40\%$\bullet$    &  94.25$\pm$0.48\%$\bullet$  & 73.81$\pm$6.18\%$\bullet$  &   64.68$\pm$3.08\%$\bullet$   & 63.68$\pm$0.67\%$\bullet$  \\
				&0.9& 16.50$\pm$3.43\%$\bullet$    &  61.85$\pm$2.29\%$\bullet$  & 48.03$\pm$11.42\%$\bullet$  &   37.15$\pm$6.91\%$\bullet$   & 45.86$\pm$0.95\%$\bullet$  \\
				\midrule
				\multirow{2}{*}{D$^2$CNN}  &0.5& 56.38$\pm$4.71\%    &  98.71$\pm$0.28\%$\bullet$  & 95.89$\pm$3.75\% &   78.87$\pm$11.94\%$\bullet$   & 74.38$\pm$0.90\%$\bullet$  \\
				&0.9& 41.52$\pm$7.03\%    &  86.45$\pm$4.87\%$\bullet$  & 87.84$\pm$6.58\%  &   62.92$\pm$12.36\%$\bullet$   & 64.16$\pm$0.36\%$\bullet$  \\
				\midrule
				\multirow{2}{*}{SURE}   &0.5& 51.69$\pm$3.81\%   &  98.18$\pm$0.17\%$\bullet$   &  95.71$\pm$2.49\%  &  78.67$\pm$5.26\%$\bullet$    &  70.21$\pm$0.88\%$\bullet$ \\ 
				&0.9& 37.61$\pm$3.40\%$\bullet$   &  98.00$\pm$0.42\%$\bullet$   &  93.42$\pm$6.23\%   &  52.33$\pm$6.49\%$\bullet$    &  61.01$\pm$0.93\%$\bullet$ \\
				\midrule
				\multirow{2}{*}{CLPL} &0.5& 55.06$\pm$4.74\% & 98.80$\pm$0.22\%$\bullet$   & 94.52$\pm$3.36\%   &  75.83$\pm$4.29\%$\bullet$ &  77.92$\pm$0.76\% \\
				&0.9& 40.66$\pm$4.69\% & 90.21$\pm$4.77\%$\bullet$   & 87.67$\pm$3.06\%   &  52.33$\pm$4.65\%$\bullet$ &  65.03$\pm$0.32\%$\bullet$ \\ 
				\midrule
				\multirow{2}{*}{ECOC}  &0.5&  54.55$\pm$3.92\%  & 99.47$\pm$0.17\%$\bullet$   &     94.71$\pm$2.29\%    &  96.67$\pm$2.08\% & \bf{78.67$\pm$1.11\%} \\
				&0.9&  42.51$\pm$5.19\%  & 69.69$\pm$4.82\%$\bullet$   &     92.97$\pm$6.61\%    &  94.50$\pm$1.32\% & 70.11$\pm$1.63\% \\
				\midrule
				\multirow{2}{*}{PLSVM} &0.5& 52.59$\pm$2.04\% & 93.38$\pm$2.22\%$\bullet$   &  93.97$\pm$4.50\%   &  91.83$\pm$3.08\%$\bullet$ & 76.21$\pm$2.30\% \\
				&0.9& 41.89$\pm$3.90\% & 82.24$\pm$6.58\%$\bullet$   &  93.15$\pm$4.22\%    &  80.67$\pm$9.42\%$\bullet$ & 70.76$\pm$2.16\% \\
				\midrule
				\multirow{2}{*}{PL$k$NN}    &0.5& 53.60$\pm$2.81\% & 97.11$\pm$0.28\%$\bullet$  &  94.52$\pm$3.06\%   & 94.83$\pm$2.60\%$\bullet$ & 43.77$\pm$0.72\%$\bullet$  \\
				&0.9& 43.80$\pm$4.50\% & 92.98$\pm$0.64\%$\bullet$  &  92.05$\pm$4.38\%   & 89.83$\pm$4.54\%$\bullet$ & 38.77$\pm$0.90\%$\bullet$   \\
				\midrule
				\multirow{2}{*}{IPAL}  &0.5& 51.80$\pm$5.08\%    &  99.30$\pm$0.37\%  & 95.89$\pm$2.74\%  &   \bf{98.50$\pm$1.37\%}   & 76.83$\pm$0.51\%  \\
				&0.9& 36.60$\pm$2.58\%$\bullet$    &  98.95$\pm$0.58\%$\bullet$  & \bf{95.34$\pm$2.29\%}$\circ$  &   \bf{98.50$\pm$1.09\%}$\circ$   & 69.15$\pm$0.73\% \\
				\bottomrule
			\end{tabular}
		}
	\end{small}
\end{table*}

\begin{table*}[]
	\caption{Summary of real-world partial label datasets.}
	\label{tab:realworld}
	\vskip 0.15in
	\begin{center}
		\renewcommand\arraystretch{1.2}
		\begin{small}
			\setlength{\tabcolsep}{1.6mm}{
				\begin{tabular}{c|ccccc|c}
					\toprule
					Dataset       & \# Examples & \# Feature & \# Class & \# Avg. CLs & Task Domain  & Model $\bm g(x; \Theta)$   \\ 
					\midrule
					Lost          & 1122        & 108        & 16       & 2.23        & automatic face naming \cite{panis2014overview} & Linear model \\
					BirdSong      & 4998        & 38         & 13       & 2.18        & bird song classification \cite{briggs2012rank} & Linear model \\
					MSRCv2        & 1758        & 48         & 23       & 3.16        & object classification \cite{liu2012conditional} & Linear model \\
					Soccer Player & 17472       & 279        & 171      & 2.09        & automatic face naming \cite{zeng13learning} & Linear model \\
					Yahoo! News        & 22991       & 163        & 219      & 1.91        & automatic face naming \cite{guillaumin2010multiple} & Linear model \\
					\bottomrule
			\end{tabular}}
		\end{small}
	\end{center}
	\vskip -0.1in
\end{table*}

\section{Characteristic of the Real-world Datasets and Setup}

Tabel~\ref{tab:realworld} summarizes the characteristic of the real-world datasets and the corresponding models. The preprocessing, model and optimizer were the same as UCI datasets.

\end{document}